\newcommand{\removed}[1]{} 
\newcommand{\bb}{\mathbb}
\newcommand{\R}{\bb R}
\theoremstyle{definition}
\newtheorem{theorem}{Theorem}[section]
\newtheorem{lemma}[theorem]{Lemma}
\def\ve#1{\mathchoice{\mbox{\boldmath$\displaystyle\bf#1$}}
{\mbox{\boldmath$\textstyle\bf#1$}}
{\mbox{\boldmath$\scriptstyle\bf#1$}}
{\mbox{\boldmath$\scriptscriptstyle\bf#1$}}}
\newcommand{\x}{{\ve x}}
\newcommand{\y}{{\ve y}}
\newif\ifsolutions \solutionstrue
\title{Sparse coding and autoencoders}
\author[1]{Akshay Rangamani{\bf \thanks{Equal Contribution}} \thanks{\url{rangamani.akshay@jhu.edu}}}
\author[2]{Anirbit Mukherjee${\bf ^*}$\thanks{\url{amukhe14@jhu.edu}}}
\author[2]{\protect \\Amitabh Basu \thanks{\url{basu.amitabh@jhu.edu}}}
\author[4]{Tejaswini Ganapathy \thanks{\url{tganapathi@salesforce.com}}}
\author[1]{\protect \\Ashish Arora \thanks{\url{aarora8@jhu.edu}}}
\author[1,3]{Sang (Peter) Chin \thanks{\url{spchin@cs.bu.edu}}}
\author[1]{Trac D. Tran \thanks{\url{trac@jhu.edu}}}
\affil[1]{ECE Department\\
Johns Hopkins University}
\affil[2]{AMS Department\\
Johns Hopkins University}
\affil[3]{Department of Computer Science\\ Boston University}
\affil[4]{Salesforce, San Francisco Bay Area}
\date{}
\begin{document}
\maketitle

\begin{abstract}
~\\
In \emph{Dictionary Learning} one tries to recover incoherent matrices $A^* \in \mathbb{R}^{n \times h}$ (typically overcomplete and whose columns are assumed to be normalized) and sparse vectors $x^* \in \mathbb{R}^h$ with a small support of size $h^p$ for some $0 <p < 1$ while having access to observations $y \in \mathbb{R}^n$ where $y = A^*x^*$. In this work we undertake a rigorous analysis of whether gradient descent on the squared loss of an autoencoder can solve the dictionary learning problem. The \emph{Autoencoder} architecture we consider is a $\mathbb{R}^n \rightarrow \mathbb{R}^n$ mapping with a single ReLU activation layer of size $h$. 
~\\ \\
Under very mild distributional assumptions on $x^*$, we prove that the norm of the expected gradient of the standard squared loss function is asymptotically (in sparse code dimension) negligible for {\em all} points in a small neighborhood of $A^*$. This is supported with experimental evidence using synthetic data. We also conduct experiments to suggest that $A^*$ is a local minimum. Along the way we prove that a layer of ReLU gates can be set up to automatically recover the support of the sparse codes. This property holds independent of the loss function. We believe that it could be of independent interest.
\end{abstract}

\section{Introduction}
One of the fundamental themes in learning theory is to consider data being sampled from a generative model and to provide efficient methods to recover the original model parameters exactly or with tight approximation guarantees. Classic examples include learning a mixture of gaussians \cite{moitra2010settling}, certain graphical models \cite{anandkumar2014tensor}, full rank square dictionaries \cite{spielman2012exact,blasiok2016improved} and overcomplete dictionaries \cite{agarwal2014learning,arora2014more,arora2015simple, arora2014new} The problem is usually distilled down to a non-convex optimization problem whose solution can be used to obtain the model parameters. With these hard non-convex problems it has been difficult to find any universal view as to why sometimes gradient descent gives very good and sometimes even exact recovery. In recent times progress has been made towards achieving a geometric understanding of the landscape of such non-convex optimization problems \cite{ge2017no}, \cite{mei2016landscape}, \cite{wu2017towards}. The corresponding question of parameter recovery for neural nets with one layer of activation has been solved in some special cases, \cite{du2017convolutional, allen2017natasha2, janzamin2015beating,sedghi2014provable, li2017convergence, tian2017analytical,zhang2017electron}. Almost all of these cases are in the supervised setting where it has also been assumed that the labels are being generated from a net of the same architecture as is being trained. In contrast to these works we address an unsupervised learning problem, and possibly more realistically, we do not tie the data generation model (sensing of sparse vectors by an overcomplete incoherent dictionary) to the neural architecture being analyzed except for assuming knowledge of a few parameters about the ground truth. In a related development, it has been shown by two of the authors here in a previous work~\cite{arora2016understanding}, that for two layer deep nets even the exact global minima can be found deterministically in time polynomial in the data size. This work continues that line of investigation to now make use of generative model assumptions to probe the power of a class of two layer deep nets with ReLU activation. 
~\\ \\ 
Here we specialize to the generative model of  {\it dictionary learning/sparse coding} where one receives samples of vectors $y \in \mathbb{R}^n$ that have been generated as $y = A^*x^*$ where $A^* \in \mathbb{R}^{n \times h}$ and $x^* \in \mathbb{R}^h$. We typically assume that the number of non-zero entries in $x^*$ to be no larger than some function of the dimension $h$ and that $A^*$ satisfies certain incoherence properties. The question now is to recover $A^*$ from samples of $y$. There have been renewed investigations into the hardness of this problem \cite{tillmann2015computational} and many former results have recently been reviewed in these lectures \cite{GilbertLectures}. This question has been a cornerstone of learning theory ever since the ground-breaking paper by Olshausen and Field (\cite{olshausen1997sparse}) (a recent review by the same authors can be found in \cite{olshausen2005close}). Over the years many algorithms have been developed to solve this problem and a detailed comparison among these various approaches can be found in \cite{blasiok2016improved}.
~\\ \\
An {\em autoencoder} is a neural network that maps $\mathbb{R}^n \rightarrow \mathbb{R}^n$ with a single hidden layer of Rectified Linear Unit (ReLU) activations. These networks have been used extensively (\cite{baldi2012autoencoders,bengio2013generalized, rifai2011contractive, vincent2008extracting,   vincent2010stacked}) in the past for unsupervised feature learning tasks, and have been found to be successful in generating discriminative features \cite{coates2011analysis}. A number of different autoencoder architectures and regularizers have been proposed which purportedly induce sparsity, at the hidden layer \cite{arpit2016regularized,coates2011importance,li2016sparseness, ng2011sparse}. There has also been some investigation into what autoencoders learn about the data distribution \cite{alain2014regularized}.
~\\ \\
Olshausen and Field had, as early as $1996$, already made the connection between sparse coding and training neural architectures and in today's terminology this problem is very naturally reminiscent of the architecture of an autoencoder \cite{olshausen1996emergence}. However, to the best of our knowledge, there has not been sufficient progress to rigorously establish whether autoencoders can do sparse coding. 
In this work, we present our progress towards bridging the above mentioned mathematical gap. To the best of our knowledge, there is no theoretical evidence (even under the usual generative assumptions of sparse coding) that the stationary points of any of the usual squared loss functions (with or without any of the usual regularizers) have any resemblance to the original dictionary that is being sought to be learned. {\bf The main point of this paper is to rigorously prove that for autoencoders with ReLU activation, the standard squared loss function has a neighborhood around the dictionary $A^*$ where the norm of the expected gradient is very small (for large enough sparse code dimension $h$). Thus, all points in a neighborhood of $A^*$, including $A^*$, are all asymptotic critical points of this standard squared loss.} 
We supplement our theoretical result with experimental evidence for it in Section~\ref{sec:experiments}, which also strongly suggests that the standard squared loss function has a local minimum in a neighborhood around $A^*$. We believe that our results provide theoretical and experimental evidence that the sparse coding problem can be tackled by training autoencoders.

\newpage 
\subsection {A motivating experiment on MNIST using TensorFlow}

We used TensorFlow \cite{abadi2016tensorflow} to train two ReLU autoencoders mapping $\R^{784} \rightarrow \R^{784}$. These networks were trained on a subset of the MNIST dataset of handwritten digits. One of the nets had a single hidden layer of size $10000$ and the other one had two hidden layers of size $5000$ and $784$ (and a fixed identity matrix giving the output from the second layer of activations). In both the cases the weights of the encoder and decoder were maintained as transposes of each other. We trained the autoencoders on the standard squared loss function using RMSProp \cite{tieleman2015rmsprop}. The training was done on $6000$ images of the digits $6$ and $7$ from the MNIST dataset. In the following panel we show four pairs (two for each net) of ``reconstructed" image i.e output of the trained net when its given as input the ``actual" photograph as input. 

\begin{figure}[h] 
\centering
\includegraphics[scale=0.30]{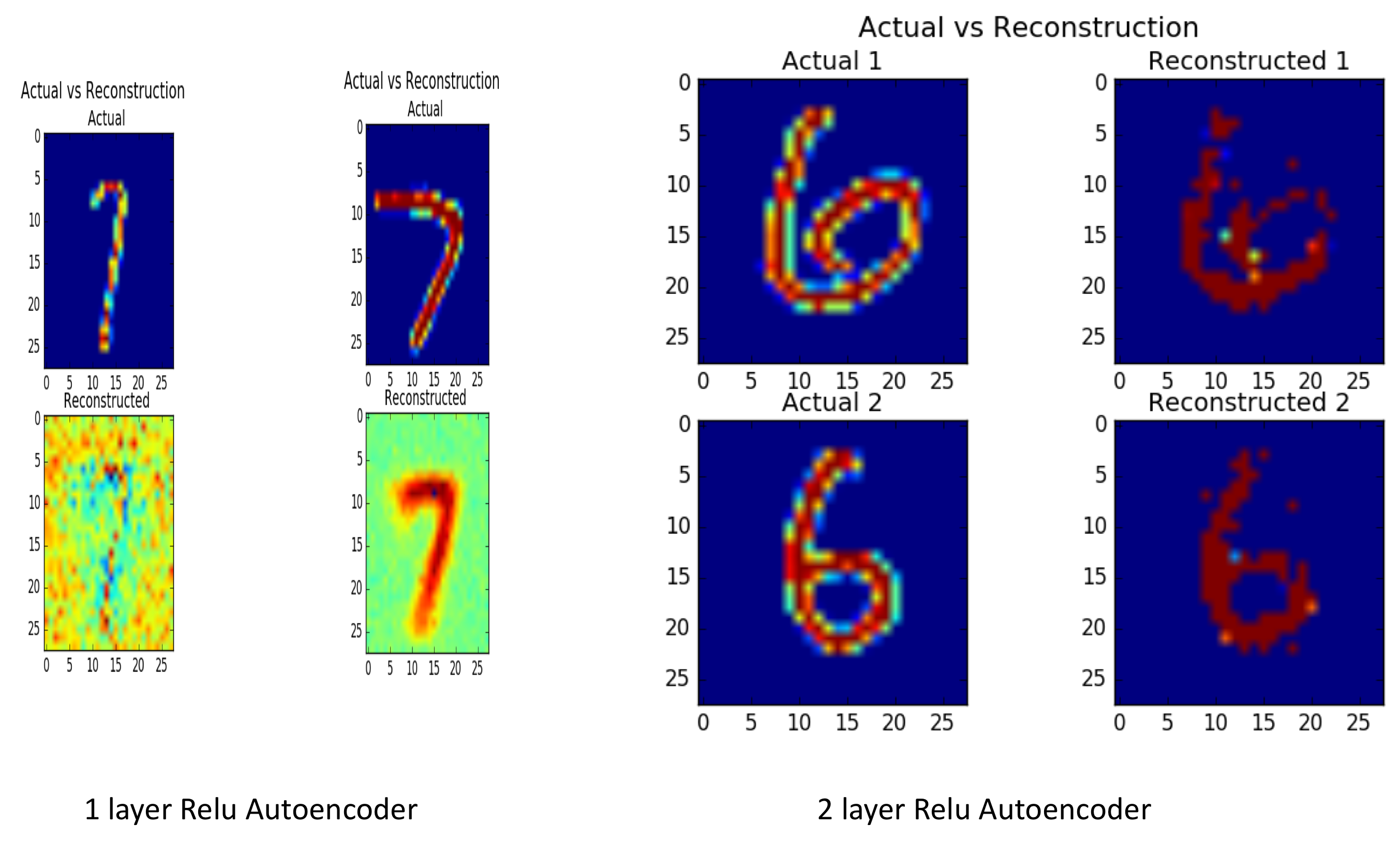}
\end{figure}
~\\
In our opinion, the above figures add support to the belief that a single and a double layer ReLU activated $\R^n \rightarrow \R^n$ network can learn an implicit high dimensional structure about the handwritten digits dataset. In particular this demonstrates that though adding more hidden layers obviously helps enhance the reconstruction ability, the single hidden layer autoencoder do hold within them significant power for unsupervised learning of representations. Unfortunately analyzing the RMSProp update rule used in the above experiment is currently beyond our analytic means. However, we take inspiration from these experiments to devise a different mathematical set-up which is much more amenable to analysis taking us towards a better understanding of the power of autoencoders.

\newpage 
\section{Introducing the neural architecture and the distributional assumptions}\label{sec:defn}

For any $n,h \in \{1,2,..\}$, an autoencoder is a fully connected $\mathbb{R}^n \rightarrow \mathbb{R}^n$ neural network with a single hidden layer of $h$ activations. We focus on networks that use the Rectified Linear Unit (ReLU) activation which is the function $\textrm{ReLU} : \mathbb{R}^h \rightarrow \mathbb{R}^h$ mapping $\vec{x} \rightarrow (\max\{0,x_i\})_{i=1}^{h}$. In this case, the autoencoder can be seen as computing the following function $\hat{y}(W,y,\epsilon)$ as follows,
\begin{align}
\label{eqn:autoencoder}
\nonumber r &= \textrm{ReLU}\left( Wy - \epsilon \right) \\
\hat{y} &= W^\top r
\end{align}
Here $y \in \mathbb{R}^n$ is the input to the autoencoder, $W \in \mathbb{R}^{h \times n}$ is the linear transformation implemented by the first layer, $r \in \mathbb{R}^h$ is the output of the layer of activations, $\epsilon \in \mathbb{R}^{h}$ is the bias vector and $\hat{y} \in \mathbb{R}^n$ is the output of the autoencoder. Note that we impose the condition that the second layer of weights is simply the transpose of the first layer. We shall define the columns of $W^\top$ (rows of $W$) as $\{W_i\}_{i=1}^h$. 

\paragraph{Assumptions on the dictionary and the sparse code.}\label{assumptions} We assume that our signal $y$ is generated using sparse linear combinations of atoms/vectors of an overcomplete dictionary, i.e., $y = A^* x^*$, where $A^* \in \mathbb{R}^{n \times h}$ is a dictionary, and $x^* \in (\mathbb{R}^{\geq 0})^h$ is a non-negative sparse vector, with at most $k=h^p$ (for some $0 < p < 1$) non zero elements. The columns of the original dictionary $A^*$ (labeled as $\{ A^*_i \}_{i=1}^{h}$) are assumed to be normalized and also satisfy the incoherence property that  $\max_{\substack{i,j=1,..,h \\ i \neq j}} \vert \langle A^*_i, A^*_j \rangle \vert \leq \frac{\mu}{\sqrt{n}} = h^{-\xi}$ for some $\xi >0$.  
\newline \newline 
We assume that the sparse code $x^*$ is sampled from a distribution with the following properties. We fix a set of possible supports of $x^*$, denoted by $\mathbb{S} \subseteq 2^{[h]}$, where each element of $\mathbb{S}$ has at most $k = h^p$ elements. We consider any arbitrary discrete probability distribution $D_{\mathbb{S}}$ on $\mathbb{S}$ such that the probability $q_1 := \mathbb{P}_{S \sim \mathbb{S}}[i \in S]$ is independent of $i \in [h]$, and the probability $q_{2}  := \mathbb{P}_{S \in \mathbb{S}}[i,j \in S]$ is independent of $i,j \in [h]$. A special case is when $\mathbb{S}$ is the set of all subsets of size $k$, and $D_{\mathbb{S}}$ is the uniform distribution on $\mathbb{S}$. For every $S \in \mathbb{S}$ there is a distribution say $D_S$ on $(\mathbb{R}^{\geq 0})^{h}$ which is supported on vectors whose support is contained in $S$ and which is uncorrelated for pairs of coordinates $i,j \in S$. Further, we assume that the distributions $D_S$ are such that each coordinate $i$ is compactly supported over an interval $[a(h),b(h)]$, where $a(h)$ and $b(h)$ are independent of both $i$ and $S$ but will be functions of $h$. Moreover, $m_1(h) := \mathbb{E}_{x^*\sim D_S}[x_i^*]$, and $m_2(h) :=\mathbb{E}_{x^*\sim D_S}[x_i^{*2}]$ are assumed to be independent of both $i$ and $S$ but allowed to depend on $h$. For ease of notation henceforth we will keep the $h$ dependence of these variables implicit and refer to them as $a, b, m_1$ and $m_2$. All of our results will hold in the special case when $a,b, m_1, m_2$ are constants (no dependence on $h$).  

\newpage 
\section{Main Results}
 
\subsection{Recovery of the support of the sparse code by a layer of ReLUs}\label{sec:results-support}

First we prove the following theorem which precisely quantifies the sense in which a layer of ReLU gates is able to recover the support of the sparse code when the weight matrix of the deep net is close to the original dictionary. We recall that the size of the support of the sparse vector $x^*$ is $k=h^p$ for some $0 < p < 1$. We also recall the parameters $a,b$ as defining the support of the marginal distribution of each coordinate of $x^*$ and $m_1$ is the expected value of this marginal distribution (recall that none of these depend on the coordinate or the actual support). These parameters will be referenced in the results below.

\begin{theorem}\label{sec:theorem:support}
 
Let each column of $W^\top$ be within a $\delta$-ball of the corresponding column of $A^*$, where $\delta = O \left( h^{-p - \nu^2} \right)$ for some $\nu>0$, such that $p + \nu^2 < \xi$ (where $h^{-\xi}$ is the coherence parameter). We further assume that $a = \Omega \left( b h^{ - \nu^2} \right)$. Let the bias of the hidden layer of the autoencoder, as defined in \eqref{eqn:autoencoder} be $\epsilon = 2 m_1 k \left( \delta + \frac{\mu}{\sqrt{n}} \right)$. Let $r$ be the vector defined in~\eqref{eqn:autoencoder}. Then $r_i \neq 0$ if $i \in \textrm{supp}(x^*)$, and $r_i =0$ if $i \notin \textrm{supp}(x^*)$ with probability at least $1 - \exp \left ( -\frac{2 h^p m_1^2}{(b-a)^2} \right )$ (with respect to the distribution on $x^*$). \end{theorem}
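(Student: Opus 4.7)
The plan is to show that for the prescribed bias $\epsilon = 2 m_1 k(\delta + \mu/\sqrt n)$, the $i$-th pre-activation $z_i := \langle W_i, y\rangle - \epsilon$ has the correct sign on a high-probability event, so that the ReLU zeroes out precisely the off-support coordinates. Expanding
\[
\langle W_i, y\rangle \;=\; \sum_{j \in \text{supp}(x^*)} \langle W_i, A^*_j\rangle\, x^*_j
\]
and writing $W_i = A^*_i + (W_i - A^*_i)$, two deterministic bounds drive everything: for $j \neq i$, the incoherence $|\langle A^*_i, A^*_j\rangle| \leq \mu/\sqrt n$ together with $\|W_i - A^*_i\| \leq \delta$ and $\|A^*_j\| = 1$ gives $|\langle W_i, A^*_j\rangle| \leq \delta + \mu/\sqrt n$; and for the diagonal, $\langle W_i, A^*_i\rangle = \|A^*_i\|^2 + \langle W_i - A^*_i, A^*_i\rangle \geq 1 - \delta$.

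Next I would do the case split. If $i \notin \text{supp}(x^*)$, every term of the expansion is off-diagonal, so $|\langle W_i, y\rangle| \leq (\delta + \mu/\sqrt n)\, \Sigma$, where $\Sigma := \sum_{j \in \text{supp}(x^*)} x^*_j$. If $i \in \text{supp}(x^*)$, the diagonal term is at least $(1-\delta)\, x^*_i \geq (1-\delta) a$, while the cross contributions are again controlled by $(\delta + \mu/\sqrt n)\, \Sigma$. In either case the randomness is funneled into the single scalar $\Sigma$.

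The probabilistic step is a Hoeffding bound applied to the at most $k = h^p$ active coordinates, each supported in $[a,b]$ with common mean $m_1$. Since the mean of $\Sigma$ is at most $m_1 k$, taking the deviation $t = m_1 k$ above the mean yields
\[
\mathbb{P}\bigl[\Sigma > 2 m_1 k\bigr] \;\leq\; \exp\!\left(-\frac{2 h^p m_1^2}{(b-a)^2}\right),
\]
which matches the failure probability in the statement. On the complementary event, the choice $\epsilon = 2 m_1 k(\delta + \mu/\sqrt n)$ immediately makes $z_i \leq 0$ for $i \notin \text{supp}(x^*)$, and gives $z_i \geq (1-\delta) a - 2\epsilon$ for $i \in \text{supp}(x^*)$.

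The final check — and the point I expect to require the most care — is verifying $(1-\delta) a > 2\epsilon$ using the interlocking asymptotic hypotheses. With $m_1 \leq b$, $k = h^p$, $\delta = O(h^{-p-\nu^2})$, and $\mu/\sqrt n = h^{-\xi}$ with $\xi > p + \nu^2$, one obtains $\epsilon = O(b h^{-\nu^2})$, and the assumption $a = \Omega(b h^{-\nu^2})$ closes the gap for large $h$, so $r_i > 0$ strictly on the support. The Hoeffding step itself is routine once one reads the "uncorrelated" assumption on $D_S$ in the stronger sense of independence among nonzero coordinates (which is what the exponential-form bound in the conclusion requires); the genuine bookkeeping challenge is keeping the parameter exponents $(\delta, p, \nu, \xi, a, b)$ aligned so that the final strict inequality really holds with the stated constants.
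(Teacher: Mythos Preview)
Your proposal is correct and follows essentially the same line as the paper's proof: bound the inner products $\langle W_i, A_j^*\rangle$ via incoherence and the $\delta$-closeness hypothesis, then control the off-support pre-activations by Hoeffding-type concentration on the nonzero coordinates of $x^*$. The only cosmetic differences are that the paper handles the in-support case deterministically using the crude bound $x_j^* \leq b$ (rather than via the same event $\{\Sigma \leq 2m_1 k\}$ you use) and writes out the Chernoff/Hoeffding's-Lemma MGF computation explicitly instead of citing Hoeffding's inequality directly; both routes yield the same exponent and the same parameter-balancing check at the end.
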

~\\ As long as $\frac{h^p m_1^2}{(b-a)^2}$ is large, i.e., an increasing function of $h$, we can interpret this as saying that the probability of the adverse event is small, and we have successfully achieved support recovery at the hidden layer in the limit of large sparse code dimension.

\subsection{Asymptotic Criticality of the Autoencoder around $A^*$}\label{sec:result-loss}

In this work we analyze the following standard squared loss function for the autoencoder, 
\begin{align}\label{eqn:loss}
L = \frac{1}{2} \vert \vert \hat{y} - y \vert \vert^2
\end{align}
In the above we continue to use the variables as defined in equation \ref{eqn:autoencoder}. If we consider a generative model in which $A^*$ is a square, orthogonal matrix and $x^*$ is a non-negative vector (not necessarily sparse), it is easily seen that the standard squared reconstruction error loss function for the autoencorder has a global minimum at $W = A^{*\top}$. In our generative model, however, $A^*$ is an incoherent and overcomplete dictionary.

\begin{theorem}\label{sec:theorem:critical} ({\bf The Main Theorem})
Assume that the hypotheses of Theorem \ref{theorem:support} hold, and $p < \min \{ \frac{1}{2}, \nu^2\}$ (and hence $\xi > 2p$). Further, assume the distribution parameters satisfy $\textrm{exp} \left(\frac{h^p m_1^2}{2(b-a)^2}\right)$ is superpolynomial in $h$ (which holds, for example, when $m_1, a, b$ are $O(1)$). Then for $i=1, \ldots, h$, 
 $$
 \bigg\Vert \mathbb{E}\left [ \frac {\partial L}{\partial W_i}\right ] \bigg\Vert_2 \leq o\bigg(\frac{\max\{m_1^2, m_2\}}{h^{1-p}}\bigg). 
 $$
\end{theorem}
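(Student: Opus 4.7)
The plan is to differentiate $L$ explicitly, invoke Theorem~\ref{sec:theorem:support} to identify which ReLU gates are active, and then take expectation of each resulting term against the joint distribution on the support $S$ and the values of $x^*$. A direct chain-rule computation, using that $r_j$ depends only on $W_j$ and that the ReLU derivative at a positive argument is $y$, gives
\[
\frac{\partial L}{\partial W_i} \;=\; r_i\,(\hat y - y)\;+\;\mathbf{1}[W_i\cdot y > \epsilon]\,\bigl(W_i\cdot(\hat y-y)\bigr)\,y.
\]
Let $E$ denote the event of Theorem~\ref{sec:theorem:support}, so $\mathbb P(E)\ge 1 - \exp(-2h^p m_1^2/(b-a)^2)$ and on $E$ the active-set indicator equals $\mathbf 1[i\in S]$ while $r_j = W_j\cdot y - \epsilon$ for $j\in S$ and $r_j=0$ otherwise. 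Because each $x^*_j\in[a,b]$ and $\|W_j\|=O(1)$, one has the crude pointwise bound $\|\partial L/\partial W_i\|\le\textrm{poly}(h)$, so by the superpolynomial decay of $\mathbb{P}(E^c)$ hypothesized in the theorem, $\mathbb{E}[\mathbf{1}_{E^c}\,\partial L/\partial W_i]$ is already subsumed in the target $o(\max\{m_1^2,m_2\}/h^{1-p})$.

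It therefore suffices to estimate $\mathbb{E}[\mathbf{1}_E\,\partial L/\partial W_i]$, which on $\{i\notin S\}$ vanishes. On $\{i\in S\}\cap E$, I would write $W_j=A^*_j+z_j$ with $\|z_j\|\le\delta$, substitute $y=\sum_{\ell\in S}A^*_\ell x^*_\ell$, and expand
\[
r_j \;=\;(W_j\!\cdot\! A^*_j)\,x^*_j - \epsilon + \sum_{\ell\in S,\,\ell\ne j}(W_j\!\cdot\! A^*_\ell)\,x^*_\ell,\qquad \hat y - y = \sum_{j\in S}W_j r_j - \sum_{\ell\in S}A^*_\ell x^*_\ell.
\]
Substituting into the two summands of $\partial L/\partial W_i$ produces a polynomial in $x^*$ whose monomials are $x^*_i x^*_j x^*_\ell$, $x^*_i x^*_j$, $x^*_i$, and constants, with vector coefficients built from inner products of the form $W_i\!\cdot\!W_j$, $W_j\!\cdot\!A^*_\ell$, and $A^*_j\!\cdot\!A^*_\ell$.

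Taking expectation, I will use the moment identities $\mathbb{E}[x^*_j]=m_1$, $\mathbb{E}[(x^*_j)^2]=m_2$, and $\mathbb{E}[x^*_j x^*_\ell]=m_1^2$ for $j\ne\ell$ (uncorrelated coordinates), together with the set-distribution identities $\mathbb{E}[\mathbf{1}_{j\in S}]=q_1$ and $\mathbb{E}[\mathbf{1}_{j,\ell\in S}]=q_2$ for $j\ne\ell$. Off-diagonal inner products satisfy $|W_j\!\cdot\!A^*_\ell|,|A^*_j\!\cdot\!A^*_\ell|,|W_i\!\cdot\!W_j| = O(\delta + \mu/\sqrt n) = O(h^{-p-\nu^2})$ by incoherence, closeness of $W$ to $A^*$, and the assumption $p+\nu^2<\xi$, while diagonal inner products equal $1$ or $1-O(\delta^2)$. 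The crucial cancellation is that the bias $\epsilon = 2m_1 k(\delta+\mu/\sqrt n)$ is tuned so as to match, up to a constant, the conditional expectation of the cross-term $\sum_{\ell\in S,\ell\ne j}(W_j\!\cdot\! A^*_\ell)\,x^*_\ell$ given $i,j\in S$; after expectation the would-be dominant pieces collapse and every surviving term carries at least one factor of $\delta+\mu/\sqrt n$ in addition to a factor of $q_1$ or $q_2$ from the support indicators. Collecting exponents with $k=h^p$, $q_1=\Theta(h^{p-1})$, $q_2=\Theta(h^{2(p-1)})$, and $p<\min\{1/2,\nu^2\}$ then yields the claimed $o(\max\{m_1^2,m_2\}/h^{1-p})$ bound.

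The principal obstacle I foresee is the bookkeeping of the $\Theta(k^3)$ monomials arising from $r_i(\hat y-y)+(W_i\!\cdot\!(\hat y-y))\,y$: for each, one must track the number of off-diagonal inner-product factors (each contributing $h^{-p-\nu^2}$), the number of free summation indices (each contributing a $q_1$ or $q_2$), and the moment of $x^*$ that multiplies it, and then verify that the worst-case exponent is strictly smaller than $p-1$. The delicate point is that the hypothesis $p<\nu^2$, which provides an extra $h^{-\nu^2}$ margin beyond the naive balance between $k\cdot \delta$ and $1$, is exactly what forces the remainder to be $o(\cdot)$ rather than $\Theta(\cdot)$ of the target scale; losing this margin in any branch of the accounting would break the theorem.
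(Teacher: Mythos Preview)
Your overall strategy---replace the ReLU/threshold indicators by $\mathbf 1_{i\in S}$, bound the off-event $E^c$ contribution via the superpolynomial tail, then expand and take expectations---is exactly the two-lemma scaffolding the paper uses (their Lemma~\ref{good_proxy} and Lemma~\ref{bounds}). So the architecture of your argument is fine.

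The gap is in the bookkeeping paragraph. You claim that after expectation ``every surviving term carries at least one factor of $\delta+\mu/\sqrt n$,'' and you attribute this to the tuning of the bias $\epsilon$. Both assertions are wrong. The $\epsilon$-dependent pieces are already individually $o(m_1^2 h^{p-1})$ (since $\epsilon=O(m_1 h^{-\nu^2})$); no cancellation is needed there. The obstruction is in the $m_2$-terms, which do not involve $\epsilon$ at all: for instance, from $r_i(\hat y-y)$ one extracts $q_1 m_2\langle W_i,A_i^*\rangle^2\,W_i$, and from $(W_i\!\cdot\!(\hat y-y))y$ one extracts $\bigl(2q_1 m_2\langle W_i,A_i^*\rangle - q_1 m_2\|W_i\|^2\langle W_i,A_i^*\rangle\bigr)A_i^*$. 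Each of these is $\Theta(m_2 h^{p-1})$ in norm with \emph{no} small prefactor. What kills them is structural: the paper groups the proxy gradient as $\alpha_i W_i-\beta_i A_i^*+e_i$, verifies that the leading $m_2 h^{p-1}$ coefficient in $\alpha_i$ equals that in $\beta_i$, and then uses the rewriting $\alpha_i W_i-\beta_i A_i^*=\alpha_i(W_i-A_i^*)+(\alpha_i-\beta_i)A_i^*$ together with $\|W_i-A_i^*\|\le\delta$. Term-by-term exponent counting, as you propose, will not detect this---you will be stuck with several $\Theta(m_2 h^{p-1})$ pieces and no mechanism to combine them.

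There is a second, smaller issue. Even after the $\alpha_i,\beta_i$ cancellation, the error vector $e_i$ contains pieces such as $m_1^2\sum_{j\ne i}q_{ij}\langle W_i,A_i^*\rangle\langle W_j,A_j^*\rangle W_j$ (essentially $q_2\sum_{j\ne i}W_j$) that carry \emph{no} off-diagonal inner-product factor. These cannot be bounded by factor-counting; the paper handles them with a spectral-norm argument (Gershgorin on $W_{-j}^\top W_{-j}$ or $A^{*\top}_{-i}A^*_{-i}$), which is precisely where the hypotheses $p<\tfrac12$ and $2p<\xi$ enter. Your plan as written does not account for this, and without it those terms sit at $\Theta(m_1^2 h^{2p-1})$, not obviously below the target.
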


\paragraph{Roadmap.} We present the proof of the support recovery result, i.e., Theorem~\ref{sec:theorem:support}, in Section~\ref{proof:support}. Section~\ref{proof:asymptotic} gives the proof of our main result, Theorem~\ref{sec:theorem:critical}. The argument rests on two critical lemmas (Lemmas~\ref{good_proxy} and~\ref{bounds}), whose proofs appear in the Supplementary material. In Section~\ref{sec:experiments}, we run simulations to verify Theorem~\ref{sec:theorem:critical}. We also run experiments that strongly suggest that the standard squared loss function has a local minimum in a neighborhood around $A^*$.

\newpage 
\section{A Layer of ReLU Gates can Recover the Support of the Sparse Code (Proof of Theorem \ref{sec:theorem:support})}\label{proof:support}

Most sparse coding algorithms are based on an alternating minimization approach, where one iteratively finds a sparse code based on the current estimate of the dictionary, and then uses the estimated sparse code to update the dictionary. The analogue of the sparse coding step in an autoencoder, is the passing through the hidden layer of activations of a certain affine transformation ($W$ which behaves as the current estimate of the dictionary) of the input vectors. We show that under certain stochastic assumptions, the hidden layer of ReLU gates in an autoencoder recovers with high probability the support of the sparse vector which corresponds to the present input.

\begin{proof}[Proof of Theorem~\ref{sec:theorem:support}]
From the model assumptions, we know that the dictionary $A^*$ is incoherent, and has unit norm columns. So, $ \vert \langle A_i^* , A_j^* \rangle \vert \leq \frac{\mu}{\sqrt{n}}$ for all $i \neq j$, and $||A^*_i||=1$ for all $i$. This means that for $i \neq j$,

 \begin{align}\label{eq:bound-inner-prod-WiAj} 
 \nonumber \vert \langle W_i , A_j^*\rangle \vert  &= \vert \langle W_i - A_i^*, A_j^*\rangle  \vert + \vert \langle A_i^* , A_j^* \rangle \vert \\
 &\leq || W_i - A_i^* ||_2 ||A_j^*||_2 + \frac{\mu}{\sqrt{n}} \leq  (\delta + \frac{\mu}{\sqrt{n}})
 \end{align}

~\\
Otherwise for $i = j$, 
\[\langle W_i , A_i^*\rangle   =  \langle W_i - A_i^*, A_i^*\rangle  +  \langle A_i^* , A_i^* \rangle  = \langle W_i - A_i^*, A_i^*\rangle + 1, \] 
and thus,
\begin{equation}\label{eq:bound-inner-prod-WiAi} 1 - \delta \leq \langle W_i , A_i^*\rangle \leq 1 + \delta, \end{equation}
where we use the fact that $\vert \langle W_i - A_i^*, A_i^*\rangle \vert \leq \delta$.

Let $y=A^*\x^*$ and let $S$ be the support of $\x^*$. Then we define the input to the ReLU activation $Q - \epsilon = W\y - \epsilon$ as
\begin{dmath*}
Q_i = \sum_{j \in S} \langle W_i, A^*_j \rangle x^*_j = \langle W_i, A^*_i \rangle x^*_i \mathfrak{1}_{i\in S}+ \sum_{j \in S \setminus i} \langle W_i, A^*_j \rangle x^*_j = \langle W_i, A^*_i \rangle x^*_i\mathfrak{1}_{i\in S} + Z_i.
\end{dmath*}
~\\
First we try to get bounds on $Q_i$ when $i \in \textrm{supp}(x^*)$. From our assumptions on the distribution of $x^*_i$ we have, $ 0 \leq a \leq x_i^* \leq b$ and $\mathbb{E}[x^*_i] = m_1$ for all $i$ in the support of $x^*$. For $i \in \textrm{supp}(x^*)$, 
\begin{align*}
Q_i &= \langle W_i, A^*_i \rangle x^*_i + Z_i\\
\implies Q_i &\geq (1-\delta)a + Z_i
\end{align*} 
where we use~\eqref{eq:bound-inner-prod-WiAi}. Using~\eqref{eq:bound-inner-prod-WiAj}, $Z_i$ has the following bounds:
\[ -b k \left( \delta + \frac{\mu}{\sqrt{n}} \right) \leq Z_i \leq b k \left( \delta + \frac{\mu}{\sqrt{n}} \right) \]
Plugging in the lower bound for $Z_i$ and the proposed value for the bias, we get
\begin{align*}
Q_i - \epsilon &\geq (1-\delta) a - bk \left( \delta + \frac{\mu}{\sqrt{n}} \right) - 2 m_1 k \left( \delta + \frac{\mu}{\sqrt{n}} \right)
\end{align*}
~\\
For $Q_i - \epsilon \geq 0$, we need:
\[ a \geq \frac{(b+2 m_1) \left( \delta + \frac{\mu}{\sqrt{n}} \right) k}{1- \delta}\]
Now plugging in the values for the various quantities, $\frac{\mu}{\sqrt{n}} = h^{- \xi}$ and $k = h^p$ and $\delta = O \left( h^{-p -\nu^2} \right)$, if we have $a = \Omega \left( b h^{-\nu^2} \right)$, then $Q_i - \epsilon \geq 0$. 

~\\
Now, for $i \notin \textrm{supp}(x^*)$ we would like to analyze the following probability:
\begin{equation*}
\textrm{Pr}[ Q_i -\epsilon \geq 0 \vert i \notin \textrm{supp}(x^*)]
\end{equation*}
We first simplify the quantity $\textrm{Pr}[ Q_i -\epsilon \geq 0 \vert i \notin \textrm{supp}(x^*)]$ as follows

\begin{dmath*}
\textrm{Pr}[ Q_i \geq \epsilon \vert i \notin \textrm{supp}(x^*) ] = \textrm{Pr} [ Z_i \geq \epsilon] \\
= \textrm{Pr} \left[ \sum_{j \in S\setminus i} \langle W_i, A_j^* \rangle x_j^* \geq \epsilon \right] 
\end{dmath*}
~\\
Using the Chernoff's bound, we can obtain
\begin{align*}
\textrm{Pr} [ Z_i \geq \epsilon] & \leq \underset{t \geq 0}{\textrm{inf}} e^{ -t\epsilon}\mathbb{E} \left[ \prod_{j \in S \setminus i} \left[ e^{ t \langle W_i, A_j^* \rangle x_j^*} \right] \right] \\ 
&= \underset{t \geq 0}{\textrm{inf}} e^{-t\epsilon} \prod_{j \in S \setminus i} \mathbb{E} \left[ e^{  t\langle W_i, A_j^* \rangle x_j^*} \right]  \\ 
&\leq  \underset{t \geq 0}{\textrm{inf}} e^{-t\epsilon } \mathbb{E}^k \left[ e^{  t \left(\delta + \frac{\mu}{\sqrt{n}} \right)  x^*_j} \right] \\
&\leq \underset{t \geq 0}{\textrm{inf}} e^{-t\epsilon} \left( e^{t \left( \delta + \frac{\mu}{\sqrt{n}} \right) m_1 }e^{  \frac{ t^2 \left(\delta + \frac{\mu}{\sqrt{n}} \right)^2 (b-a)^2 }{8}} \right)^k
\end{align*}
~\\
where the second inequality follows from ~\eqref{eq:bound-inner-prod-WiAj} and the fact that $t$ and $x^*_i$ are both nonnegative, and the third inequality follows from Hoeffding's Lemma. 
Next, we also have 
\begin{align*}
\textrm{Pr} [ Z_i \geq \epsilon]  &\leq \underset{t \geq 0}{\textrm{inf}} e^ { - t \left( \epsilon - k \left( \delta + \frac{\mu}{\sqrt{n}} \right) m_1 \right)  + t^2 \frac{k}{8} \left(\delta + \frac{\mu}{\sqrt{n}} \right)^2 (b-a)^2 } \\
&= e^{ - \frac{ (\epsilon -k(\delta + \frac{\mu}{\sqrt{n}}) m_1 )^2 }{ \frac{k }{2} (\delta + \frac{\mu}{\sqrt{n}})^2 (b-a)^2}}.
\end{align*}
~\\ Finally, since $k = h^p$ and $\epsilon = 2 m_1 k \left( \delta + \frac{\mu}{\sqrt{n}} \right)$, we have 
\begin{align*}
\exp \left( - \frac{2(\epsilon - k m_1 (\delta + \frac{ \mu}{\sqrt{n} } ) )^2 }{h^p (\delta + \frac{\mu}{\sqrt{n}} )^2(b -a )^2 } \right) = \exp \left( - \frac{2 h^p m_1^2}{(b-a)^2} \right) 
\end{align*}
\end{proof}

\newpage
\section{Criticality of a neighborhood of $A^*$ (Proof of Theorem \ref{sec:theorem:critical})} \label{proof:asymptotic}

It turns out that the expectation of the full gradient of the loss function~\eqref{eqn:loss} is difficult to analyze directly. Hence corresponding to the true gradient with respect to the $i^{\textrm{th}}-$column of $W^\top$ we create a proxy, denoted by $\widehat{\nabla_i L}$), by replacing in the expression for the true expectation $\nabla_i L = \mathbb{E} \left[ \frac{\partial L}{\partial W_i} \right]$ every occurrence of the random variable $\textrm{Th}(W^\top_i y - \epsilon_i) = \textrm{Th}(W^\top_i A^*x^* - \epsilon_i)$ by the indicator random variable $\mathbf{1}_{i\in \textrm{supp}(x^*)}$. This proxy is shown to be a good approximant of the expected gradient in the following lemma. 

\begin{lemma}\label{good_proxy}
Assume that the hypotheses of Theorem \ref{sec:theorem:support} hold and additionally let $b$ be bounded by a polynomial in $h$.  Then we have for each $i$ (indexing the columns of $W^\top$),
\[\Bigg \vert \Bigg \vert \widehat{\nabla_i L} - \mathbb{E} \left[ \frac{\partial L}{\partial W_i} \right] \Bigg \vert \Bigg \vert_2 \leq \textrm{poly}(h) \textrm{exp} \left(- \frac{h^p m_1^2}{2(b-a)^2} \right) \]
\end{lemma}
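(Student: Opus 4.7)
The plan is to first write $\partial L/\partial W_i$ explicitly, identify every occurrence of a threshold $\textrm{Th}(W_j^\top y - \epsilon_j)$ (the derivative of the ReLU), and then apply Theorem~\ref{sec:theorem:support} together with a union bound to conclude that on a high-probability ``good event'' all of those thresholds simultaneously coincide with the corresponding support indicators $\mathbf{1}_{j\in \textrm{supp}(x^*)}$. A direct differentiation of $L = \tfrac12\|W^\top r - y\|^2$ gives
\[\frac{\partial L}{\partial W_i} \;=\; r_i(\hat y - y) \;+\; \textrm{Th}(W_i^\top y - \epsilon_i)\,\langle W_i,\hat y - y\rangle\, y,\]
in which every $r_j = \textrm{Th}(W_j^\top y - \epsilon_j)(W_j^\top y - \epsilon_j)$ appearing through $\hat y = \sum_j W_j r_j$ carries its own threshold, so exactly one threshold per index $j \in [h]$ is present in the integrand. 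By construction the proxy $\widehat{\nabla_i L}$ is the expectation of the same expression after each $\textrm{Th}(W_j^\top y - \epsilon_j)$ has been replaced by $\mathbf{1}_{j\in \textrm{supp}(x^*)}$.

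The core step will be to introduce the event
\[G \;:=\; \{\textrm{Th}(W_j^\top y - \epsilon_j) = \mathbf{1}_{j\in \textrm{supp}(x^*)} \text{ for every } j \in [h]\}\]
and apply a union bound to Theorem~\ref{sec:theorem:support}, giving $\mathbb{P}(G^c) \leq h\exp(-2h^p m_1^2/(b-a)^2)$. On $G$ the random integrand for $\partial L/\partial W_i$ coincides pointwise with the random integrand whose expectation is $\widehat{\nabla_i L}$, so only the complement contributes:
\[\Big\|\widehat{\nabla_i L} - \mathbb{E}\Big[\tfrac{\partial L}{\partial W_i}\Big]\Big\|_2 \;\le\; \mathbb{E}\big[\|(\textrm{grad integrand}) - (\textrm{proxy integrand})\|_2 \cdot \mathbf{1}_{G^c}\big].\]
To convert this into the stated bound I would then verify that both integrands are deterministically $\textrm{poly}(h)$ in norm: the compact-support hypothesis $x_i^* \in [a,b]$ with $b = \textrm{poly}(h)$ and $k = h^p$ gives $\|x^*\|_2 \le \sqrt{k}\,b$, and $\|W_j - A_j^*\|_2 \le \delta$ with unit-norm incoherent columns of $A^*$ gives $\|W_j\|_2, \|y\|_2, \|\hat y\|_2 = \textrm{poly}(h)$; consequently $|r_i|$, $\|\hat y - y\|_2$ and $|\langle W_i,\hat y - y\rangle|$ are all polynomial in $h$, and the same bound applies to the proxy integrand since each threshold in $[0,1]$ is there replaced by an indicator in $\{0,1\}$. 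Combining yields $\textrm{poly}(h)\cdot \exp(-h^p m_1^2/(2(b-a)^2))$, the looser exponent in the lemma comfortably absorbing the factor of $h$ from the union bound.

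The main obstacle I expect is bookkeeping: the threshold $\textrm{Th}(W_j^\top y - \epsilon_j)$ for $j\neq i$ sits nested inside $\hat y$, which then both multiplies $r_i$ and is dotted against $W_i$ in the gradient formula, so the substitution defining the proxy is really a simultaneous nested replacement across all $h$ indices, and one must check that the pointwise equality on $G$ propagates through these nested occurrences. The other routine, but essential, ingredient is collecting the various polynomial factors cleanly to produce the explicit $\textrm{poly}(h)$ prefactor in the final bound.
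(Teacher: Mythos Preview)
Your approach is correct and follows the same core strategy as the paper: invoke Theorem~\ref{sec:theorem:support} to show the threshold variables coincide with the support indicators with high probability, then bound the integrand polynomially in $h$. Two minor technical differences are worth noting. First, the paper phrases the argument via Cauchy--Schwarz, bounding $\|\mathbb{E}[f(X)-f(Y)]\|_2$ by $\sqrt{\mathbb{E}\|f(X)-f(Y)\|_2^2}\,\sqrt{\mathbb{P}(X\neq Y)}$, whereas you bound $\mathbb{E}[\|\cdot\|\,\mathbf{1}_{G^c}]$ directly using a deterministic pointwise bound on the integrand; both routes give the same conclusion. Second, the paper's written proof only explicitly tracks the single threshold $\textrm{Th}(W_i^\top y - \epsilon_i)$ appearing as an overall factor in $\partial L/\partial W_i$, while you also substitute the thresholds for every $j$ hidden inside $\hat y$ and control them jointly via a union bound over $[h]$; your treatment here is in fact more careful, and is what is actually needed to match the proxy as computed later in the appendix (where $\sum_j \textrm{ReLU}(W_j^\top y - \epsilon_j)W_j$ is replaced by $\sum_{j\in S}(W_j^\top y - \epsilon_j)W_j$). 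The extra factor of $h$ from the union bound is, as you observe, harmlessly absorbed either into the $\textrm{poly}(h)$ prefactor or into the slack between the exponent $-2h^p m_1^2/(b-a)^2$ coming from Theorem~\ref{sec:theorem:support} and the looser exponent $-h^p m_1^2/(2(b-a)^2)$ stated in the lemma.
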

\begin{proof}
This lemma has been proven in Section~\ref{app:proxy} of the Appendix. 
\end{proof}
~\\



\begin{lemma}\label{bounds}
~\\
Assume that the hypotheses of Theorem \ref{sec:theorem:support} hold, and $p < \min \{ \frac{1}{2}, \nu^2\}$ (and hence $\xi > 2p$). 
Then for each $i$ indexing the columns of $W^\top$, there exist real valued functions $\alpha_i$ and $\beta_i$, and a vector $e_i$ such that $\widehat{\nabla_i L} = \alpha_i W_i - \beta_i A^*_i + e_i$, and
\begin{align*}
\alpha_i =  \Theta(m_2h^{p-1})+o(m_1^2h^{p-1})\\
\beta_i =  \Theta(m_2h^{p-1})+o(m_1^2h^{p-1})\\
\alpha_i - \beta_i = o(\max\{m_1^2,m_2\}h^{p-1})\\
\vert \vert e_i \vert \vert _2 = o(\max\{m_1^2,m_2\}h^{p-1})
\end{align*}
\end{lemma}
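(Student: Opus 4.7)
My plan is to compute $\widehat{\nabla_i L}$ in closed form and then peel off the contributions along the directions $W_i$ and $A_i^*$. Applying the chain rule to $L = \tfrac{1}{2}\|W^\top r - y\|^2$ gives
\[
\frac{\partial L}{\partial W_i} = r_i (\hat y - y) \;+\; \textrm{Th}(W_i^\top y - \epsilon_i)\,\big((\hat y - y)^\top W_i\big)\,y,
\]
where $\textrm{Th}$ is the derivative of ReLU. The proxy replaces every occurrence of $\textrm{Th}(W_j^\top y - \epsilon_j)$ by $\mathbf{1}_{j\in S}$ with $S = \textrm{supp}(x^*)$, so inside $\hat y = \sum_j r_j W_j$ each $r_j$ becomes $\mathbf{1}_{j\in S}(W_j^\top y - \epsilon_j)$. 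Substituting $y = A^* x^*$ expands $W_j^\top y$ as a diagonal part $\langle W_j, A_j^*\rangle x_j^*$ plus a cross-term sum $\sum_{\ell \in S \setminus j} \langle W_j, A_\ell^*\rangle x_\ell^*$.

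I would then take expectations in two stages. First, over $x^*$ given $S$: the distributional hypotheses give uncorrelated coordinates with $\mathbb{E}[x_i^* \mid i \in S] = m_1$ and $\mathbb{E}[x_i^{*2} \mid i \in S] = m_2$, so any product of distinct $x_\ell^*$ factors into a product of $m_1$'s, while $x_i^{*2}$ contributes $m_2$. Second, over $S$: one has $q_1 = \mathbb{P}[i \in S] = \Theta(h^{p-1})$ and $q_2 = \mathbb{P}[i,j \in S] = \Theta(h^{2p-2})$. Each resulting expectation becomes a scalar linear combination of vectors in $\{W_j, A_\ell^*\}$, weighted by products of inner products $\langle W_j, A_\ell^*\rangle$, which I would classify by how many off-diagonal factors (bounded by $\delta + h^{-\xi}$ via~\eqref{eq:bound-inner-prod-WiAj}) and how many near-diagonal factors ($\langle W_i, A_i^*\rangle = 1 + O(\delta)$ via~\eqref{eq:bound-inner-prod-WiAi}) they carry.

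Regrouping by vector direction, the decomposition $\widehat{\nabla_i L} = \alpha_i W_i - \beta_i A_i^* + e_i$ drops out. The dominant $\Theta(m_2 h^{p-1})$ contribution to both $\alpha_i$ and $\beta_i$ comes from expectations of the form $\mathbb{E}[\mathbf{1}_{i\in S}(x_i^*)^2] = q_1 m_2$ (e.g., from $r_i^2 W_i$ in the first summand and from $r_i x_i^* A_i^*$ in the $-y$ piece of $\hat y - y$), while the $m_1^2$ part carries an extra factor of $\delta$ from the near-diagonal inner products and thus becomes $o(m_1^2 h^{p-1})$. Because the leading parts descend from the same $\langle W_i, A_i^*\rangle = 1 + O(\delta)$ factor, they cancel in $\alpha_i - \beta_i$, leaving only $O(\delta) \cdot q_1 m_2$ and similar subleading contributions. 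The residual $e_i$ collects sums of $W_j$'s and $A_j^*$'s for $j \neq i$, each weighted by at least two off-diagonal factors of $O(\delta + h^{-\xi})$ and by a support-coincidence prefactor $q_2 = \Theta(h^{2p-2})$; since there are $\lesssim k^2 = h^{2p}$ such vector summands, the hypotheses $\xi > 2p$ and $p < \nu^2$ are precisely what force $\|e_i\|_2 = o(\max\{m_1^2, m_2\} h^{p-1})$.

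The main obstacle is the bookkeeping. Expanding both $\hat y$ and $W_i^\top y$ generates many scalar-weighted vector summands, each of which must be sorted by its vector direction ($W_i$, $A_i^*$, or neither) and by its order in the small parameters $\delta = O(h^{-p-\nu^2})$, $h^{-\xi}$, and the probability ratio $q_2/q_1 = \Theta(h^{p-1})$. Verifying the precise $\Theta$ and $o$ rates amounts to careful exponent-accounting that balances the support-size factor $k = h^p$, the probabilities $q_1, q_2$, incoherence $h^{-\xi}$, and closeness $\delta$ against the target scale $h^{p-1}$; the hypothesis $p < \min\{1/2, \nu^2\}$ is exactly what guarantees that every off-diagonal contribution to $e_i$ and to $\alpha_i - \beta_i$ is asymptotically smaller than the leading $h^{p-1}$ scale.
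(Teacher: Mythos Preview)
Your overall strategy matches the paper's: compute the proxy gradient explicitly, take expectations in two stages, and sort the resulting scalar-weighted vectors by direction. The paper carries out exactly this expansion, splitting into $m_2$-dependent and $m_1$/$\epsilon$-dependent pieces and bounding each with the inner-product inequalities you mention.

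There is, however, a concrete gap in your treatment of $e_i$. You assert that every residual summand carries ``at least two off-diagonal factors of $O(\delta + h^{-\xi})$''. This is false. In the $m_1^2$ portion of $e_i$ one encounters, for example,
\[
\sum_{j\neq i} q_{ij}\,\langle W_i, A_i^*\rangle\,\langle W_j, A_j^*\rangle\, W_j
\qquad\text{and}\qquad
\sum_{k\neq i} q_{ik}\,\|W_i\|^2\,\langle W_i, A_i^*\rangle\, A_k^*,
\]
in which both inner products are near-diagonal (hence $\Theta(1)$) and there is no off-diagonal smallness to exploit. A naive triangle-inequality bound on either sum gives $h\cdot q_2 \cdot \Theta(1) = \Theta(h^{2p-1})$, which is \emph{not} $o(h^{p-1})$ for any $p>0$. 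The paper handles these terms with a separate spectral-norm argument: writing, e.g., $\sum_{k\neq i} A_k^* = A^*_{-i}\mathbf{1}_h$ and bounding $\|A^*_{-i}\|_2$ via Gershgorin's circle theorem applied to $A^{*\top}_{-i}A^*_{-i}$ yields $\|A^*_{-i}\mathbf{1}_h\|_2 \le \sqrt{h}\sqrt{h\cdot h^{-\xi}+1}$, so the contribution is $O\!\big(m_1^2 h^{2p-2}\sqrt{h^{2-\xi}+h}\big)$. This is where the hypotheses $p<\tfrac12$ and $\xi>2p$ are actually used; your sketch invokes $p<\tfrac12$ but never identifies the term that requires it. (Several other summands in $e_i$ carry only \emph{one} off-diagonal factor, not two, but those are saved by the extra $q_2/q_1 = \Theta(h^{p-1})$ as you anticipated.)
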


\begin{proof}
This lemma has been proven in Section~\ref{app:asymptotics} of the Appendix.
\end{proof}
~\\ \\
With the above asymptotic results, we are in a position to assemble the proof of Theorem~\ref{sec:theorem:critical}.

\begin{proof}[Proof of Theorem~\ref{sec:theorem:critical}] Consider any $i$ indexing the columns of $W^\top$. Recall the definition of the proxy gradient $\widehat{\nabla_i L}$ at the beginning of this section. Let us define $\gamma_i = \widehat{\nabla_i L} - \mathbb{E}\left [ \frac {\partial L}{\partial W_i}\right ]$. Using $\alpha_i, \beta_i$ and $e_i$ as defined in Lemma~\ref{bounds}, we can write the expectation of the true gradient as, $\mathbb{E}\left [ \frac {\partial L}{\partial W_i}\right ] = \alpha_i W_i - \beta_i A_i^* + e_i - \gamma_i$. Further, by Lemma~\ref{good_proxy},  
\[ \Vert \gamma_i \Vert \leq \textrm{poly}(h) \textrm{exp} \left(- \frac{h^p m_1^2}{2(b-a)^2} \right).\]  
Since $\textrm{exp} \left(\frac{h^p m_1^2}{2(b-a)^2}\right)$ is superpolynomial in $h$, we obtain 

\begin{align*}
\bigg\Vert \mathbb{E}\left [ \frac {\partial L}{\partial W_i}\right ] \bigg\Vert_2 &= \vert \vert \alpha_i W_i - \beta_i A_i^* + e_i - \gamma_i \vert \vert_2\\
&= \vert \vert \alpha_i (W_i - A_i^*) +(\alpha_i - \beta_i) A_i^* + e_i - \gamma_i \vert \vert_2\\
&\leq \vert \alpha_i \vert \Vert W_i - A_i^* \Vert_2 + \vert \alpha_i - \beta_i \vert + \vert \vert e_i - \gamma_i \vert \vert_2\\
&\leq \frac{\Theta(m_2h^{p-1})}{h^{2p+\theta^2}} + o (\max\{m_1^2,m_2\}h^{p-1})\\
&+ o(\max\{m_1^2,m_2\}h^{p-1})\\
& = o (\max\{m_1^2,m_2\}h^{p-1}) 
\end{align*}
\end{proof}

\section{Simulations}\label{sec:experiments}

We conduct some experiments on synthetic data in order to check whether the gradient norm is indeed small within the columnwise $\delta$-ball of $A^*$. We also make some observations about the landscape of the squared loss function, which has implications for being able to recover the ground-truth dictionary $A^*$.

\paragraph*{Data Generation Model:} We generate random dictionaries ($A^*$) of size $n \times h$ where $n=100$, and $h=256, 512, 1024, 2048$ and $4096$. The dictionary entries are drawn from a standard Gaussian, and the columns of the dictionary are then normalized. These dictionaries are incoherent, with high probability. For each $h$, we generate a dataset containing $N=5000$ sparse vectors with $h^p$ non-zero entries, where $p=0.01,0.02,0.05,0.1$. In our experiments, the coherence parameter $\xi$ was approximately $0.1$. We conduct experiments for values of $p$ that are at most $\xi$. Here $h$ is the hidden layer dimension of the autoencoder and $p$ controls the sparsity of the data used to train the autoencoder. The support of each sparse vector $x^*$ is drawn uniformly from all sets of indices of size $h^p$, and the non-zero entries in the sparse vectors are drawn from a uniform distribution between $a = 1$ and $b = 10$. Once we have generated the sparse vectors, we collect them in a matrix $X^* \in \mathbb{R}^{h \times N}$ and then compute the signals $Y = A^* X^*$.
~\\ \\
We set up the autoencoder as defined through equation \ref{eqn:autoencoder}. The bias parameter in the hidden layer is set to $\epsilon = 0.3 \times m_1 k \left( \delta + \frac{\mu}{\sqrt{n}} \right)$. Choosing this prefactor of $0.3$ does not violate Theorem \ref{sec:theorem:support} and it was chosen to have the ReLU layer of the autoencoder recover a large fraction of the support of $X^*$. We analyze the squared loss function in~\eqref{eqn:loss} and its gradient with respect to a column of $W$ through their empirical averages over the signals in $Y$.

\begin{table}[h]
\centering
\begin{tabular}{|l||*{4}{c|}}\hline
\backslashbox{$h$}{$p$} & 0.01 & 0.02 & 0.05 & 0.1 \\\hline\hline
256 & (0.0137, 0.0041) & (0.0138, 0.0044) & (0.0126, 0.0052) & (0.0095, 0.0068) \\\hline
512 & (0.0058, 0.0021) & (0.0058, 0.0022) & (0.0054, 0.0027) & (0.0071, 0.0036) \\\hline
1024 & (0.0025, 0.0010) & (0.0024, 0.0011) & (0.0026, 0.0014) & (0.0079, 0.0020) \\\hline
2048 & (0.0011, 0.0005) & (0.0012, 0.0006) & (0.0025, 0.0007) & (0.0031, 0.0010) \\\hline
4096 & (0.0006, 0.0003) & (0.0012, 0.0003) & (0.0013, 0.0004) & (0.0026, 0.0006) \\\hline
\end{tabular}
\caption{Average gradient norm for points that are $\frac{\delta}{2}$ away from $A^*$. For each $h$ and $p$ we report $\left( \vert \vert \mathbb{E} \left[ \frac{\partial L}{\partial W_i} \right] \vert \vert, h^{p-1}\right)$. We note that the gradient norm and $h^{p-1}$ are of the same order, and for any fixed $p$ the gradient norm is decreasing with $h$ as expected from Theorem \ref{sec:theorem:critical}}
\label{tab:grad-norm}
\end{table}

\paragraph*{Results:} Once we have generated the data, we compute the empirical average of the gradient of the loss function in~\eqref{eqn:loss} at $200$ random points which are columnwise $\frac{\delta}{2} = \frac{1}{2h^{2p}}$ away from $A^*$. We average the gradient over the $200$ points which are all at the same distance from $A^*$, and compare the average column norm of the gradient to $h^{p-1}$. Our experiments show that the average column norm of the gradient is of the order of $h^{p-1}$ (and thus falling with $h$ for any fixed $p$) as expected from Theorem \ref{sec:theorem:critical}. Results for points sampled at $\frac{\delta}{2}$ are shown in Table \ref{tab:grad-norm}.

\begin{figure}[h]
\centering
\includegraphics[scale=0.5]{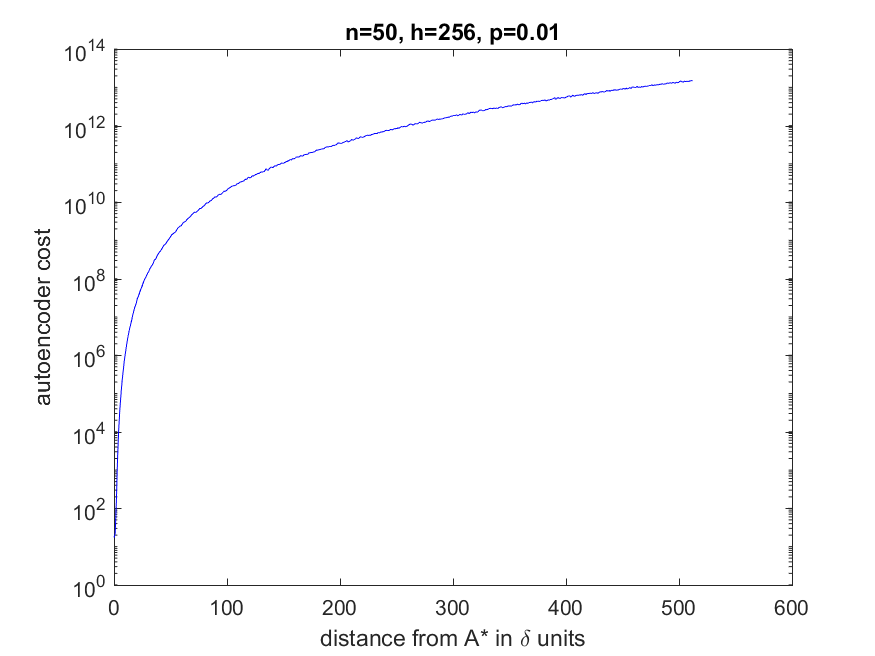}
\caption{Loss function plot for $h=256$, $p=0.01$}
\label{fig:loss_001_256}
\end{figure}

\begin{figure}[h]
\centering
\includegraphics[scale=0.5]{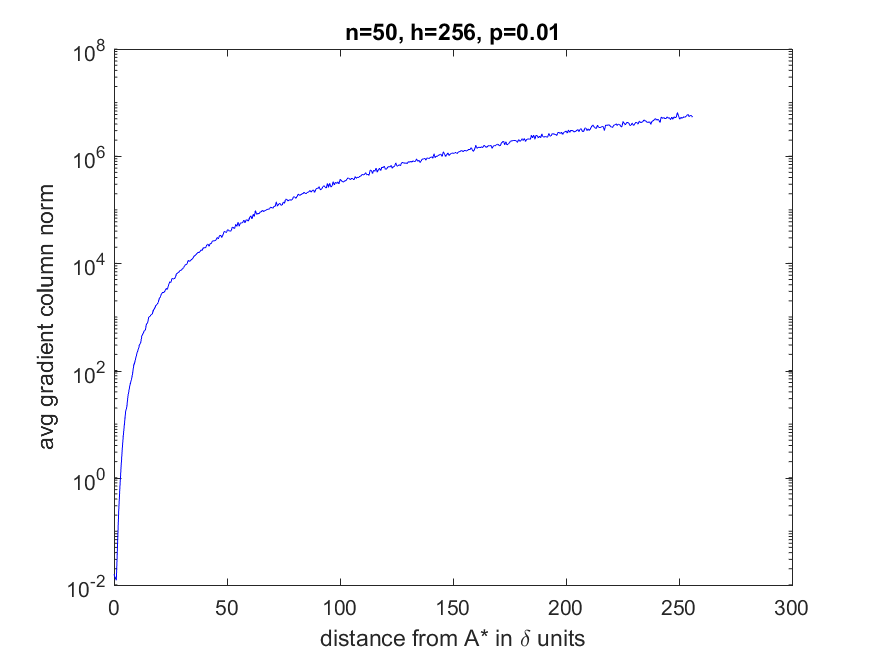}
\caption{Average gradient norm plot for $h=256$, $p=0.01$}
\label{fig:grad_001_256}
\end{figure}

\begin{figure}[h]
\centering
\includegraphics[scale=0.5]{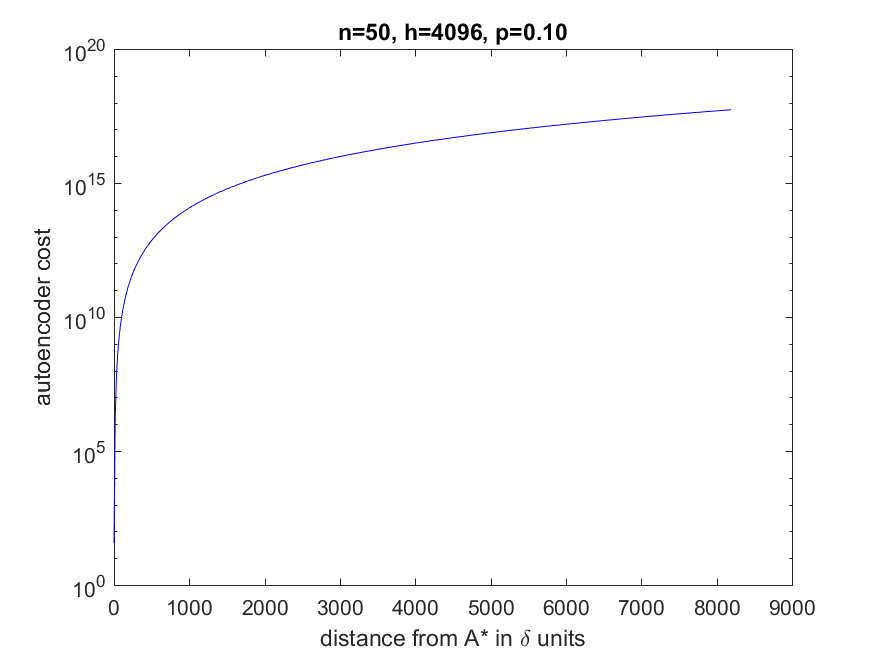}
\caption{Loss function plot for $h=4096$, $p=0.1$}
\label{fig:loss_01_4096}
\end{figure}

\begin{figure}[h]
\centering
\includegraphics[scale=0.5]{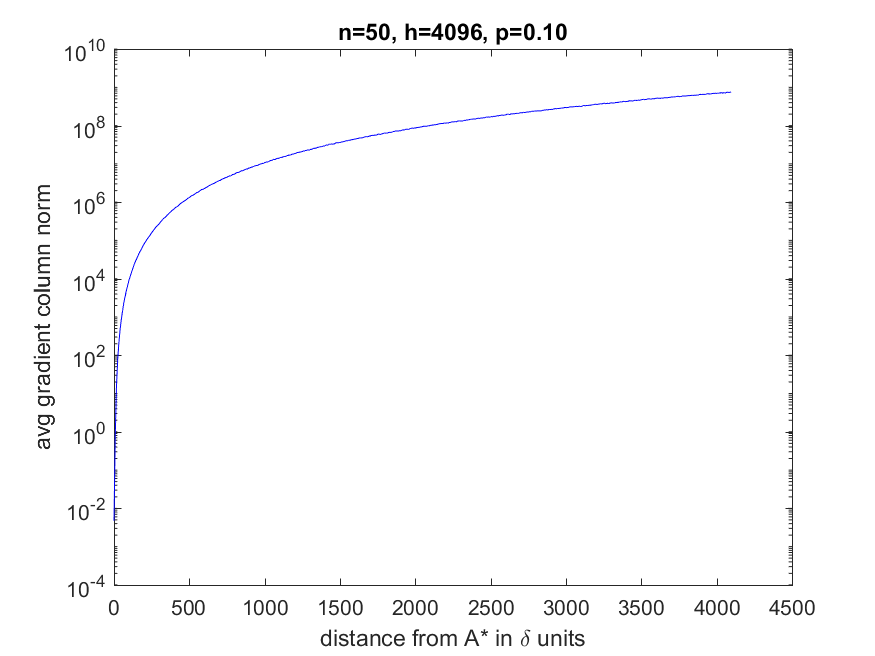}
\caption{Average gradient norm plot for $h=4096$, $p=0.1$}
\label{fig:grad_01_4096}
\end{figure}

\begin{figure}[h]
\centering
\includegraphics[scale=0.5]{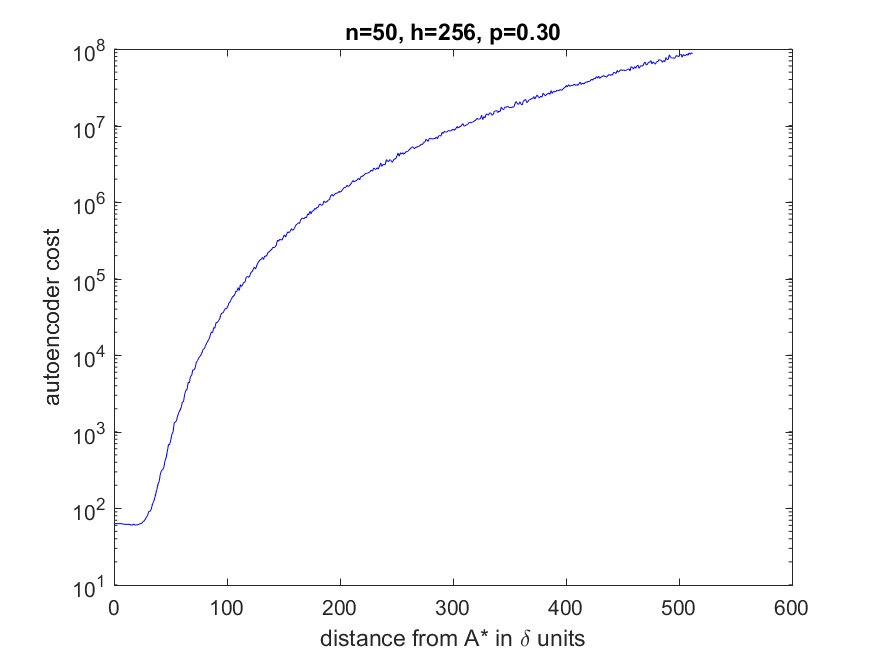}
\caption{Loss function plot for $h=256$, $p=0.3$}
\label{fig:loss_03_256}
\end{figure}

\begin{figure}[h]
\centering
\includegraphics[scale=0.5]{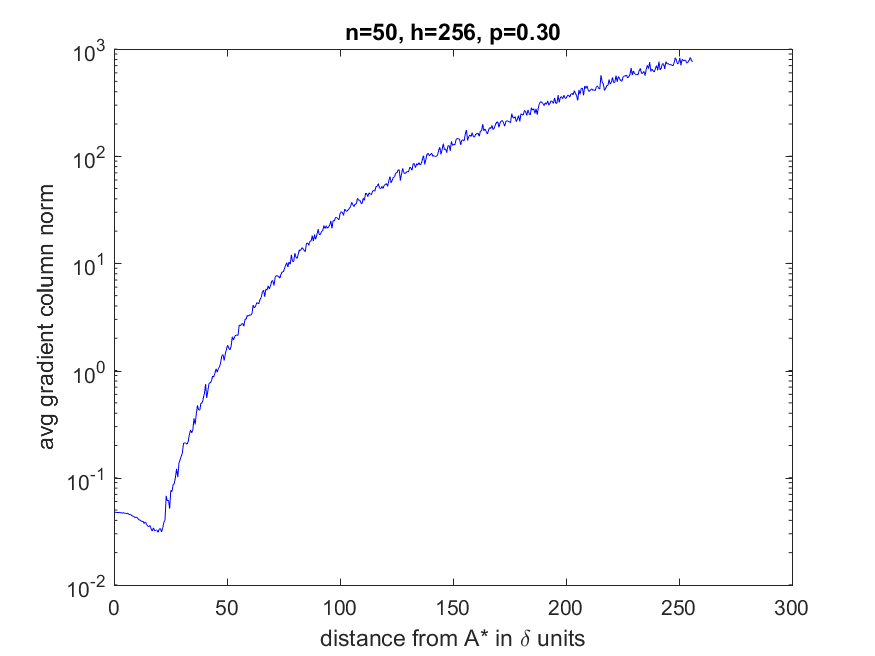}
\caption{Average gradient norm plot for $h=256$, $p=0.3$}
\label{fig:grad_03_256}
\end{figure}
~\\ \\
We also plot the squared loss of the autoencoder along a randomly chosen direction to see if $A^*$ is possibly a local minimum. More precisely, we draw a matrix $\Delta W$ from a standard normal distribution, and normalize its columns. We then plot $f(t) = L( (A^* + t \Delta W )^\top )$, as well as the gradient norm averaged over all the columns. For purposes of illustration, we show these plots for $h=256$, $p=0.01$ in figures \ref{fig:loss_001_256} and \ref{fig:grad_001_256}, and those for $h=4096$, $p=0.1$ in figures \ref{fig:loss_01_4096} and \ref{fig:grad_01_4096}. 
~\\ \\
From the first four plots, we can observe that the loss function value, and the gradient norm keeps decreasing as we get close to $A^*$. Since $\Delta W$ is a randomly chosen direction, this suggests that $A^*$ is a local minimum for the squared loss function. The plots we show here are in the log-scale along the y-axis, which is why it seems as though there is a sharp decrease in the function value. Viewed in normal scale, the function seems to decrease smoothly to a local minimum at $A^*$.
~\\ \\
In figures \ref{fig:loss_03_256} and \ref{fig:grad_03_256} we plot the function and gradient norm for $h=256$ and $p=0.3$. This value of $p$ is much larger than the coherence parameter $\xi$, and hence outside the region where the support recovery result, Theorem \ref{sec:theorem:support} is valid. We suspect that $A^*$ is now in a region where $\textrm{ReLU}(A^{*\top}y - \epsilon ) = 0$, which means the function is flat in a small neighborhood of $A^*$.

\section{Conclusion}
In this paper we have undertaken a rigorous analysis of the loss function of the squared loss of an autoencoder when the data is assumed to be generated by sensing of sparse high dimensional vectors by an overcomplete dictionary. {\bf We have shown that the expected gradient of this loss function is very close to zero in a neighborhood of the generating overcomplete dictionary.} 
~\\ \\
Our simulations complement this theoretical result by providing further empirical support. Firstly, they show that the gradient norm in this $\delta-$ball of $A^*$ indeed falls with $h$ and is of the same order as $\frac{1}{h^{1-p}}$ as expected from our proof. Secondly, the experiments also strongly suggest ranges of values of $h$ and $p$ where $A^*$ is a local minima of this loss function and that it has a neighborhood where the reconstruction error is low.
~\\ \\
This suggests sparse coding problems can be solved by training autoencoders using gradient descent based algorithms. 
Further, recent investigations have led to the conjecture/belief that many important unsupervised learning tasks, e.g. recognizing handwritten digits, are sparse coding problems in disguise~\cite{makhzani2013k,makhzani2015winner}. Thus, our results could shed some light on the observed phenomenon that gradient descent based algorithms train autoencoders to low reconstruction error for natural data sets, like MNIST.
~\\ \\
It remains to rigorously show whether a gradient descent algorithm can be initialized randomly (may be far away from $A^*$) and still be shown to converge to this neighborhood of critical points around the dictionary. Towards that it might be helpful to understand the structure of the Hessian outside this neighborhood. Since our analysis applies to the expected gradient, it remains to analyze the sample complexities where these nice results will become prominent. 
~\\ \\
The possibility also remains open that this standard loss or some other loss functions exist for the autoencoder with the provable property of having a global minima/minimum at the ground truth dictionary. We have mentioned one example of such in a special case (when $A^*$ is square orthogonal and $x^*$ is nonnegative) and even in this special case it remains open to find a provable optimization algorithm. 
~\\ \\
On the simulation front we have a couple of open challenges yet to be tackled. Firstly, it is left to find efficient implementations of the iterative update rule based on the exact gradient of the proposed loss function which has been given in~\eqref{eqn:loss}. This would open up avenues for testing the power of this loss function on real data rather than the synthetic data used here. Secondly, a simulation of the main Theorem~\ref{sec:theorem:critical} that can probe deeper into its claim would need to be able to sample $A^*$ for different $h$ at a fixed value of the incoherence parameter $\xi$. This sampling question of $A^*$ with these constraints is an unresolved one that is left for future work. 
~\\ \\Autoencoders with more than one hidden layer have been used for unsupervised feature learning \cite{le2013building} and recently there has been an analysis of the sparse coding performance of convolutional neural networks with one layer~\cite{gilbert2017towards} and two layers of nonlinearities \cite{vardan2016convolutional}. The connections between neural networks and sparse coding has also been recently explored in~\cite{bora2017compressed}. It remains an exciting open avenue of research to try to do a similar study as in this work to determine if and how deeper architectures under the same generative model might provide better means of doing sparse coding.

\section*{Acknowledgements}
Akshay Rangamani and Peter Chin are supported by the AFOSR grant FA9550-12-1-0136. Amitabh Basu and Anirbit Mukherjee gratefully acknowledges support from the NSF grant CMMI1452820. We would like to thank Raman Arora (JHU), and Siva Theja Maguluri (Georgia Institute of Technology) for illuminating comments and discussion.

\bibliographystyle{abbrv}
\bibliography{references}

\begin{thebibliography}{10}

\bibitem{abadi2016tensorflow}
M.~Abadi, A.~Agarwal, P.~Barham, E.~Brevdo, Z.~Chen, C.~Citro, G.~S. Corrado,
  A.~Davis, J.~Dean, M.~Devin, et~al.
\newblock Tensorflow: Large-scale machine learning on heterogeneous distributed
  systems.
\newblock {\em arXiv preprint arXiv:1603.04467}, 2016.

\bibitem{agarwal2014learning}
A.~Agarwal, A.~Anandkumar, P.~Jain, P.~Netrapalli, and R.~Tandon.
\newblock Learning sparsely used overcomplete dictionaries.
\newblock In {\em COLT}, pages 123--137, 2014.

\bibitem{alain2014regularized}
G.~Alain and Y.~Bengio.
\newblock What regularized auto-encoders learn from the data-generating
  distribution.
\newblock {\em Journal of Machine Learning Research}, 15(1):3563--3593, 2014.

\bibitem{allen2017natasha2}
Z.~Allen-Zhu.
\newblock Natasha 2:faster non-convex optimization than sgd.
\newblock {\em arXiv preprint arXiv:1708.08694}, 2017.

\bibitem{anandkumar2014tensor}
A.~Anandkumar, R.~Ge, D.~J. Hsu, S.~M. Kakade, and M.~Telgarsky.
\newblock Tensor decompositions for learning latent variable models.
\newblock {\em Journal of Machine Learning Research}, 15(1):2773--2832, 2014.

\bibitem{arora2016understanding}
R.~Arora, A.~Basu, P.~Mianjy, and A.~Mukherjee.
\newblock Understanding deep neural networks with rectified linear units.
\newblock {\em arXiv preprint arXiv:1611.01491}, 2016.

\bibitem{arora2014more}
S.~Arora, A.~Bhaskara, R.~Ge, and T.~Ma.
\newblock More algorithms for provable dictionary learning.
\newblock {\em arXiv:1401.0579}, 2014.

\bibitem{arora2015simple}
S.~Arora, R.~Ge, T.~Ma, and A.~Moitra.
\newblock Simple, efficient, and neural algorithms for sparse coding.
\newblock In {\em COLT}, pages 113--149, 2015.

\bibitem{arora2014new}
S.~Arora, R.~Ge, and A.~Moitra.
\newblock New algorithms for learning incoherent and overcomplete dictionaries.
\newblock In {\em COLT}, pages 779--806, 2014.

\bibitem{arpit2016regularized}
D.~Arpit, Y.~Zhou, H.~Ngo, and V.~Govindaraju.
\newblock Why regularized auto-encoders learn sparse representation?
\newblock In {\em International Conference on Machine Learning}, pages
  136--144, 2016.

\bibitem{baldi2012autoencoders}
P.~Baldi.
\newblock Autoencoders, unsupervised learning, and deep architectures.
\newblock In {\em Proceedings of ICML Workshop on Unsupervised and Transfer
  Learning}, pages 37--49, 2012.

\bibitem{bengio2013generalized}
Y.~Bengio, L.~Yao, G.~Alain, and P.~Vincent.
\newblock Generalized denoising auto-encoders as generative models.
\newblock In {\em Advances in Neural Information Processing Systems}, pages
  899--907, 2013.

\bibitem{blasiok2016improved}
J.~B{\l}asiok and J.~Nelson.
\newblock An improved analysis of the er-spud dictionary learning algorithm.
\newblock {\em arXiv:1602.05719}, 2016.

\bibitem{bora2017compressed}
A.~Bora, A.~Jalal, E.~Price, and A.~G. Dimakis.
\newblock Compressed sensing using generative models.
\newblock {\em arXiv preprint arXiv:1703.03208}, 2017.

\bibitem{coates2011analysis}
A.~Coates, A.~Ng, and H.~Lee.
\newblock An analysis of single-layer networks in unsupervised feature
  learning.
\newblock In {\em Proceedings of the fourteenth international conference on
  artificial intelligence and statistics}, pages 215--223, 2011.

\bibitem{coates2011importance}
A.~Coates and A.~Y. Ng.
\newblock The importance of encoding versus training with sparse coding and
  vector quantization.
\newblock In {\em Proceedings of the 28th International Conference on Machine
  Learning (ICML-11)}, pages 921--928, 2011.

\bibitem{du2017convolutional}
S.~S. Du, J.~D. Lee, and Y.~Tian.
\newblock When is a convolutional filter easy to learn?
\newblock {\em arXiv preprint arXiv:1709.06129}, 2017.

\bibitem{ge2017no}
R.~Ge, C.~Jin, and Y.~Zheng.
\newblock No spurious local minima in nonconvex low rank problems: A unified
  geometric analysis.
\newblock {\em arXiv preprint arXiv:1704.00708}, 2017.

\bibitem{GilbertLectures}
A.~Gilbert.
\newblock Cbms conference on sparse approximation and signal recovery
  algorithms, may 22-26, 2017 and 16th new mexico analysis seminar, may 21.
\newblock {\em https://www.math.nmsu.edu/~jlakey/cbms2017/ \\cbms{\tt
  _}lecture_notes.html}.

\bibitem{gilbert2017towards}
A.~C. Gilbert, Y.~Zhang, K.~Lee, Y.~Zhang, and H.~Lee.
\newblock Towards understanding the invertibility of convolutional neural
  networks.
\newblock {\em arXiv preprint arXiv:1705.08664}, 2017.

\bibitem{janzamin2015beating}
M.~Janzamin, H.~Sedghi, and A.~Anandkumar.
\newblock Beating the perils of non-convexity: Guaranteed training of neural
  networks using tensor methods.
\newblock {\em arXiv preprint arXiv:1506.08473}, 2015.

\bibitem{le2013building}
Q.~V. Le.
\newblock Building high-level features using large scale unsupervised learning.
\newblock In {\em 2013 IEEE international conference on acoustics, speech and
  signal processing}, pages 8595--8598. IEEE, 2013.

\bibitem{li2016sparseness}
J.~Li, T.~Zhang, W.~Luo, J.~Yang, X.-T. Yuan, and J.~Zhang.
\newblock Sparseness analysis in the pretraining of deep neural networks.
\newblock {\em IEEE transactions on neural networks and learning systems},
  2016.

\bibitem{li2017convergence}
Y.~Li and Y.~Yuan.
\newblock Convergence analysis of two-layer neural networks with relu
  activation.
\newblock {\em arXiv preprint arXiv:1705.09886}, 2017.

\bibitem{makhzani2013k}
A.~Makhzani and B.~Frey.
\newblock K-sparse autoencoders.
\newblock {\em arXiv preprint arXiv:1312.5663}, 2013.

\bibitem{makhzani2015winner}
A.~Makhzani and B.~J. Frey.
\newblock Winner-take-all autoencoders.
\newblock In {\em Advances in Neural Information Processing Systems}, pages
  2791--2799, 2015.

\bibitem{mei2016landscape}
S.~Mei, Y.~Bai, and A.~Montanari.
\newblock The landscape of empirical risk for non-convex losses.
\newblock {\em arXiv preprint arXiv:1607.06534}, 2016.

\bibitem{moitra2010settling}
A.~Moitra and G.~Valiant.
\newblock Settling the polynomial learnability of mixtures of gaussians.
\newblock In {\em Foundations of Computer Science (FOCS), 2010 51st Annual IEEE
  Symposium on}, pages 93--102. IEEE, 2010.

\bibitem{ng2011sparse}
A.~Ng.
\newblock Sparse autoencoder.
\newblock 2011.

\bibitem{olshausen1996emergence}
B.~A. Olshausen and D.~J. Field.
\newblock Emergence of simple-cell receptive field properties by learning a
  sparse code for natural images.
\newblock {\em Nature}, 381(6583):607, 1996.

\bibitem{olshausen1997sparse}
B.~A. Olshausen and D.~J. Field.
\newblock Sparse coding with an overcomplete basis set: A strategy employed by
  v1?
\newblock {\em Vision research}, 37(23):3311--3325, 1997.

\bibitem{olshausen2005close}
B.~A. Olshausen and D.~J. Field.
\newblock How close are we to understanding v1?
\newblock {\em Neural computation}, 17(8):1665--1699, 2005.

\bibitem{rifai2011contractive}
S.~Rifai, P.~Vincent, X.~Muller, X.~Glorot, and Y.~Bengio.
\newblock Contractive auto-encoders: Explicit invariance during feature
  extraction.
\newblock In {\em Proceedings of the 28th international conference on machine
  learning (ICML-11)}, pages 833--840, 2011.

\bibitem{sedghi2014provable}
H.~Sedghi and A.~Anandkumar.
\newblock Provable methods for training neural networks with sparse
  connectivity.
\newblock {\em arXiv preprint arXiv:1412.2693}, 2014.

\bibitem{spielman2012exact}
D.~A. Spielman, H.~Wang, and J.~Wright.
\newblock Exact recovery of sparsely-used dictionaries.
\newblock In {\em COLT}, pages 37--1, 2012.

\bibitem{tian2017analytical}
Y.~Tian.
\newblock An analytical formula of population gradient for two-layered relu
  network and its applications in convergence and critical point analysis.
\newblock {\em arXiv preprint arXiv:1703.00560}, 2017.

\bibitem{tieleman2015rmsprop}
T.~Tieleman and G.~Hinton.
\newblock {RMSprop Gradient Optimization}.

\bibitem{tillmann2015computational}
A.~M. Tillmann.
\newblock On the computational intractability of exact and approximate
  dictionary learning.
\newblock {\em IEEE Signal Processing Letters}, 22(1):45--49, 2015.

\bibitem{vardan2016convolutional}
P.~Vardan, Y.~Romano, and M.~Elad.
\newblock Convolutional neural networks analyzed via convolutional sparse
  coding.
\newblock {\em arXiv preprint arXiv:1607.08194}, 2016.

\bibitem{vincent2008extracting}
P.~Vincent, H.~Larochelle, Y.~Bengio, and P.-A. Manzagol.
\newblock Extracting and composing robust features with denoising autoencoders.
\newblock In {\em Proceedings of the 25th international conference on Machine
  learning}, pages 1096--1103. ACM, 2008.

\bibitem{vincent2010stacked}
P.~Vincent, H.~Larochelle, I.~Lajoie, Y.~Bengio, and P.-A. Manzagol.
\newblock Stacked denoising autoencoders: Learning useful representations in a
  deep network with a local denoising criterion.
\newblock {\em Journal of Machine Learning Research}, 11(Dec):3371--3408, 2010.

\bibitem{wu2017towards}
L.~Wu, Z.~Zhu, et~al.
\newblock Towards understanding generalization of deep learning: Perspective of
  loss landscapes.
\newblock {\em arXiv preprint arXiv:1706.10239}, 2017.

\bibitem{zhang2017electron}
Q.~Zhang, R.~Panigrahy, S.~Sachdeva, and A.~Rahimi.
\newblock Electron-proton dynamics in deep learning.
\newblock {\em arXiv preprint arXiv:1702.00458}, 2017.

\end{thebibliography}

\newpage
\appendix 
\section*{Appendix}
\section {The proxy gradient is a good approximation of the true expectation of the gradient (Proof of Lemma 5.1)} \label{app:proxy}

\begin{proof}
To make it easy to present this argument let us abstractly think of the function $f$ (defined for any $i \in \{1,2,3,..,h\}$) as $f(y,W,X) = \frac{\partial L}{\partial W_i} $ where we have defined the random variable $X = \text{Th}[W_i^Ty-\epsilon_i]$. It is to be noted that because of the  ReLU term and its derivative this function $f$ has a dependency on $y= A^*x^*$ even outside its dependency through $X$. Let us define another random variable $Y = \mathbf{1}_{i \in \text{Support}(x^*)}$. Then we have,
\begin{align*}
&\big\Vert \mathbb{E}_{x^*}[f(y,W,X)] - \mathbb{E}_{x^*}[f(y,W,Y)] \big\Vert_{\ell_2} \\
\leq  &\mathbb{E}_{x^*} [ \vert f(y,W,X) -  f(y,W,Y) \vert_{\ell_2} ]\\
\leq &\mathbb{E}_{x^*} [ \vert f(y,W,X) (\mathbf{1}_{X=Y} + \mathbf{1}_{X \neq Y}) -  f(y,W,Y)(\mathbf{1}_{X=Y} + \mathbf{1}_{X \neq Y}) \vert_{\ell_2} ] \\
\leq & \mathbb{E}_{x^*} [\vert (f(y,W,X) - f(y,W,Y)) \vert_{\ell_2}\mathbf{1}_{X \neq Y} ]\\
\leq &\sqrt{\mathbb{E}_{x^*}[ \big\vert f(y,W,X) - f(y,W,Y) \big\vert_{2}^2 ]} \sqrt{\mathbb{E}_{x^*} [\mathbf{1}_{X \neq Y}]}
\end{align*}
~\\
In the last step above we have used the Cauchy-Schwarz inequality for random variables. We recognize that $\mathbb{E}_{x^*}[f(y,W,Y)]$ is precisely what we defined as the proxy gradient $\widehat{\nabla_i L}$. Further for such $W$ as in this lemma the support recovery theorem (Theorem 3.1) holds and that is precisely the statement that the term, $\mathbb{E}_{x^*} [\mathbf{1}_{X \neq Y}]$ is small. So we can rewrite the above inequality as, 
\begin{align*}
\bigg\Vert \mathbb{E}_{x^*}[\frac{\partial L}{\partial W_i}] - \widehat{\nabla_i L} \bigg\Vert_{2} \leq \sqrt{\mathbb{E}_{x^*}[ \big\vert f(y,W,X) - f(y,W,Y) \big\vert_{2}^2 ]} \exp \left ( - \frac{h^pm_1^2}{2(b-a)^2} \right )
\end{align*}
~\\
We remember that $f$ is a polynomial in $h$ because its $h$ dependency is through Frobenius norms of submatrices of $W$ and $\ell_2$ norms of projections of $Wy$. But the $\ell_\infty$ norm of the training vectors $y$ (that is $b$) have been assumed to be bounded by $\text{poly}(h)$. Also we have the assumption that the columns of $W^\top$ are within a $\frac{1}{h^{p+\nu^2}}-$ball of the corresponding columns of $A^*$ which in turn is a $n \times h$ dimensional matrix of bounded norm because all its columns are normalized. So summarizing we have, 

\begin{align*}
\bigg\Vert \mathbb{E}_{x^*}[\frac{\partial L}{\partial W_i}] - \widehat{\nabla_i L} \bigg\Vert_{2} \leq  \text{poly}(h)\exp \left ( - \frac{h^pm_1^2}{2(b-a)^2} \right )
\end{align*}
~\\
The above inequality immediately implies the claimed lemma.
\end{proof}

\newpage
\section {The asymptotics of the coefficients of the gradient of the squared loss (Proof of Lemma $5.2$)}\label{app:asymptotics}

To recap we imagine being given as input signals $y \in \mathbb{R}^n$ (imagined as column vectors), which are generated from an overcomplete dictionary $A^* \in \mathbb{R}^{n \times h}$ of a fixed incoherence. Let $x^* \in \mathbb{R}^h$ (imagined as column vectors) be the sparse code that generates $y$.
The model of the autoencoder that we now have is $\hat{y} = W^\top \textrm{ReLU}(Wy - \epsilon)$. $W$ is a $h \times n$ matrix and the $i^{th}$ column of $W^\top$ is to be denoted as the column vector $W_i$. 

\subsection {Derivative of the standard squared loss of a ReLU autoencoder}
Using the above notation the squared loss of the autoencoder is $\frac{1}{2} \vert \vert \hat{y} -  y \vert \vert^2$. But we introduce a dummy constant $D=1$ to be multiplied to $y$ because this helps read the complicated equations that would now follow. This marker helps easily spot those terms which depend on the sensing of $x^*$ (those with a factor of $D$) as opposed to the terms which are ``purely'' dependent on the neural net (those without the factor of $D$). Thus we think of the squared loss $L$ of our autoencoder as, 
\[ L = \frac{1}{2} \vert \vert \hat{y} - D y \vert \vert^2 = \frac {1}{2} (W^\top \textrm{ReLU}(Wy - \epsilon) - D y)^\top (W^\top \textrm{ReLU}(Wy - \epsilon) - D y) = \frac {1}{2} f^T f\] 
~\\
where we have defined $f \in \R^n$ as, 
\[ f = W^\top \textrm{ReLU}(Wy - \epsilon) - D y\]
Then we have, 
\[ J_{W_i}(f)_{ab} = \frac {\partial f_a}{\partial W_{ib}} = \textrm{ReLU}(W_i^\top y - \epsilon)\delta_{ab} + \textrm{Th} (W_i^Ty - \epsilon) W_{ia}y_b\]
In the form of a $n \times n$ derivative matrix this means,
\[ J_{W_i}(f) = \left[ \frac {\partial f_a}{\partial W_{ib}} \right] = \textrm{ReLU}(W_i^\top y - \epsilon)I + \textrm{Th} (W_i^\top y - \epsilon)W_iy^\top\]

~\\
This helps us write, 

\begin{align*}
\frac {\partial L}{\partial W_i} &= J_{W_i}(f))^\top f\\
&= (\textrm{ReLU}(W_i^\top y - \epsilon)I + \textrm{Th} (W_i^\top y - \epsilon)W_iy^\top)^\top [W^\top \textrm{ReLU}(Wy - \epsilon) - D y ]\\
&= \textrm{Th}(W^\top_i y - \epsilon_i) \left[ (W_i^\top y - \epsilon_i)I + y W_i^\top \right] \left( \sum_{j=1}^{h} \textrm{ReLU}(W_j^\top y - \epsilon_j) W_j - D y \right) \\
\end{align*}

~\\
Now going over to the proxy gradient $\widehat{\nabla_i L}$ corresponding to this term we define the vector $G_i$ as, 

\begin{align*}
\widehat{\nabla_i L} &= \mathbb{E}_{S \in \mathbb{S}} \left[ \mathbf{1}_{i \in S} \times \mathbb{E}_{x^*_S} \left[ \left[ (W_i^{\top} y - \epsilon_i)I + y  W_i^\top \right] \left( \sum_{j \in S} (W_j^{\top} y - \epsilon_j) W_j - D y \right) \right] \right]\\
&= \mathbb{E}_{S \in \mathbb{S}} \left[ \mathbf{1}_{i \in S} \times G_i \right]
\end{align*}

~\\

\newpage 
Thus we have, 

\begin{align*}
G_i &= \mathbb{E}_{x^*_S} \left[ \left[ (W_i^{\top} A^* x^* - \epsilon_i)I + (A^* x^*) W_i^\top \right] \left( \sum_{j \in S} (W_j^{\top} A^* x^* - \epsilon_j) W_j - D A^* x^* \right) \right] \\
&= \underbrace{\mathbb{E}_{x^*_S} \left[ (W_i^{\top} A^* x^* - \epsilon_i)\left( \sum_{j \in S} (W_j^{\top} A^* x^* - \epsilon_j) W_j - D A^* x^* \right) \right]}_{\textrm{Term 1}} \\ 
&+ \underbrace{\mathbb{E}_{x^*_S} \left[ (A^* x^*) W_i^\top \left( \sum_{j \in S} (W_j^{\top} A^* x^* - \epsilon_j) W_j - D A^* x^* \right) \right]}_{\textrm{Term 2}}\\
&= \underbrace{\mathbb{E}_{x^*_S} \left[ \sum_{j \in S} \epsilon_i \epsilon_j W_j - \sum_{j , k \in S} \epsilon_i (W_j^{\top} A^*_k) W_j  x_k^* - \sum_{j ,k \in S} \epsilon_j (W_i^{\top} A^*_k) W_j  x_k^* + \sum_{j,k,l \in S} ( W_i^{\top} A^*_k)( W_j^{\top} A^*_l) W_j x_l^* x_k^* \right]}_{\textrm{From Term 1}} \\
&+ \underbrace{\mathbb{E}_{x^*_S} \left[ - D \sum_{j,k \in S} ( W_i^{\top} A_k^*) A_j^* x_k^* x_j^* + D \sum_{j \in S} \epsilon_i A^*_j x^*_j \right]}_{\textrm{From Term 1}} + \underbrace{\mathbb{E}_{x^*_S} \left[ - D \sum_{j,k \in S} (A_k^{* \top}W_i) A^*_j x^*_k x^*_j \right]}_{\textrm{From Term 2}} \\ 
&+ \underbrace{\mathbb{E}_{x^*_S} \left[ - \sum_{j,k \in S} \epsilon_j A_k^* (W_i^\top W_j)  x^*_k \right]}_{\textrm{From Term 2}} + \underbrace{\mathbb{E}_{x^*_S} \left[ \sum_{j,k,l \in S}  (W_i^\top W_j) (W_j^{\top} A_l^*)A_k^* x_k^* x_l^* \right]}_{\textrm{From Term 2}}
\end{align*}

\newpage 
~\\
Now we invoke the distributional assumption about i.i.d sampling of the coordinates for a fixed support and the definition of $m_1$ and $m_2$ to write, $\mathbb{E}_{x^*_S}[x^*_ix^*_j] = \mathbb{E}^2_{x^*_S}[x^*_i] = m_1^2$ for all $i \neq j$ and for $i=j$, $m_2 = \mathbb{E}_{x^*_S}[x^*_ix^*_j]$. Thus we get, 

\begin{align*}
G_i &= \underbrace{\sum_{j \in S} \epsilon_i \epsilon_j W_j - m_1 \sum_{j,k \in S}  ( W_j^\top A^*_k)  W_j \epsilon_i  - m_1 \sum_{j,k \in S} \epsilon_j (W_i^\top A^*_k) W_j}_{G^1_i\textrm{ From Term 1}}   \\
&+ \underbrace{ m_2 \sum_{j , k \in S} ( W_i ^\top A^*_k) (W_j^\top A^*_k)  W_j  + m_1^2\sum_{\substack{j, k, l \in S \\ k \neq l}} ( W_i ^\top A^*_k ) ( W_j ^\top A^*_l)  W_j}_{G^2_i \textrm{ From Term 1}} \\
&+ \underbrace{\left[ - D m_1^2\sum_{\substack{j, k \in S \\ j \neq k}} ( W_i^{\top} A_k^*) A_j^*  - D m_2 \sum_{j\in S} ( W_i^{\top} A_j^*) A_j^*  + m_1 D \sum_{j \in S} \epsilon_i A^*_j  \right]}_{G^3_i \textrm{ From Term 1}} \\ 
&-\underbrace{\left[ D m_1^2\sum_{\substack{j, k \in S \\ j \neq k}} (A_k^{* \top}W_i) A^*_j  +D m_2\sum_{j \in S} (A_j^{* \top}W_i) A^*_j \right]}_{G^4_i \textrm{ From Term 2}} \\
&-\underbrace{m_1\left [ \sum_{j,k \in S} \epsilon_j (W_i^\top W_j) A_k^*   \right] +  \left[ m_2 \sum_{j,k \in S}  (W_i^\top W_j) (W_j^{\top} A_k^*)A^*_k + m_1^2 \sum_{\substack{j, k, l \in S \\ k \neq l}}  (W_i^\top W_j) (W_j^{\top} A_l^*)A^*_k \right]}_{G^5_i \textrm{ From Term 2}}
\end{align*}
~\\
Each term in the above sum is a vector. Now we separate out from the sums the terms which are in the directions of $W_i$ or $A_i^*$ and the rest. We remember that this is being under the condition that $i \in S$. To make this easy to read we do this separation for each line of the above equation separately in a different equation block. Also inside every block we do the separation for each summation term in a separate line.

\begin{align*}
G^1_i &= \sum_{j \in S} \epsilon_i \epsilon_j W_j - m_1 \sum_{j,k \in S}  ( W_j^\top A^*_k)  W_j \epsilon_i  - m_1 \sum_{j,k \in S} \epsilon_j (W_i^\top A^*_k) W_j \\ 
&= \left [ \epsilon_i^2 W_i + \sum_{\substack{j \in S \\ j \neq i}} \epsilon_i \epsilon_j W_j \right ]\\ 
&-m_1 \left [ \sum_{k \in S}  \epsilon_i  ( W_i^\top A^*_k)  W_i + \sum_{\substack{j, k \in S \\ j \neq i}}  ( W_j^\top A^*_k)  W_j \epsilon_i\right ]\\ 
&-m_1 \left [ \sum_{k \in S} \epsilon_i (W_i^\top A^*_k) W_i + \sum_{\substack{j, k \in S \\ j \neq i}} \epsilon_j (W_i^\top A^*_k) W_j \right ]\\ 
~\\ \\
G^2_i &= m_2 \sum_{j , k \in S} ( W_i ^\top A^*_k) (W_j^\top A^*_k)  W_j  + m_1^2\sum_{\substack{j, k, l \in S \\ k \neq l}}  ( W_i ^\top A^*_k ) ( W_j ^\top A^*_l)  W_j\\
&= m_2 \left [ \sum_{ k \in S} ( W_i ^\top A^*_k) (W_i^\top A^*_k)  W_i + \sum_{\substack{j, k \in S \\ j \neq i}} ( W_i ^\top A^*_k) (W_j^\top A^*_k)  W_j \right ] \\ 
&+ m_1^2 \left [ \sum_{\substack{k, l \in S \\ k \neq l}}  ( W_i ^\top A^*_k ) ( W_i ^\top A^*_l)  W_i + \sum_{\substack{j, k, l \in S \\ j \neq i \\ k \neq l}}  ( W_i ^\top A^*_k ) ( W_j ^\top A^*_l)  W_j \right ]\\
~\\ \\
G^3_i &= -D\left[ m_1^2\sum_{\substack{j, k \in S \\ j \neq k}} ( W_i^{\top} A_k^*) A_j^*  + m_2 \sum_{j\in S} ( W_i^{\top} A_j^*) A_j^*  - m_1  \sum_{j \in S} \epsilon_i A^*_j  \right] \\
&=-D \left [ m_1^2 \sum_{\substack{k \in S \\ k \neq i}} ( W_i^{\top} A_k^*) A_i^* +  m_1^2 \sum_{\substack{j, k \in S \\ j \neq i \\ j \neq k}} ( W_i^{\top} A_k^*) A_j^* \right ]\\
&-D \left [ m_2 ( W_i^{\top} A_i^*) A_i^* + m_2 \sum_{\substack{j \in S \\ j \neq i}} ( W_i^{\top} A_j^*) A_j^* \right ] \\
&-D \left [ -m_1 \epsilon_i A_i^* - m_1 \sum_{\substack{j \in S \\ j \neq i}} \epsilon_i A_j^* \right ]\\
\end{align*}
\begin{align*}
G^4_i &= -\left[ D m_1^2\sum_{\substack{j,k \in S \\ j \neq k}} (A_k^{* \top}W_i) A^*_j  +D m_2\sum_{j \in S} (A_j^{* \top}W_i) A^*_j \right] \\
&=-D \left [ m_1^2  \sum_{\substack{k \in S \\ k \neq i}} (A_k^{* \top}W_i) A^*_i  +  m_1^2 \sum_{\substack{j ,k\in S \\ j \neq k \\ j \neq i}} (A_k^{* \top}W_i) A^*_j \right ]\\
&-D \left [ m_2 (A_i^{* \top}W_i) A^*_i + m_2 \sum_{\substack{j \in S \\ j \neq i}} (A_j^{* \top}W_i) A^*_j \right ]\\
~\\ \\
G^5_i &= -m_1\left [ \sum_{j,k \in S} \epsilon_j (W_i^\top W_j) A_k^*   \right] +  \left[ m_2 \sum_{j,k \in S}  (W_i^\top W_j) (W_j^{\top} A_k^*)A^*_k + m_1^2 \sum_{\substack{j, k, l \in S \\ k \neq l}}  (W_i^\top W_j) (W_j^{\top} A_l^*)A^*_k \right] \\
&= -m_1 \sum_{j \in S} \epsilon_j (W_i^\top W_j) A^*_i - m_1 \sum_{\substack{j,k \in S \\ k \neq i}} \epsilon_j (W_i^\top W_j) A^*_k \\
&+ m_2 \sum_{j \in S} (W_i^\top W_j) (W_j^\top A^*_i) A^*_i + m_2 \sum_{\substack{j,k \in S \\ k \neq i}} (W_i^\top W_j) (W_j^\top A^*_k) A^*_k \\
&+ m_1^2 \sum_{\substack{j,l \in S \\ l \neq i}} (W_i^\top W_j) (W_j^{\top} A_l^*)A^*_i + m_1^2 \sum_{\substack{j,k,l \in S \\ k \neq i,l}} (W_i^\top W_j) (W_j^{\top} A_l^*)A^*_k 
\end{align*}

\newpage 
~\\
So combining the above we have, 

\[ \widehat{\nabla_i L} = \alpha_i W_i - \beta_i A^*_i + e_i \]

~\\
where,
\begin{align*}
\alpha_i &=  \mathbb{E}_{S \in \mathbb{S}} \Bigg[ \mathbf{1}_{i \in S} \times \Bigg \{ m_2 \sum_{ k \in S} ( W_i ^\top A^*_k) (W_i^\top A^*_k) + m_1^2 \sum_{\substack{k, l \in S \\ k \neq l}}  ( W_i ^\top A^*_k ) ( W_i ^\top A^*_l) -2m_1 \sum_{k \in S}  \epsilon_i  ( W_i^\top A^*_k) + \epsilon_i^2 \Bigg \} \Bigg]\\ 
\beta_i &=  \mathbb{E}_{S \in \mathbb{S}} \Bigg[ \mathbf{1}_{i \in S} \times \Bigg \{  2 D m_1^2 \sum_{\substack{k \in S \\ k \neq i}} ( W_i^{\top} A_k^*) + 2 D m_2 ( W_i^\top A^*_i) - D m_1 \epsilon_i + m_1 \sum_{j \in S} \epsilon_j (W_i^\top W_j) - m_2 \sum_{j \in S} (W_i^\top W_j) (W_j^\top A^*_i) \\
&- m_1^2 \sum_{\substack{j,l \in S \\ l \neq i}} (W_i^\top W_j) (W_j^{\top} A_l^*) \Bigg \} \Bigg ] \\ 
e_i &= \mathbb{E}_{S \in \mathbb{S}} \Bigg[ \mathbf{1}_{i \in S} \times \Bigg \{ \sum_{\substack{j \in S \\ j \neq i}} \epsilon_i \epsilon_j W_j - m_1 \sum_{\substack{j, k \in S \\ j \neq i}}  \epsilon_i ( W_j^\top A^*_k)  W_j - m_1 \sum_{\substack{j, k \in S \\ j \neq i}} \epsilon_j (W_i^\top A^*_k) W_j \\
&+ m_2 \sum_{\substack{j, k \in S \\ j \neq i}} ( W_i ^\top A^*_k) (W_j^\top A^*_k)  W_j + m_1^2 \sum_{\substack{j, k, l \in S \\ j \neq i \\ k \neq l}}  ( W_i ^\top A^*_k ) ( W_j ^\top A^*_l)  W_j \\
&-2D m_1^2 \sum_{\substack{j, k \in S \\ j \neq i \\ j \neq k}} ( W_i^{\top} A_k^*) A_j^* -2D m_2 \sum_{\substack{j \in S \\ j \neq i}} ( W_i^{\top} A_j^*) A_j^* + D m_1 \sum_{\substack{j \in S \\ j \neq i}} \epsilon_i A_j^* \\
&- m_1 \sum_{\substack{j,k \in S \\ k \neq i}} \epsilon_j (W_i^\top W_j) A^*_k + m_2 \sum_{\substack{j,k \in S \\ k \neq i}} (W_i^\top W_j) (W_j^\top A^*_k) A^*_k + m_1^2 \sum_{\substack{j,k,l \in S \\ k \neq i,l}} (W_i^\top W_j) (W_j^{\top} A_l^*)A^*_k \Bigg \} \Bigg ]
\end{align*}
 
~\\ \\
We will now estimate bounds on each of the terms $\alpha_i, \beta_i, ||e_i||$. We will separate them as $\alpha_i = \tilde{\alpha_i} + \hat{\alpha_i}$ (similarly for the other terms). Where the tilde terms are those that come as a coefficient of $m_2$, and the hat terms are the ones that come as coefficient of $m_1$ or $\epsilon$ or both.

\newpage

\subsection {Estimating the $m_2$ dependent parts of the derivative}

Since $||A^*_i||=1$ and $W_i$ is being assumed to be within a $0 < \delta <1$ ball of $A^*_i$ we can use the following inequalities:
\begin{align*}
||W_i|| &= ||W_i - A^*_i + A^*_i|| \leq ||W_i - A^*_i|| + ||A^*_i|| = \delta + 1\\ 
||W_i|| &\geq 1-\delta \\
\langle W_i, A^*_i \rangle &= \langle W_i - A^*_i, A^*_i \rangle + \langle A^*_i, A^*_i \rangle \leq ||W_i - A^*_i||||A^*_i|| +  1 \leq \delta + 1\\ 
\langle W_i, A^*_i \rangle &\geq 1-\delta \\
|\langle W_j, A^*_i \rangle| &= |\langle W_j - A^*_j, A^*_i \rangle + \langle A^*_j, A^*_i \rangle| \leq \frac{\mu}{\sqrt{n}}+ ||W_j - A^*_j||||A^*_i|| = \frac{\mu}{\sqrt{n}}+\delta\\
\vert \langle W_i, W_j  \rangle \vert &= \vert \langle W_i - A_i^*,W_j \rangle + \langle A_i^*,W_j \rangle \vert \leq \delta(1+\delta) + (\delta + \frac {\mu}{\sqrt{n}}) = \delta ^2 + 2\delta + \frac{\mu}{\sqrt{n}}\\
\langle W_i, W_i \rangle &= ||W_i||^2 \geq (1-\delta)^2 \\
\langle W_i, W_i \rangle &= ||W_i||^2 \leq (1+\delta)^2
\end{align*}

\paragraph{Bounding $\tilde{\beta_i}$}
\begin{align*}
\tilde{\beta_i} &= \mathbb{E}_{S \in \mathbb{S}}\left [ \mathbf{1}_{i \in S} \left \{ 2D   m_2 ( W_i^{\top} A_i^*) - m_2 \sum_{j \in S} (W_i^\top W_j) (W_j^\top A^*_i)\right \} \right ]\\
&= \mathbb{E}_{S \in \mathbb{S}}\left [ \mathbf{1}_{i \in S} \left \{ 2D   m_2 \langle W_i, A_i^* \rangle  - m_2 ||W_i||^2 \langle W_i, A^*_i\rangle - m_2 \sum_{\substack{j \in S \\ j \neq i}} \langle W_i, W_j \rangle  \langle W_j, A^*_i\rangle \right \} \right ]
\end{align*}
~\\
Evaluating the outer expectation we get,
\begin{align}\label{beta_m2}
\nonumber \tilde{\beta_i} &= \sum_{\{S \in \mathbb{S} : i \in S\}} q_S 2D m_2 \langle W_i, A_i^* \rangle  - \sum_{\{S \in \mathbb{S} : i \in S\}} q_S m_2 ||W_i||^2 \langle W_i, A^*_i\rangle - m_2 \sum_{\substack{j =1 \\ j \neq i}}^h \langle W_i, W_j \rangle  \langle W_j , A^*_i\rangle \sum_{\{S \in \mathbb{S} : i,j \in S, i \neq j\}} q_S\\ 
\nonumber &= 2D q_i m_2 \langle W_i, A_i^* \rangle - q_i m_2 ||W_i||^2 \langle W_i, A^*_i\rangle - m_2 \sum_{\substack{j =1 \\ j \neq i}}^h q_{ij} \langle W_i, W_j \rangle  \langle W_j , A^*_i\rangle\\
\newline 
\nonumber &\text{Upper bounding the above we get,}\\
\nonumber \tilde{\beta_i} &\leq 2D m_2 h^{p-1} (1+\delta) - m_2 h^{p-1} (1-\delta)^3 + m_2 h^{2p-1} \left( \delta + \frac{\mu}{\sqrt{n}} \right) \left( \delta^2 + 2\delta + \frac{\mu}{\sqrt{n}} \right)\\
\nonumber &= 2D m_2 h^{p-1} (1+h^{-p-\nu^2}) - m_2 h^{p-1} (1-3h^{-p-\nu^2} + 3 h^{-2p-2\nu^2} - h^{-3p-3\nu^2}) \\
&+ m_2 h^{2p-1} (h^{-3p-3\nu^2} + 2h^{-2p-2\nu^2} + h^{-2p-2\nu^2 - \xi} + 3h^{-p-\nu^2 - \xi} + h^{-2 \xi}) \\
\nonumber &\text{Similarly for the lower bound on $\beta_i$ we get,}\\
\nonumber \tilde{\beta_i} &\geq 2D m_2 h^{p-1} (1-\delta) - m_2 h^{p-1} (1+\delta)^3 - m_2 h^{2p-1} \left( \delta + \frac{\mu}{\sqrt{n}} \right) \left( \delta^2 + 2\delta + \frac{\mu}{\sqrt{n}} \right)\\
\nonumber &= 2D m_2 h^{p-1} (1-h^{-p-\nu^2}) - m_2 h^{p-1} (1+3h^{-p-\nu^2} + 3 h^{-2p-2\nu^2} + h^{-3p-3\nu^2}) \\
&- m_2 h^{2p-1} (h^{-3p-3\nu^2} + 2h^{-2p-2\nu^2} + h^{-2p-2\nu^2 - \xi} + 3h^{-p-\nu^2 - \xi} + h^{-2 \xi})
\end{align}
~\\
Thus for $0<p<2\xi$ and $D=1$, we have $\beta = \Theta \left( m_2 h^{p-1} \right)$

\paragraph{Bounding $\tilde{\alpha_i}$}
\begin{align*}
\tilde{\alpha_i} &= \mathbb{E}_{S \in \mathbb{S}}\left [ \mathbf{1}_{i \in S} \left \{ m_2  \sum_{k \in S}  (W_i^{\top}A_k^*)^2  \right \} \right ]\\
&= \mathbb{E}_{S \in \mathbb{S}}\left [ \mathbf{1}_{i \in S} \left \{ m_2  \langle W_i, A_i^* \rangle^2 + m_2 \sum_{\substack{k \in S \\ k \neq i}}  \langle W_i, A_k^*\rangle^2  \right \} \right ] \\
&= \sum_{\{S \in \mathbb{S} : i \in S\}} m_2 \langle W_i, A_i^* \rangle^2 q_S +  \sum_{\substack{k=1 \\ k \neq i}}^h \sum_{\{S \in \mathbb{S} : i,k \in S\}} \langle W_i, A_k^*\rangle^2 q_S \\
&= m_2 \langle W_i, A_i^* \rangle^2 \sum_{\{S \in \mathbb{S} : i \in S\}} q_S + m_2 \sum_{\substack{k=1 \\ k \neq i}}^h \langle W_i, A_k^*\rangle^2 \left( \sum_{\{S \in \mathbb{S} : i,k \in S, i \neq k\}} q_S\right)  \\
&= q_i m_2 \langle W_i, A_i^* \rangle^2  + m_2 \sum_{\substack{k=1 \\ k \neq i}}^h q_{ik} \langle W_i, A_k^*\rangle^2 \\
&= h^{p-1} m_2 \langle W_i, A_i^* \rangle^2  + m_2 h^{2p-1} \textrm{ max } \langle W_i, A_k^*\rangle^2
\end{align*}

~\\
The above implies the following bounds,
\begin{align}\label{alpha_m2}
h^{p-1} m_2 (1 - h^{-p-\nu^2})^2 \leq \tilde{\alpha_i} \leq h^{p-1} m_2 (1 + h^{-p-\nu^2})^2 + m_2 h^{2p-1} (h^{-p-\nu^2} + h^{-\xi})^2
\end{align}
As long as $0< p < 2\xi$, $\tilde{\alpha_i} = \Theta \left( m_2 h^{p-1}\right)$

\paragraph{Bounding $\vert \vert \tilde{e_i} \vert \vert_2$}

\begin{align*}
\tilde{e_i} &= \mathbb{E}_{S \in \mathbb{S}}\left [ \mathbf{1}_{i \in S} \times \left \{    m_2\sum_{\substack{j, k \in S \\ j \neq i}} ( W_i ^\top A^*_k) (W_j^\top A^*_k)  W_j
+ (-2D)m_2 \sum_{\substack{j \in S \\ j \neq i}} ( W_i^{\top} A_j^*) A_j^* \right \} \right ]\\
&+ \mathbb{E}_{S \in \mathbb{S}}\left [ \mathbf{1}_{i \in S} \times \left \{ m_2 \sum_{\substack{j,k \in S \\ k \neq i}} (W_i^\top W_j) (W_j^\top A^*_k) A^*_k \right  \} \right ]\\
&= \mathbb{E}_{S \in \mathbb{S}}\left [ \mathbf{1}_{i \in S} \times m_2\left \{    \sum_{j (=k) \in S\setminus i } ( W_i^{\top} A_j^*)(  W_j^{\top}  A_j^*) W_j + \sum_{\substack{j \in S\setminus i\\ k \in S \setminus i,j}} ( W_i^{\top}  A_k^*)( W_j^{\top}  A_k^*) W_j + \sum_{\substack{j \in S\setminus i \\ k=i}} ( W_i^{\top}  A_i^*)(  W_j^{\top}  A_i^*) W_j\right \} \right ]\\
&+ \mathbb{E}_{S \in \mathbb{S}}\left [ \mathbf{1}_{i \in S} \times (-2D)m_2 \left \{ \sum_{\substack{j \in S \\ j \neq i}} ( W_i^{\top} A_j^*) A_j^*  \right \} \right ]\\
&+\mathbb{E}_{S \in \mathbb{S}}\left [ \mathbf{1}_{i \in S} \times m_2 \left \{ \sum_{\substack{k(=j) \in S \setminus i}} (W_i^\top W_k) (W_k^\top A^*_k) A^*_k  + \sum_{\substack{k \in S\setminus i\\ j \in S \setminus i,k}} (W_i^\top W_j) (W_j^\top A^*_k) A^*_k + \sum_{\substack{k \in S\setminus i \\ j=i}} (W_i^\top W_i) (W_i^\top A^*_k) A^*_k \right \} \right ]
\end{align*}

\begin{align*}
\tilde{e_i} &= m_2 \Bigg \{ \sum_{j=1, j \neq i}^h ( W_i^{\top} A_j^*)(  W_j^{\top}  A_j^*) W_j \sum_{\{S \in \mathbb{S}: i,j \in S, i \neq j\}} q_S + \sum_{\substack{j,k=1 \\j \neq k \neq i}}^h ( W_i^{\top}  A_k^*)( W_j^{\top}  A_k^*) W_j \sum_{\{S \in \mathbb{S}: i,j,k \in S, i \neq j \neq k\}} q_S \\
&+ \sum_{\substack{j =1 \\ j \neq i}}^h ( W_i^{\top}  A_i^*)(  W_j^{\top}  A_i^*) W_j \sum_{\{S \in \mathbb{S}: i,j \in S, i \neq j\}} q_S \Bigg \}\\
&+ (-2D)m_2 \left \{ \sum_{\substack{j =1 \\ j \neq i}}^h ( W_i^{\top} A_j^*) A_j^* \sum_{\{S \in \mathbb{S}: i,j \in S, i \neq j\}} q_S \right \} \\
&+ m_2 \Bigg \{ \sum_{\substack{k=1 \\ k \neq i}}^h (W_i^\top W_k) (W_k^\top A^*_k) A^*_k \sum_{\{S \in \mathbb{S}: i,k \in S, i \neq k\}} q_S  + \sum_{\substack{j,k =1\\ j \neq i \neq k}}^h (W_i^\top W_j) (W_j^\top A^*_k) A^*_k \sum_{\{S \in \mathbb{S}: i,j,k \in S, i \neq j \neq k\}} q_S \\
&+ \sum_{\substack{k =1 \\ k \neq i}}^h (W_i^\top W_i) (W_i^\top A^*_k) A^*_k \sum_{\{S \in \mathbb{S}: i,k \in S, i \neq k\}} q_S \Bigg \}\\
&= m_2 \Bigg \{ \sum_{j=1, j \neq i}^h q_{ij} ( W_i^{\top} A_j^*)(  W_j^{\top}  A_j^*) W_j + \sum_{\substack{j,k=1 \\j \neq k \neq i}}^h q_{ijk} ( W_i^{\top}  A_k^*)( W_j^{\top}  A_k^*) W_j \\
&+ \sum_{\substack{j =1 \\ j \neq i}}^h q_{ij} ( W_i^{\top}  A_i^*)(  W_j^{\top}  A_i^*) W_j \Bigg \} + (-2D)m_2 \left \{ \sum_{\substack{j =1 \\ j \neq i}}^h q_{ij} ( W_i^{\top} A_j^*) A_j^*  \right \} \\
&+ m_2 \Bigg \{ \sum_{\substack{k=1 \\ k \neq i}}^h q_{ik} (W_i^\top W_k) (W_k^\top A^*_k) A^*_k  + \sum_{\substack{j,k =1\\ j \neq i \neq k}}^h q_{ijk} (W_i^\top W_j) (W_j^\top A^*_k) A^*_k \\
&+ \sum_{\substack{k =1 \\ k \neq i}}^h q_{ik} (W_i^\top W_i) (W_i^\top A^*_k) A^*_k \Bigg \}
\end{align*}

\newpage 

~\\
Upper bounding the norm of this vector $\tilde{e}_i$ we get, 

\begin{align}
\label{ei_m2}
\nonumber ||\tilde{e_i}|| &\leq m_2 h^{2p-1} \left( \delta + \frac{\mu}{\sqrt{n}} \right) (1+\delta)^2 + m_2 h^{3p-1} \left( \delta + \frac{\mu}{\sqrt{n}} \right)^2 (1+\delta) \\
\nonumber &+ m_2 h^{2p-1} \left( \delta + \frac{\mu}{\sqrt{n}} \right) (1+\delta)^2 + 2D m_2 h^{2p-1} \left( \delta + \frac{\mu}{\sqrt{n}} \right) \\
\nonumber &+ m_2 h^{2p-1} \left( \delta^2 + 2\delta + \frac{\mu}{\sqrt{n}} \right) (1+\delta) + m_2 h^{3p-1} \left( \delta^2 + 2\delta + \frac{\mu}{\sqrt{n}} \right) \left(\delta + \frac{\mu}{\sqrt{n}} \right) \\
\nonumber &+ m_2 h^{2p-1} \left( \delta + \frac{\mu}{\sqrt{n}} \right) (1+\delta)^2\\
\nonumber &\leq m_2 h^{2p-1} (h^{-p-\nu^2} +2h^{-2p-2\nu^2} + h^{-3p-3\nu^2} + 2h^{-p-\nu^2 -\xi} + h^{-2p-2\nu^2-\xi} + h^{-\xi}) \\
\nonumber &+ m_2 h^{3p-1} (h^{-2p-2\nu^2} + h^{-3p-3\nu^2} + 2h^{-p-\nu^2 -\xi} + 2h^{-2p-2\nu^2-\xi} + h^{-2\xi} + h^{-p -\nu^2 -2\xi}) \\
\nonumber &+ m_2 h^{2p-1} (h^{-p-\nu^2} +2h^{-2p-2\nu^2} + h^{-3p-3\nu^2} + 2h^{-p-\nu^2 -\xi} + h^{-2p-2\nu^2-\xi} + h^{-\xi})\\
\nonumber &+ 2D m_2 h^{2p-1} (h^{-p-\nu^2} +h^{-\xi}) \\
\nonumber &+ m_2 h^{2p-1} (2h^{-p-\nu^2} +3h^{-2p-2\nu^2} + h^{-3p-3\nu^2} + h^{-p-\nu^2 -\xi} + h^{-\xi}) \\
\nonumber &+ m_2 h^{3p-1} (2h^{-2p-2\nu^2} + h^{-3p-3\nu^2} + 3h^{-p-\nu^2 -\xi} + h^{-2p-2\nu^2-\xi} + h^{-2\xi}) \\
&+ m_2 h^{2p-1} (h^{-p-\nu^2} +2h^{-2p-2\nu^2} + h^{-3p-3\nu^2} + 2h^{-p-\nu^2 -\xi} + h^{-2p-2\nu^2-\xi} + h^{-\xi})
\end{align}

~\\
If $D=1$ and $0<p<\xi$, we get $||\tilde{e_i}|| = o(m_2h^{p-1})$

\newpage 
\subsection { Estimating the $m_1$ dependent parts of the derivative}

We continue working in the same regime for the $W$ matrix as in the previous subsection. Hence the same inequalities as listed at the beginning of the previous subsection continue to hold and we use them  to get the following bounds, 

\paragraph {Bounding $\hat{\alpha_i}$}

\begin{align*}
\hat{\alpha_i} &= \mathbb{E}_{S \in \mathbb{S}} \Bigg[ \mathbf{1}_{i \in S} \times \Bigg \{ m_1^2 \sum_{\substack{k, l \in S \\ k \neq l}}  ( W_i ^\top A^*_k ) ( W_i ^\top A^*_l) -2m_1 \sum_{k \in S}  \epsilon_i  ( W_i^\top A^*_k) + \epsilon_i^2 \Bigg \} \Bigg] \\
&= \mathbb{E}_{S \in \mathbb{S}} \Bigg[ \mathbf{1}_{i \in S} \times \Bigg \{ m_1^2 \sum_{\substack{k \in S \\ k \neq i}} \langle W_i, A^*_k \rangle \langle W_i, A^*_i\rangle + m_1^2 \sum_{\substack{l \in S \\ l \neq i}} \langle W_i, A^*_i \rangle \langle W_i, A^*_l\rangle + m_1^2 \sum_{\substack{k, l \in S \\ k \neq l \\ k \neq i \\ l \neq i}}  \langle W_i, A^*_k \rangle \langle W_i, A^*_l \rangle \\
&- 2m_1 \epsilon_i \langle W_i, A^*_i \rangle - 2m_1 \sum_{\substack{k \in S \\ k \neq i}} \epsilon_i \langle W_i, A^*_k \rangle + \epsilon_i^2 \Bigg \} \Bigg] \\
&= 2m_1^2 \sum_{\substack{k = 1 \\ k \neq i}}^h \langle W_i, A^*_k \rangle \langle W_i, A^*_i\rangle \sum_{\{S \in \mathbb{S} : i,k \in S, k \neq i \}}q_S + m_1^2 \sum_{\substack{k, l = 1 \\ k \neq l \\ k \neq i \\ l \neq i}}^h  \langle W_i, A^*_k \rangle \langle W_i, A^*_l \rangle \sum_{\{S \in \mathbb{S} : i,k,l \in S, k \neq i \neq l \}}q_S \\
&- 2m_1 \epsilon_i \langle W_i, A^*_i \rangle \sum_{\{S \in \mathbb{S} : i \in S \}}q_S - 2m_1 \sum_{\substack{k = 1 \\ k \neq i}}^h \epsilon_i \langle W_i, A^*_k \rangle \sum_{\{S \in \mathbb{S} : i,k \in S, k \neq i \}}q_S + \epsilon_i^2 \sum_{\{S \in \mathbb{S} : i \in S \}}q_S\\
\implies \hat{\alpha_i} &= 2m_1^2 \sum_{\substack{k = 1 \\ k \neq i}}^h q_{ik} \langle W_i, A^*_k \rangle \langle W_i, A^*_i\rangle + m_1^2 \sum_{\substack{k, l = 1 \\ k \neq l \\ k \neq i \\ l \neq i}}^h q_{ikl} \langle W_i, A^*_k \rangle \langle W_i, A^*_l \rangle \\
&- 2m_1 q_i \epsilon_i \langle W_i, A^*_i \rangle - 2m_1 \sum_{\substack{k = 1 \\ k \neq i}}^h q_{ik} \epsilon_i \langle W_i, A^*_k \rangle + q_i \epsilon_i^2
\end{align*}

~\\
We plugin $\epsilon_i = 2m_1 h^p \left( \delta + \frac{\mu}{\sqrt{n}} \right)$ for $i = 1, \ldots, h$

\begin{align*}
|\hat{\alpha_i}| &\leq 2m_1^2 h^{2p-1} \left( \delta + \frac{\mu}{\sqrt{n}}\right) (1+\delta) + m_1^2 h^{3p-1} \left( \delta + \frac{\mu}{\sqrt{n}} \right)^2 + 4m_1^2 h^{2p-1} (1+\delta) \left( \delta + \frac{\mu}{\sqrt{n}} \right) \\
&+ 4m_1^2 h^{3p-1} \left( \delta + \frac{\mu}{\sqrt{n}} \right)^2 + 4m_1^2 h^{3p-1} \left( \delta + \frac{\mu}{\sqrt{n}} \right)^2\\
&= 2m_1^2 h^{2p-1} (h^{-p-\nu^2} + h^{-2p-2\nu^2} + h^{-p-\nu^2 -\xi} + h^{-\xi}) + m_1^2 h^{3p-1} (h^{-2p-2\nu^2} + 2h^{-p-\nu^2 -\xi} + h^{-2\xi}) \\
&+4m_1^2 h^{2p-1} (h^{-p-\nu^2} + h^{-2p-2\nu^2} + h^{-\xi} + h^{-p-\nu^2 -\xi}) + 4m_1^2 h^{3p-1} (h^{-2p-2\nu^2} + 2h^{-p-\nu^2 -\xi} + h^{-2\xi}) \\
&+ 4m_1^2 h^{3p-1} (h^{-2p-2\nu^2} + 2h^{-p-\nu^2 -\xi} + h^{-2\xi})
\end{align*}
~\\
This means that if $p < \xi$, $|\hat{\alpha_i}| = o( m_1^2 h^{p-1} )$. Putting  this together with the bounds obtained below \ref{alpha_m2}, we get that $\alpha_i = \Theta(m_2h^{p-1}) + o( m_1^2 h^{p-1} )$.

\paragraph{Bounding $\hat{\beta_i}$}

\begin{align*}
\hat{\beta_i} &= \mathbb{E}_{S \in \mathbb{S}} \Bigg[ \mathbf{1}_{i \in S} \times \Bigg \{  2 D m_1^2 \sum_{\substack{k \in S \\ k \neq i}} ( W_i^{\top} A_k^*) - D m_1 \epsilon_i + m_1 \sum_{j \in S} \epsilon_j (W_i^\top W_j) - m_1^2 \sum_{\substack{j,l \in S \\ l \neq i}} (W_i^\top W_j) (W_j^{\top} A_l^*) \Bigg \} \Bigg ] \\
&= 2D m_1^2 \sum_{\substack{k = 1 \\ k \neq i}}^h \langle W_i, A_k^* \rangle \sum_{\{S \in \mathbb{S} : i,k \in S, k \neq i \}}q_S -Dm_1 \epsilon_i \sum_{\{S \in \mathbb{S} : i \in S \}}q_S + m_1 \epsilon_i ||W_i||^2 \sum_{\{S \in \mathbb{S} : i \in S \}}q_S \\
&+m_1\sum_{j =1, j\neq i}^h \epsilon_j \langle W_i, W_j \rangle \sum_{\{S \in \mathbb{S} : i,j \in S, j \neq i \}}q_S - m_1^2 \sum_{\substack{l =1 \\ l \neq i}}^h ||W_i||^2 \langle W_i, A_l^* \rangle \sum_{\{S \in \mathbb{S} : i,l \in S, l \neq i \}}q_S \\
&-m_1^2 \sum_{\substack{l =1 \\ l \neq i}}^h \langle W_i, W_l \rangle \langle W_l, A_l^*\rangle \sum_{\{S \in \mathbb{S} : i,l \in S, l \neq i \}}q_S - m_1^2 \sum_{\substack{j,l =1 \\ l \neq i \\ j \neq l,i}}^h \langle W_i, W_j\rangle \langle W_j, A_l^* \rangle \sum_{\{S \in \mathbb{S} : i,j,l \in S, l \neq i \neq i \}}q_S \\
&= 2D m_1^2 \sum_{\substack{k = 1 \\ k \neq i}}^h q_{ik} \langle W_i, A_k^* \rangle -Dm_1 \epsilon_i q_i + m_1 \epsilon_i ||W_i||^2 q_i +m_1\sum_{j =1, j\neq i}^h \epsilon_j q_{ij} \langle W_i, W_j \rangle - m_1^2 \sum_{\substack{l =1 \\ l \neq i}}^h ||W_i||^2 \langle W_i, A_l^* \rangle q_{il} \\
&-m_1^2 \sum_{\substack{l =1 \\ l \neq i}}^h \langle W_i, W_l \rangle \langle W_l, A_l^*\rangle q_{il} - m_1^2 \sum_{\substack{j,l =1 \\ l \neq i \\ j \neq l,i}}^h \langle W_i, W_j\rangle \langle W_j, A_l^* \rangle q_{ijl}
\end{align*}
~\\
We plugin $\epsilon_i = 2m_1 h^p \left( \delta + \frac{\mu}{\sqrt{n}} \right)$ for $i = 1, \ldots, h$

\begin{align*}
|\hat{\beta_i}| &\leq 4Dm_1^2 h^{2p-1} \left(\delta + \frac{\mu}{\sqrt{n}} \right) + 2m_1^2 h^{2p-1} \left( \delta + \frac{\mu}{\sqrt{n}} \right) (1+\delta)^2 +  2m_1^2 h^{3p-1} \left( \delta + \frac{\mu}{\sqrt{n}} \right) \left( \delta^2 + 2\delta + \frac{\mu}{\sqrt{n}}\right)\\
&+ m_1^2 h^{2p-1}(1+\delta)^2\left( \delta + \frac{\mu}{\sqrt{n}} \right) + m_1^2 h^{2p-1} \left( \delta^2 + 2\delta + \frac{\mu}{\sqrt{n}}\right) (1+\delta) \\
&+ m_1^2 h^{3p-1} \left( \delta^2 + 2\delta + \frac{\mu}{\sqrt{n}}\right) \left(\delta + \frac{\mu}{\sqrt{n}}\right)\\
&= 4Dm_1^2 h^{2p-1} (h^{-p-\nu^2} + h^{-\xi} ) + 2m_1^2 h^{2p-1} (h^{-p -\nu^2} + 2h^{-2p -2\nu^2} + h^{-3p -3\nu^2} + h^{-\xi} + 2h^{-p -\nu^2-\xi} + h^{-2p -2\nu^2 - \xi})\\
&+  2m_1^2 h^{3p-1} (2h^{-2p -2\nu^2} + h^{-3p -3\nu^2} + 3h^{-p -\nu^2-\xi} + h^{-2p -2\nu^2 - \xi} + h^{-2\xi})\\
&+ m_1^2 h^{2p-1} (h^{-p -\nu^2} + 2h^{-2p -2\nu^2} + h^{-3p -3\nu^2} + h^{-\xi} + 2h^{-p -\nu^2-\xi} + h^{-2p -2\nu^2 - \xi})\\
&+ m_1^2 h^{2p-1}  (3h^{-2p -2\nu^2} + h^{-3p -3\nu^2} + h^{-p -\nu^2-\xi} + 2h^{-p -\nu^2} + h^{-\xi}) \\
&+ m_1^2 h^{3p-1}  (2h^{-2p -2\nu^2} + h^{-3p -3\nu^2} + 3h^{-p -\nu^2-\xi} + h^{-2p -2\nu^2 - \xi} + h^{-2\xi})
\end{align*}
~\\
This means that if $p < \xi$, $|\hat{\beta_i}| = o( m_1^2 h^{p-1} )$. Putting  this together with the bounds obtained below \ref{beta_m2}, we get that $\beta_i = \Theta(m_2h^{p-1}) + o( m_1^2 h^{p-1} )$.

\paragraph{Bounding $\vert \vert \hat{e_i} \vert \vert_2$}

\begin{align*}
\hat{e_i} &= \underbrace{\mathbb{E}_{S \in \mathbb{S}}\left [ \mathbf{1}_{i \in S} \times \left \{ \sum_{\substack{j \in S \\ j \neq i}} \epsilon_i \epsilon_j W_j
-m_1 \sum_{\substack{j, k \in S \\ j \neq i}}  ( W_j^\top A^*_k)  W_j \epsilon_i
-m_1 \sum_{\substack{j, k \in S \\ j \neq i}} \epsilon_j (W_i^\top A^*_k) W_j \right \}  \right ]}_{\hat{e_{i1}}} \\
&+ \underbrace{\mathbb{E}_{S \in \mathbb{S}}\left [ \mathbf{1}_{i \in S} \times \left \{
m_1^2 \sum_{\substack{j, k, l \in S \\ j \neq i \\ k \neq l}}  ( W_i ^\top A^*_k ) ( W_j ^\top A^*_l)  W_j\right \} \right ]}_{\hat{e_{i2}}}\\
&+ \underbrace{\mathbb{E}_{S \in \mathbb{S}}\left [ \mathbf{1}_{i \in S} \times \left \{ -2D m_1^2 \sum_{\substack{j, k \in S \\ j \neq i \\ k \neq i}} ( W_i^{\top} A_k^*) A_j^*
+ D m_1 \sum_{\substack{j \in S \\ j \neq i}} \epsilon_i A_j^*\right \} \right ]}_{\hat{e_{i3}}}\\
&+\underbrace{\mathbb{E}_{S \in \mathbb{S}}\left [ \mathbf{1}_{i \in S} \times \left \{ - m_1 \sum_{\substack{j,k \in S \\ k \neq i}} \epsilon_j  (W_i^\top W_j) A^*_k
+ m_1^2 \sum_{\substack{j,k,l \in S \\ k \neq i,l}} (W_i^\top W_j) (W_j^{\top} A_l^*)A^*_k\right \} \right ]}_{\hat{e_{i4}}}
\end{align*}

~\\
We estimate the different summands separately. 

\begin{align*}
\hat{e_{i1}} &= \mathbb{E}_{S \in \mathbb{S}}\left [ \mathbf{1}_{i \in S} \times \left \{ \sum_{\substack{j \in S \\ j \neq i}} \epsilon_i \epsilon_j W_j \right \}  \right ]\\
&+\mathbb{E}_{S \in \mathbb{S}}\left [ \mathbf{1}_{i \in S} \times (-m_1)\left \{ \sum_{\substack{j(=k) \in S \setminus  i}}  ( W_j^\top A^*_j)  W_j \epsilon_i + \sum_{\substack{j \in S \setminus i \\ k \in S \setminus i,j}}  ( W_j^\top A^*_k)  W_j \epsilon_i  + \sum_{\substack{j \in S \setminus i \\ k =i}}  ( W_j^\top A^*_i)  W_j \epsilon_i \right \}  \right ]\\
&+\mathbb{E}_{S \in \mathbb{S}}\left [ \mathbf{1}_{i \in S} \times (-m_1)\left \{ \sum_{\substack{j(=k) \in S \setminus i}} \epsilon_j (W_i^\top A^*_j) W_j + \sum_{\substack{ j \in S \setminus i \\ k \in S \setminus i,j}} \epsilon_j (W_i^\top A^*_k) W_j + \sum_{\substack{j \in S \setminus i \\ k=i }} \epsilon_j (W_i^\top A^*_i) W_j \right \} \right ]
\end{align*}

~\\
We substitute, $\epsilon = 2m_1h^p(h^{-p-\nu^2}+h^{-\xi})$ and for any two vectors $\x$ and $\y$ and any two scalars $a$ and $b$ we use the inequality, $\vert \vert a \x + b \y \vert \vert_2 \leq \vert a\vert_{max}\vert \vert \x \vert \vert _{2,max} + \vert b \vert _{max} \vert \vert \y \vert \vert _{2,max} $to get, 

~\\
\begin{align*}
\vert \vert \hat{e_{i1}} \vert \vert_2 &\leq 4m_1^2 h^{2p} \left( \delta+\frac{\mu}{\sqrt{n}} \right)^2 \sum_{j=1, j \neq i}^h q_{ij} ||W_j|| \\
&+ 2m_1^2 h^p \left( \delta+\frac{\mu}{\sqrt{n}} \right) \left( \sum_{j=1, j \neq i}^h q_{ij} \langle W_j, A^*_j\rangle W_j + \sum_{j,k=1, j \neq i, k \neq i,j}^h q_{ijk} \langle W_j, A^*_k\rangle W_j  + \sum_{j=1, j \neq i}^h q_{ij} \langle W_j, A^*_i\rangle W_j \right) \\
&+ 2m_1^2 h^p \left( \delta+\frac{\mu}{\sqrt{n}} \right) \left( \sum_{j=1, j \neq i}^h q_{ij} \langle W_i, A^*_j\rangle W_j + \sum_{j,k=1, j \neq i, k \neq i,j}^h q_{ijk} \langle W_i, A^*_k\rangle W_j  + \sum_{j=1, j \neq i}^h q_{ij} \langle W_i, A^*_i\rangle W_j \right)\\
\implies \vert \vert \hat{e_{i1}} \vert \vert_2 &\leq 4m_1^2 h^{2p} h^{2p-1}(1+\delta)\left( \delta+\frac{\mu}{\sqrt{n}} \right)^2 \\
&+ 2m_1^2 h^p \left( \delta+\frac{\mu}{\sqrt{n}} \right) \left( h^{2p-1}(1+\delta)^2 + h^{3p-1}\left(\delta +\frac{\mu}{\sqrt{n}} \right) (1+\delta)  + h^{2p-1}\left(\delta +\frac{\mu}{\sqrt{n}} \right) (1+\delta) \right) \\
&+ 2m_1^2 h^p \left( \delta+\frac{\mu}{\sqrt{n}} \right) \left( h^{2p-1}\left(\delta +\frac{\mu}{\sqrt{n}} \right) (1+\delta) + h^{3p-1}\left(\delta +\frac{\mu}{\sqrt{n}} \right) (1+\delta)  + h^{2p-1}(1+\delta)^2 \right)\\
\implies \vert \vert \hat{e_{i1}} \vert \vert_2 &\leq 4m_1^2 h^{4p-1}(1+\delta) \left( \delta+\frac{\mu}{\sqrt{n}} \right)^2 \\
&+ 2m_1^2 h^{3p-1} \left( \delta+\frac{\mu}{\sqrt{n}} \right) (1+\delta)^2 + 2m_1^2 h^{4p-1}\left(\delta +\frac{\mu}{\sqrt{n}} \right)^2 (1+\delta)  + 2m_1^2 h^{3p-1}\left(\delta +\frac{\mu}{\sqrt{n}} \right)^2 (1+\delta) \\
&+ 2m_1^2 h^{3p-1}\left(\delta +\frac{\mu}{\sqrt{n}} \right)^2 (1+\delta) + 2m_1^2 h^{4p-1}\left(\delta +\frac{\mu}{\sqrt{n}} \right)^2 (1+\delta)  + 2m_1^2 h^{3p-1}\left( \delta+\frac{\mu}{\sqrt{n}} \right)(1+\delta)^2\\
\implies \vert \vert \hat{e_{i1}} \vert \vert_2 &\leq 8m_1^2 h^{4p-1}(1+\delta) \left( \delta+\frac{\mu}{\sqrt{n}} \right)^2 + 4m_1^2 h^{3p-1} \left( \delta+\frac{\mu}{\sqrt{n}} \right) (1+\delta)^2 + 4m_1^2 h^{3p-1}\left(\delta +\frac{\mu}{\sqrt{n}} \right)^2 (1+\delta)
\implies \vert \vert \hat{e_{i1}} \vert \vert_2 &\leq 8m_1^2 h^{4p-1} (h^{-2p-2\nu^2} + h^{-3p-3\nu^2} + 2h^{-p-\nu^2 -\xi} + 2h^{-2p-2\nu^2 -\xi} + h^{-p-\nu^2 -2\xi} + h^{-2\xi}) \\
&+ 4m_1^2 h^{3p-1} (h^{-p-\nu^2} + h^{-3p-3\nu^2} + 2h^{-2p-2\nu^2} + h^{-\xi} + h^{-2p-2\nu^2 -\xi} + 2h^{-p-\nu^2 -\xi}) \\
&+ 4m_1^2 h^{3p-1}(h^{-2p-2\nu^2} + h^{-3p-3\nu^2} + 2h^{-p-\nu^2 -\xi} + 2h^{-2p-2\nu^2 -\xi} + h^{-p-\nu^2 -2\xi} + h^{-2\xi}) \\
&= 8m_1^2 h^{p-1} (h^{p-2\nu^2} + h^{-3\nu^2} + 2h^{p-\nu^2 + p-\xi} + 2h^{-2\nu^2 + p -\xi} + h^{-\nu^2 + 2p -2\xi} + h^{3p-2\xi}) \\
&+ 4m_1^2 h^{p-1} (h^{p-\nu^2} + h^{-p-3\nu^2} + 2h^{-2\nu^2} + h^{2p-\xi} + h^{-2\nu^2 -\xi} + 2h^{-\nu^2 +p-\xi}) \\
&+ 4m_1^2 h^{p-1}(h^{-2\nu^2} + h^{-p-3\nu^2} + 2h^{-\nu^2 + p -\xi} + 2h^{-2\nu^2 -\xi} + h^{-\nu^2 +p-2\xi} + h^{2p-2\xi})
\end{align*}
~\\
From the above it follows that, $\vert \vert \hat{e_{i1}} \vert \vert_2 = o(m_1^2h^{p-1})$ for $p < \nu^2$ and $2p < \xi$ .

~\\
\begin{align*}
\hat{e_{i2}} &= \mathbb{E}_{S \in \mathbb{S}}\left [ \mathbf{1}_{i \in S} \times m_1^2 \left \{ \sum_{\substack{j, k, l \in S \\ j \neq i \\ k \neq l}}  ( W_i ^\top A^*_k ) ( W_j ^\top A^*_l)  W_j\right \} \right ]\\
&= \mathbb{E}_{S \in \mathbb{S}}\Bigg [ \mathbf{1}_{i \in S} \times m_1^2 \Bigg \{ \sum_{\substack{j \in S \\ j \neq i}}  ( W_i ^\top A^*_j ) ( W_j ^\top A^*_i)  W_j + \sum_{\substack{j,k \in S \\ k \neq j \neq i}}  ( W_i ^\top A^*_k ) ( W_j ^\top A^*_i)  W_j + \sum_{\substack{j \in S \\ j \neq i}}  ( W_i ^\top A^*_i ) ( W_j ^\top A^*_j)  W_j \\
&+ \sum_{\substack{j,l \in S \\ l \neq j \neq i}}  ( W_i ^\top A^*_i ) ( W_j ^\top A^*_l)  W_j + \sum_{\substack{j,l \in S \\ l \neq j \neq i}}  ( W_i ^\top A^*_j ) ( W_j ^\top A^*_l)  W_j + \sum_{\substack{j,k \in S \\ k \neq j \neq i}}  ( W_i ^\top A^*_k ) ( W_j ^\top A^*_j)  W_j \\
&+ \sum_{\substack{j,k,l \in S \\ l \neq k \neq j \neq i}}  ( W_i ^\top A^*_k ) ( W_j ^\top A^*_l)  W_j \Bigg \} \Bigg ]\\
\implies \hat{e_{i2}} &= m_1^2 \Bigg \{ \sum_{\substack{j =1 \\ j \neq i}}^h q_{ij} ( W_i ^\top A^*_j ) ( W_j ^\top A^*_i)  W_j + \sum_{\substack{j,k =1 \\ k \neq j \neq i}}^h q_{ijk} ( W_i ^\top A^*_k ) ( W_j ^\top A^*_i)  W_j + \underbrace{\sum_{\substack{j =1 \\ j \neq i}}^h q_{ij} ( W_i ^\top A^*_i ) ( W_j ^\top A^*_j)  W_j}_{\mathbf{a}} \\
&+ \sum_{\substack{j,l =1 \\ l \neq j \neq i}}^h q_{ijl} ( W_i ^\top A^*_i ) ( W_j ^\top A^*_l)  W_j + \sum_{\substack{j,l =1 \\ l \neq j \neq i}}^h q_{ijl} ( W_i ^\top A^*_j ) ( W_j ^\top A^*_l)  W_j + \sum_{\substack{j,k =1 \\ k \neq j \neq i}}^h q_{ijk} ( W_i ^\top A^*_k ) ( W_j ^\top A^*_j)  W_j \\
&+ \sum_{\substack{j,k,l \in S \\ l \neq k \neq j \neq i}} q_{ijkl} ( W_i ^\top A^*_k ) ( W_j ^\top A^*_l)  W_j \Bigg \}\\
\implies ||\hat{e_{i2}}|| &\leq m_1^2 \Bigg \{ h^{2p-1} \left(\delta + \frac{\mu}{\sqrt{n}} \right)^2(1+\delta) + h^{3p-1} \left(\delta + \frac{\mu}{\sqrt{n}} \right)^2 (1+\delta) + ||\mathbf{a}|| \\
&+ h^{3p-1} \left(\delta + \frac{\mu}{\sqrt{n}} \right)(1+\delta)^2 + h^{3p-1} \left(\delta + \frac{\mu}{\sqrt{n}} \right)^2 (1+\delta) + h^{3p-1} \left(\delta + \frac{\mu}{\sqrt{n}} \right) (1+\delta)^2 \\
&+ h^{4p-1} \left(\delta + \frac{\mu}{\sqrt{n}} \right)^2 (1+\delta) \Bigg \}\\
\implies ||\hat{e_{i2}}|| &\leq m_1^2 \Bigg \{ h^{2p-1} (h^{-2p-2\nu^2} + h^{-3p-3\nu^2} + 2h^{-p-\nu^2 -\xi} + 2h^{-2p-2\nu^2 -\xi} + h^{-p-\nu^2 -2\xi} + h^{-2\xi}) \\
&+ h^{3p-1} (h^{-2p-2\nu^2} + h^{-3p-3\nu^2} + 2h^{-p-\nu^2 -\xi} + 2h^{-2p-2\nu^2 -\xi} + h^{-p-\nu^2 -2\xi} + h^{-2\xi}) \\
&+ ||\mathbf{a}||\\
&+ h^{3p-1} (h^{-p-\nu^2} + h^{-3p-3\nu^2} + 2h^{-2p-2\nu^2} + h^{-2p-2\nu^2 -\xi} + 2h^{-p-\nu^2 -\xi} + h^{-\xi})  \\
&+ h^{3p-1} (h^{-2p-2\nu^2} + h^{-3p-3\nu^2} + 2h^{-p-\nu^2 -\xi} + 2h^{-2p-2\nu^2 -\xi} + h^{-p-\nu^2 -2\xi} + h^{-2\xi}) \\
&+ h^{3p-1} (h^{-p-\nu^2} + h^{-3p-3\nu^2} + 2h^{-2p-2\nu^2} + h^{-2p-2\nu^2 -\xi} + 2h^{-p-\nu^2 -\xi} + h^{-\xi}) \\
&+ h^{4p-1} (h^{-2p-2\nu^2} + h^{-3p-3\nu^2} + 2h^{-p-\nu^2 -\xi} + 2h^{-2p-2\nu^2 -\xi} + h^{-p-\nu^2 -2\xi} + h^{-2\xi})  \Bigg \}
\end{align*}

\newpage 

\begin{align*}
\implies ||\hat{e_{i2}}|| &\leq m_1^2 \Bigg \{ h^{p-1} (h^{-p-2\nu^2} + h^{-2p-3\nu^2} + 2h^{-\nu^2 -\xi} + 2h^{-p-2\nu^2 -\xi} + h^{-\nu^2 -2\xi} + h^{p-2\xi}) \\
&+ h^{p-1} (h^{-2\nu^2} + h^{-p-3\nu^2} + 2h^{-\nu^2 + p -\xi} + 2h^{-2\nu^2 -\xi} + h^{-\nu^2 + p -2\xi} + h^{2p-2\xi}) \\
&+ ||\mathbf{a}||\\
&+ h^{p-1} (h^{p-\nu^2} + h^{-p-3\nu^2} + 2h^{-2\nu^2} + h^{-2\nu^2 -\xi} + 2h^{-\nu^2 + p -\xi} + h^{2p-\xi})  \\
&+ h^{p-1} (h^{-2\nu^2} + h^{-p-3\nu^2} + 2h^{-\nu^2 + p -\xi} + 2h^{-2\nu^2 -\xi} + h^{-\nu^2 + p -2\xi} + h^{2p-2\xi}) \\
&+ h^{p-1} (h^{p-\nu^2} + h^{-2p-3\nu^2} + 2h^{-2\nu^2} + h^{-2\nu^2 -\xi} + 2h^{-\nu^2 + p -\xi} + h^{2p-\xi}) \\
&+ h^{p-1} (h^{p-2\nu^2} + h^{-3\nu^2} + 2h^{p-\nu^2 + p -\xi} + 2h^{-2\nu^2 + p -\xi} + h^{-\nu^2 + 2p -2\xi} + h^{3p-2\xi})  \Bigg \}
\end{align*}

~\\
Now let us find a bound for $||\mathbf{a}||$.
\begin{align*}
\mathbf{a} &= \sum_{\substack{j =1 \\ j \neq i}}^h q_{ij} ( W_i ^\top A^*_i ) ( W_j ^\top A^*_j)  W_j \\
&= \langle W_i, A_i^* \rangle q_{ij} W_{-j}^\top \textrm{diag} (W_{-j} A^*_{-j}) 
\end{align*}
Where $A^*_{-j}$ is the dictionary $A^*$ with the $j$th column set to zero, $W_{-j}$ is the dictionary $W$ with the $j$th row set to zero, and $\textrm{diag} (W_{-j} A^*_{-j})$ is the $h$-dimensional vector containing the diagonal elements of the matrix $W_{-j} A^*_{-j}$. We also make use of the distributional assumption that $q_{ij}$ is the same for all $i,j$ in order to pull $q_{ij}$ out of the sum.
\begin{align*}
||\mathbf{a}||_2 &= h^{2p-2} \langle W_i, A_i^* \rangle \vert \vert W_{-j}^\top \textrm{diag} (W_{-j} A^*_{-j}) \vert \vert_2 \\
&\leq h^{2p-2}(1+\delta) ||W_{-j}^\top||_2 ||\textrm{diag} (W_{-j} A^*_{-j})||_2 \\
&\leq h^{2p-2}(1+\delta)^2 h^{1/2} \sqrt{\lambda_{\textrm{max}} (W^{\top}_{-j} W_{-j})} \\
&\leq h^{2p-2}(1+\delta)^2 h^{1/2} \sqrt{h \left( \delta ^2 + 2\delta + \frac{\mu}{\sqrt{n}} \right) + (1+\delta)^2 } \\
&= h^{p-1} \sqrt{h^{2p-2} \times h \times (1+\delta)^4 \times \left( h \left( \delta ^2 + 2\delta + \frac{\mu}{\sqrt{n}} \right) + (1+\delta)^2 \right)} \\
&= h^{p-1} \sqrt{h^{2p-1} \times (1+h^{-p-\nu^2})^4 \times \left( h (h^{-2p-2\nu^2} + 2h^{-p-\nu^2} + h^{-\xi} ) + (1+h^{-p-\nu^2})^2 \right)} \\
&= h^{p-1} \sqrt{(1+h^{-p-\nu^2})^4 \times ( h^{-2\nu^2} + 2h^{p-\nu^2} + h^{2p-\xi} + h^{2p-1}(1+h^{-p-\nu^2})^2)}
\end{align*}
Here $||W_{-j}^\top||_2$ is the spectral norm of $W_{-j}^\top$, and is the top singular value of the matrix. We use Gershgorin's Circle theorem to bound the top eigenvalue of $W^{\top}_{-j}W_{-j}$ by its maximum row sum.

~\\
If $p < \frac{\xi}{2}$, $p < \frac{1}{2}$, and $p < \nu^2$, then $||\hat{e_{i2}}|| = o(m_1^2 h^{p-1})$

\newpage
\begin{align*}
\hat{e_{i3}} &= \mathbb{E}_{S \in \mathbb{S}}\left [ \mathbf{1}_{i \in S} \times \left \{ D m_1 \sum_{\substack{j \in S \\ j \neq i}} \epsilon_i A_j^* - 2D m_1^2 \sum_{\substack{j, k \in S \\ j \neq i \\ k \neq i}} ( W_i^{\top} A_k^*) A_j^* \right \} \right ] \\
&= \mathbb{E}_{S \in \mathbb{S}}\left [ \mathbf{1}_{i \in S} \times \left \{ D m_1 \sum_{\substack{j \in S \\ j \neq i}} \epsilon_i A_j^* - 2D m_1^2 \sum_{\substack{j \in S \\ j \neq i}} ( W_i^{\top} A_j^*) A_j^* -2D m_1^2 \sum_{\substack{j, k \in S \\ k \neq j \neq i}} ( W_i^{\top} A_k^*) A_j^* \right \} \right ] \\
&= D m_1 \sum_{\substack{j =1 \\ j \neq i}}^h \epsilon_i A_j^* \sum_{\{S \in \mathbb{S}: i,j \in S, i\neq j \} } q_S - 2D m_1^2 \sum_{\substack{j =1 \\ j \neq i}}^h ( W_i^{\top} A_j^*) A_j^* \sum_{\{S \in \mathbb{S}: i,j \in S, i\neq j \} } q_S \\
&- 2D m_1^2 \sum_{\substack{j,k =1 \\ k \neq j \neq i}}^h ( W_i^{\top} A_k^*) A_j^* \sum_{\{S \in \mathbb{S}: i,j,k \in S, i\neq j \neq k \} } q_S \\
&= D m_1 \sum_{\substack{j =1 \\ j \neq i}}^h q_{ij} \epsilon_i A_j^* - 2D m_1^2 \sum_{\substack{j =1 \\ j \neq i}}^h q_{ij} ( W_i^{\top} A_j^*) A_j^* - 2D m_1^2 \sum_{\substack{j,k =1 \\ k \neq j \neq i}}^h q_{ijk} ( W_i^{\top} A_k^*) A_j^*
\end{align*}

~\\
We plugin $\epsilon_i = 2m_1 h^p \left( \delta + \frac{\mu}{\sqrt{n}} \right)$ for $i = 1, \ldots, h$

\begin{align*}
||\hat{e_{i3}}|| &\leq 2Dm_1^2 h^{3p-1} \left( \delta + \frac{\mu}{\sqrt{n}} \right) + 2Dm_1^2 h^{2p-1} \left( \delta + \frac{\mu}{\sqrt{n}} \right) + 2D m_1^2 h^{3p-1} \left( \delta + \frac{\mu}{\sqrt{n}} \right) \\
&= 4Dm_1^2 h^{3p-1} (h^{-p-\nu^2} + h^{-\xi}) + 2Dm_1^2 h^{2p-1} (h^{-p-\nu^2} + h^{-\xi}) \\
&= 4Dm_1^2 h^{p-1} (h^{p-\nu^2} + h^{2p-\xi}) + 2Dm_1^2 h^{p-1} (h^{-\nu^2} + h^{p-\xi})
\end{align*}

~\\
This means for $D=1$, $p < \nu^2$ and $p < \frac{\xi}{2}$, we have $||\hat{e_{i3}}|| = o(m_1^2 h^{p-1})$



\newpage 
~\\
\begin{align*}
\hat{e_{i4}} &= \mathbb{E}_{S \in \mathbb{S}}\left [ \mathbf{1}_{i \in S} \times \left \{ - m_1 \sum_{\substack{j,k \in S \\ k \neq i}} \epsilon_j  (W_i^\top W_j) A^*_k
+ m_1^2 \sum_{\substack{j,k,l \in S \\ k \neq i,l}} (W_i^\top W_j) (W_j^{\top} A_l^*)A^*_k\right \} \right ]\\
&= \mathbb{E}_{S \in \mathbb{S}}\left [ \mathbf{1}_{i \in S} \times (-m_1) \left \{  \sum_{k(=j) \in S \setminus i} \epsilon_k  (W_i^\top W_k) A^*_k + \sum_{\substack{j \in S \setminus i \\ k \in S \setminus i,j}} \epsilon_j  (W_i^\top W_j) A^*_k + \sum_{\substack{k \in S \setminus i \\ j = i}} \epsilon_j  (W_i^\top W_i) A^*_k \right \} \right ]\\
&+ \mathbb{E}_{S \in \mathbb{S}}\left [ \mathbf{1}_{i \in S} \times m_1^2 \left \{  \sum_{\substack{j,k,l \in S \\ k \neq i,l}} (W_i^\top W_j) (W_j^{\top} A_l^*)A^*_k\right \} \right ]\\
&=\mathbb{E}_{S \in \mathbb{S}}\left [ \mathbf{1}_{i \in S} \times (-m_1) \left \{  \sum_{k(=j) \in S \setminus i} \epsilon_k  (W_i^\top W_k) A^*_k + \sum_{\substack{j \in S \setminus i \\ k \in S \setminus i,j}} \epsilon_j  (W_i^\top W_j) A^*_k + \sum_{\substack{k \in S \setminus i \\ j = i}} \epsilon_j  (W_i^\top W_i) A^*_k \right \} \right ]\\
&+ \mathbb{E}_{S \in \mathbb{S}}\Bigg [ \mathbf{1}_{i \in S} \times m_1^2 \Bigg \{  \sum_{\substack{k \in S \\ k \neq i}} (W_i^\top W_i) (W_i^{\top} A_i^*)A^*_k  + \sum_{\substack{k \in S \\ k \neq i}} (W_i^\top W_k) (W_k^{\top} A_i^*)A^*_k + \sum_{\substack{j,k \in S \\ j \neq k \neq i}} (W_i^\top W_j) (W_j^{\top} A_i^*)A^*_k \\
&+ \sum_{\substack{k,l \in S \\ \ \neq k \neq i}} (W_i^\top W_i) (W_i^{\top} A_l^*)A^*_k + \sum_{\substack{k,l \in S \\ l \neq k \neq i}} (W_i^\top W_k) (W_k^{\top} A_l^*)A^*_k + \sum_{\substack{k,l \in S \\ l \neq k \neq i}} (W_i^\top W_l) (W_l^{\top} A_l^*)A^*_k \\
&+ \sum_{\substack{j,k,l \in S \\ j \neq k \neq l \neq i}} (W_i^\top W_j) (W_j^{\top} A_l^*)A^*_k
\Bigg \} \Bigg ]\\
\hat{e_{i4}} &= (-m_1) \left \{  \sum_{k=1, k \neq i}^h q_{ik} \epsilon_k  (W_i^\top W_k) A^*_k + \sum_{\substack{j,k =1 \\ j \neq k \neq i}}^h q_{ijk} \epsilon_j  (W_i^\top W_j) A^*_k + \sum_{\substack{k =1 \\ k \neq i}}^h q_{ik} \epsilon_i  (W_i^\top W_i) A^*_k \right \}\\
&+ m_1^2 \Bigg \{  \underbrace{\sum_{\substack{k =1 \\ k \neq i}}^h q_{ik} (W_i^\top W_i) (W_i^{\top} A_i^*)A^*_k}_{\mathbf{b}} + \sum_{\substack{k =1 \\ k \neq i}}^h q_{ik} (W_i^\top W_k) (W_k^{\top} A_i^*)A^*_k + \sum_{\substack{j,k =1 \\ j \neq k \neq i}}^h q_{ijk} (W_i^\top W_j) (W_j^{\top} A_i^*)A^*_k \\
&+ \sum_{\substack{k,l =1 \\ l \neq k \neq i}}^h q_{ikl} (W_i^\top W_i) (W_i^{\top} A_l^*)A^*_k + \sum_{\substack{k,l =1 \\ l \neq k \neq i}}^h q_{ikl} (W_i^\top W_k) (W_k^{\top} A_l^*)A^*_k + \sum_{\substack{k,l =1 \\ l \neq k \neq i}}^h q_{ikl} (W_i^\top W_l) (W_l^{\top} A_l^*)A^*_k \\
&+ \sum_{\substack{j,k,l =1 \\ j \neq k \neq l \neq i}}^h q_{ijkl} (W_i^\top W_j) (W_j^{\top} A_l^*)A^*_k
\Bigg \}
\end{align*}

\newpage 
~\\
We plugin $\epsilon_i = 2m_1 h^p \left( \delta + \frac{\mu}{\sqrt{n}} \right)$ for $i = 1, \ldots, h$ in the above to get,

\begin{align*}
||\hat{e_{i4}}|| &\leq 2m_1^2 h^{3p-1} \left( \delta + \frac{\mu}{\sqrt{n}} \right)^2 + 2m_1^2 h^{4p-1} \left( \delta + \frac{\mu}{\sqrt{n}} \right) \left( \delta^2 + 2\delta + \frac{\mu}{\sqrt{n}} \right) + 2m_1^2 h^{3p-1} \left( \delta + \frac{\mu}{\sqrt{n}} \right) (1+\delta)^2 \\
&+m_1^2||\mathbf{b}|| + m_1^2 h^{2p-1} \left( \delta + \frac{\mu}{\sqrt{n}} \right) \left( \delta^2 + 2\delta + \frac{\mu}{\sqrt{n}} \right) + m_1^2 h^{3p-1} \left( \delta + \frac{\mu}{\sqrt{n}} \right) \left( \delta^2 + 2\delta + \frac{\mu}{\sqrt{n}} \right) \\
&+ m_1^2 h^{3p-1} (1+\delta)^2 \left( \delta + \frac{\mu}{\sqrt{n}} \right) + m_1^2 h^{3p-1} \left( \delta + \frac{\mu}{\sqrt{n}} \right) \left( \delta^2 + 2\delta + \frac{\mu}{\sqrt{n}} \right) + m_1^2 h^{3p-1} (1+\delta) \left( \delta^2 + 2\delta + \frac{\mu}{\sqrt{n}} \right) \\
&+ m_1^2 h^{4p-1} \left( \delta + \frac{\mu}{\sqrt{n}} \right) \left( \delta^2 + 2\delta + \frac{\mu}{\sqrt{n}} \right)\\
\implies ||\hat{e_{i4}}|| &\leq 2m_1^2 h^{3p-1} \left( \delta + \frac{\mu}{\sqrt{n}} \right)^2 + 3m_1^2 h^{4p-1} \left( \delta + \frac{\mu}{\sqrt{n}} \right) \left( \delta^2 + 2\delta + \frac{\mu}{\sqrt{n}} \right) + 3m_1^2 h^{3p-1} \left( \delta + \frac{\mu}{\sqrt{n}} \right) (1+\delta)^2 \\
&+m_1^2||\mathbf{b}|| + m_1^2 h^{2p-1} \left( \delta + \frac{\mu}{\sqrt{n}} \right) \left( \delta^2 + 2\delta + \frac{\mu}{\sqrt{n}} \right) +2 m_1^2 h^{3p-1} \left( \delta + \frac{\mu}{\sqrt{n}} \right) \left( \delta^2 + 2\delta + \frac{\mu}{\sqrt{n}} \right) \\
&  + m_1^2 h^{3p-1} (1+\delta) \left( \delta^2 + 2\delta + \frac{\mu}{\sqrt{n}} \right)\\
\implies ||\hat{e_{i4}}|| &\leq 2m_1^2 h^{3p-1} ( h^{-2p-2\nu^2} + 2h^{-p-\nu^2 - \xi} + h^{-2\xi}) \\
&+ 3m_1^2 h^{4p-1} (h^{-3p-3\nu^2} + 2h^{-2p-2\nu^2} + 3h^{-p-\nu^2 - \xi} + h^{-2p-2\nu^2 -\xi} + h^{-2\xi}) \\
&+ 3m_1^2 h^{3p-1} (h^{-3p-3\nu^2} + 2h^{-2p-2\nu^2} + 2h^{-p-\nu^2 - \xi} + h^{-2p-2\nu^2 -\xi} + h^{-\xi} + h^{-p-\nu^2}) \\
&+ m_1^2 ||\mathbf{b}|| \\
&+ m_1^2 h^{2p-1} (h^{-3p-3\nu^2} + 2h^{-2p-2\nu^2} + 3h^{-p-\nu^2 - \xi} + h^{-2p-2\nu^2 -\xi} + h^{-2\xi}) \\
&+2 m_1^2 h^{3p-1} (h^{-3p-3\nu^2} + 2h^{-2p-2\nu^2} + 3h^{-p-\nu^2 - \xi} + h^{-2p-2\nu^2 -\xi} + h^{-2\xi}) \\
&  + m_1^2 h^{3p-1} (h^{-3p-3\nu^2} + 3h^{-2p-2\nu^2} + h^{-p-\nu^2 - \xi} + h^{-\xi} + 2h^{-p-\nu^2})\\
\implies ||\hat{e_{i4}}|| &\leq 2m_1^2 h^{p-1} ( h^{-2\nu^2} + 2h^{-\nu^2 + p- \xi} + h^{2p-2\xi}) \\
&+ 3m_1^2 h^{p-1} (h^{-3\nu^2} + 2h^{-p-2\nu^2} + 3h^{p-\nu^2 + p- \xi} + h^{-2\nu^2 + p-\xi} + h^{3p-2\xi}) \\
&+ 3m_1^2 h^{p-1} (h^{-p-3\nu^2} + 2h^{-2\nu^2} + 2h^{-\nu^2 + p - \xi} + h^{-2\nu^2 -\xi} + h^{2p-\xi} + h^{p-\nu^2}) \\
&+ m_1^2 ||\mathbf{b}|| \\
&+ m_1^2 h^{p-1} (h^{-2p-3\nu^2} + 2h^{-p-2\nu^2} + 3h^{-\nu^2 - \xi} + h^{-p-2\nu^2 -\xi} + h^{p-2\xi}) \\
&+2 m_1^2 h^{p-1} (h^{-p-3\nu^2} + 2h^{-2\nu^2} + 3h^{-\nu^2 + p - \xi} + h^{-2\nu^2 -\xi} + h^{2p-2\xi}) \\
&  + m_1^2 h^{p-1} (h^{-p-3\nu^2} + 3h^{-2\nu^2} + h^{-\nu^2 + p - \xi} + h^{2p-\xi} + 2h^{p-\nu^2})
\end{align*}

~\\
Now let us find a bound for $||\mathbf{b}||$.
\begin{align*}
\mathbf{b} &= \sum_{\substack{k =1 \\ k \neq i}}^h q_{ik} (W_i^\top W_i) (W_i^{\top} A_i^*)A^*_k \\
&= \langle W_i, W_i \rangle \langle W_i, A_i^* \rangle q_{ik} A^*_{-i} \mathbf{1}_h 
\end{align*}
Where $A^*_{-i}$ is the dictionary $A^*$ with the $i$th column set to zero, and $\mathbf{1}_h \in \mathbb{R}^h$ is the $h$-dimensional vector of all ones. Here we make use of the distributional assumption that $q_{ik}$ is the same for all $i,k$ in order to pull $q_{ik}$ out of the sum.
\begin{align*}
||\mathbf{b}||_2 &= h^{2p-2} \langle W_i, W_i \rangle \langle W_i, A_i^* \rangle \vert \vert A^*_{-i} \mathbf{1}_h \vert \vert_2 \\
&\leq h^{2p-2}(1+\delta)^3 ||A^*_{-i}||_2 ||\mathbf{1}_h||_2 \\
&= h^{2p-2}(1+\delta)^3 h^{1/2} \sqrt{\lambda_{\textrm{max}} (A^{*\top}_{-i} A^{*}_{-i})} \\
&= h^{2p-2}(1+\delta)^3 h^{1/2} \sqrt{h\frac{\mu}{\sqrt{n}} + 1} \\
&= h^{p-1} \sqrt{h^{2p-2} \times h \times (1+\delta)^6 \times \left( h\frac{\mu}{\sqrt{n}} +1 \right)} \\
&= h^{p-1} \sqrt{h^{2p-1} \times (1+h^{-p-\nu^2})^6 \times \left( h^{1-\xi} +1 \right)} \\
&= h^{p-1} \sqrt{(1+h^{-p-\nu^2})^6 \times ( h^{2p-\xi} + h^{2p-1})}
\end{align*}
Here $||A^*_{-i}||_2$ is the spectral norm of $A^*_{-i}$, and is the top singular value of the matrix. We use Gershgorin's Circle theorem to bound the top eigenvalue of $A^{*\top}_{-i}A^*_{-i}$ by its maximum row sum.
~\\
If $p < \frac{\xi}{2}$, $p < \frac{1}{2}$, and $p < \nu^2$, then $||\hat{e_{i4}}|| = o(m_1^2 h^{p-1})$. Now we combine the above obtained bounds for $\Vert \hat{e_{it}}\Vert$ (for $t \in \{1,2,3,4\}$) with the bound obtained below equation \ref{ei_m2} to say that, $\Vert e_i \Vert = o(\max\{m_1^2,m_2\}h^{p-1})$

\subsection {About $\alpha_i - \beta_i$}

Remembering that $D=1$ and doing a close scrutiny of the terms in \ref{alpha_m2} and \ref{beta_m2} will indicate that the coefficients are the \emph {same} for the $m_2h^{p-1}$ term in each of them. (which is the term with the highest $h$ scaling in the $m_2$ dependent parts of $\alpha_i$ and $\beta_i$). So this largest term cancels off in the difference and we are left with the sub-leading order terms coming from both their $m_1^2$ as well as the $m_2$ parts and this gives us,

\[ \alpha_i - \beta_i = o(\max\{m_1^2,m_2\}h^{p-1}) \]

\end{document}

\section{Lets try the correlation argument!}

Rephrased in our notation the Theorem $6$ (page $7$) about ``correlations" in the paper by Arora-Ge-Ma-Moitra says the following, 

\begin{theorem}
Let the expected gradient of the loss function in the $i^{th}-$direction at the $s^{th}$ iteration (when the current $W$ matrix is labelled as $W^s$) be $g_i^s = \mathbb{E}_{\x^*} \left [ \frac {\partial L }{\partial W_i} \mid_{W^s} \right ]$. Let the updates being made on the $i^{th}-$column of $W^T$ be, $W_i^{s+1} = W_i^s - \eta g_i^s$ for some fixed learning rate $\eta$ to be fixed later. Let there exist a constant $T>0$ and a vector $B_i$ (independent of $s$) and positive constants (independent of $s$) $a,b,c$ such that for all $0 \leq s \leq T$ we have, 
\[ \langle g_i^s, W_i^s - B_i \rangle \geq a \vert \vert W_i^s - B_i\vert \vert_2^2 + b \vert \vert g_i^s \vert \vert _2^2 - c\]
Then for these $s$ we would call $g_i^s$ to be $(a,b,c)-$correlated with $B_i$.\\
If we choose $\eta$ such that $0 \leq \eta \leq 2b$ then we will have,
\[ \vert \vert W_i^{s+1} - B_i \vert \vert _2^2 \leq (1-2a\eta)\vert \vert W_i^s - B_i \vert \vert _2^2 + 2\eta c\]
Summed over the steps the above implies,
\[ \vert \vert W_i^{s} - B_i \vert \vert _2^2 \leq (1-2a\eta)^s\vert \vert W_i^0 - B_i \vert \vert _2^2 + \frac {c}{a} \]
In this context $\frac{c}{a}$ will be called the ``systematic error''.\\
~\\
Most importantly if the correlation is such that for all $0 \leq s \leq T$ we have $c < \frac {a}{2} \vert \vert W_i^{s} - B_i \vert \vert _2^2$ then the updates are geometrically converging to $B_i$ in the sense that, 
\[ \vert \vert W_i^{s} - B_i \vert \vert _2^2 \leq (1-a\eta)^s\vert \vert W_i^0 - B_i \vert \vert _2^2\]
\end{theorem}
~\\
\newline \newline 
In our case we have, $g_i^s = \alpha W_i^s - \beta A_i^* + e_i - \gamma_i$\\ 
This implies, 

\begin{align*} 
&||g^s_i|| \leq ||\alpha W_i^s - \beta A_i^* || + ||e_i|| + \vert \vert \gamma_i \vert \vert \leq \alpha ||W_i^s || + \beta + ||e_i||+\vert \vert \gamma_i \vert \vert\\
\implies &||g^s_i||^2 \leq 2(||e_i||+ \vert \vert \gamma_i \vert \vert)^2 + 2(\alpha  ||W^s_i|| + \beta )^2\\
\implies &-||g^s_i||^2 \geq -2(||e_i||+\vert \vert \gamma_i \vert \vert)^2 + 2(\alpha  ||W^s_i|| + \beta )^2
\end{align*}

~\\
Lets look for correlation of $g^i_s$ with a matrix $A^* +E$ where $E$ is some matrix to be decided later but within the columnwise $\delta-$ball of $A^*$. 

\begin{align*}
\langle g_i^s, W_i^s - (A_i^* + E_i) \rangle &\geq \langle  \alpha W_i^s - \beta A_i^* ,W_i^s -(A_i^* + E_i ) \rangle  - ||e_i-\gamma_i||||   W_i^s-(A_i^* + E_i)|| \\ 
&\geq  -\left (\alpha  + \beta \right )\langle A_i^* , W_i^s \rangle + \beta + \alpha \vert \vert W_i^s \vert \vert^2   - \langle \alpha W_i^s - \beta A_i^*, E_i  \rangle  -  ||e_i-\gamma_i||||   W_i^s-(A_i^* + E_i)||
\end{align*}
~\\
Now for some non-negative real numbers $a,b,c$ let us consider the expression, 
\begin{align}\label{corr} 
\nonumber &\langle g_i^s, W_i^s-(A_i^* + E_i)\rangle -a ||W_i^s -(A_i^* + E_i) ||_2^2 -b||g_i^s||_2^2 +c \\ 
\nonumber &\geq -(\alpha + \beta) \langle A_i^*, W_i^s \rangle + \beta + \alpha \vert \vert W_i^s \vert \vert ^2 - \langle \alpha W_i^s - \beta A_i^*, E_i \rangle - ||e_i - \gamma_i||||(A_i^* + E_i) - W_i^s||\\ 
\nonumber &-a ||W_i^s -(A_i^* + E_i)||_2^2 -b||g_i^s||_2^2 +c \\
\nonumber &\geq -\alpha \langle A_i^* - W_i^s, W_i^s \rangle - \langle \alpha W_i^s - \beta A_i^*, E_i  \rangle  - ||e_i - \gamma_i||||(A_i^* + E_i) - W_i^s|| + \beta (1 -\langle A_i^*, W_i^s \rangle ) -a ||W_i^s -(A_i^* + E_i) ||_2^2\\
&- b \vert \vert \alpha W_i^s - \beta A_i^* + e_i \vert \vert ^2 + c
\end{align}
We note that, 
\begin{align*}
||(A_i^* + E_i) - W_i^s||_2^2 &= \langle A_i^* - W_i^s + E_i, -W_i^s +(A_i^* + E_i) \rangle\\
&= - \langle A_i^* - W_i^s , W_i^s \rangle + \langle A_i^* - W_i^s, A_i^* + E_i\rangle + \langle E_i , -W_i^s + A_i^* \rangle + \langle E_i , E_i \rangle  \\
&\text{We add and subtract $\frac {\beta}{\alpha} A_i^*$ to get,}\\
&= - \langle A_i^* - W_i^s , W_i^s \rangle - \langle E_i , W_i^s - \frac {\beta}{\alpha} A_i^* \rangle + \langle E_i , (1 - \frac {\beta}{\alpha}) A_i^*\rangle + \langle E_i , E_i \rangle + \langle A_i^* - W_i^s, A_i^* + E_i\rangle
\end{align*}
The above an then be rearranged to get, 
\begin{align*}
\alpha \left ( - \langle A_i^* - W_i^s , W_i^s \rangle - \langle E_i , W_i^s - \frac {\beta}{\alpha} A_i^* \rangle \right ) &= \alpha \left ( ||(A_i^* + E_i) - W_i^s||_2^2 - \langle E_i , (1 - \frac {\beta}{\alpha}) A_i^*\rangle - \langle E_i , E_i \rangle - \langle A_i^* - W_i^s, A_i^* + E_i\rangle \right ) \\ 
\end{align*}

~\\
So substituting the above back into the equation \ref{corr} we have, 
\begin{align*}
&\langle g_i^s, W_i^s - (A_i^* + E_i) \rangle -a ||(A_i^* + E_i) - W_i^s||_2^2 -b||g_i^s||_2^2 +c \\
&\geq \alpha \left ( ||(A_i^* + E_i) - W_i^s||_2^2 - (1 - \frac {\beta}{\alpha}) \langle E_i , A_i^*\rangle - \langle E_i, E_i \rangle - \langle A_i^* - W_i^s, A_i^* + E_i\rangle \right ) - ||e_i - \gamma_i||||(A_i^* + E_i) - W_i^s|| \\
&+ \beta (1 -\langle A_i^*, W_i^s \rangle ) -a ||(A_i^* + E_i) - W_i^s||_2^2- b \vert \vert g_i^s \vert \vert ^2 + c \\
&\geq (\alpha - a)||(A_i^* + E_i) - W_i^s||_2^2- ||e_i - \gamma_i||||(A_i^* + E_i) - W_i^s|| - \alpha(1- \frac {\beta}{\alpha}) \langle E_i , A_i^*\rangle + \beta (1 -\langle A_i^*, W_i^s \rangle ) - b \vert \vert g_i^s \vert \vert ^2 \\
 &+ c - \alpha \Bigg ( \langle E_i, E_i \rangle + \langle A_i^* - W_i^s, A_i^* + E_i\rangle \Bigg ) \\
 &\geq \left ( \sqrt{\alpha - a}||(A_i^* + E_i) - W_i^s||_2 - \frac {||e_i-\gamma_i||}{2\sqrt{\alpha - a}} \right )^2 - \frac {||e_i-\gamma_i||^2}{4(\alpha - a)} - \alpha(1- \frac {\beta}{\alpha}) \langle E_i , A_i^*\rangle + \beta (1 -\langle A_i^*, W_i^s \rangle ) - b \vert \vert g_i^s \vert \vert ^2\\
&+ c - \alpha \Bigg ( \langle E_i, E_i \rangle + \langle A_i^* - W_i^s, A_i^* + E_i\rangle \Bigg )\\
&\text{Now we substitute the previously obtained upperbound on $\vert \vert g_i^s\vert \vert_2^2$ to get,}\\
&\geq  \left ( \sqrt{\alpha - a}||(A_i^* + E_i) - W_i^s||_2 - \frac {||e_i-\gamma_i||}{2\sqrt{\alpha - a}} \right )^2 - \frac {||e_i-\gamma_i||^2}{4(\alpha - a)} - \alpha(1- \frac {\beta}{\alpha}) \langle E_i , A_i^*\rangle + \beta (1 -\langle A_i^*, W_i^s \rangle ) \\
 &-b\Bigg (2(||e_i|| + \vert \vert \gamma_i \vert \vert)^2 + 2(\alpha  ||W^s_i|| + \beta )^2 \Bigg ) + c - \alpha \Bigg ( \langle E_i, E_i \rangle + \langle A_i^* - W_i^s, A_i^* + E_i\rangle \Bigg )\\
 &\geq \left ( \sqrt{\alpha - a}||(A_i^* + E_i) - W_i^s||_2 - \frac {||e_i-\gamma_i||}{2\sqrt{\alpha - a}} \right )^2 + \beta (1+ \langle E_i , A_i^*\rangle) - \Bigg (\frac {||e_i-\gamma_i||^2}{4(\alpha - a)} + 2b(||e_i|| + \vert \vert \gamma_i \vert \vert)^2 \Bigg ) -2b(\alpha  ||W^s_i|| + \beta )^2\\
&+ c - \alpha \Bigg ( \langle E_i, E_i \rangle + \langle A_i^* - W_i^s, A_i^* + E_i\rangle \Bigg ) - \alpha \langle E_i, A_i^* \rangle - \beta \langle A_i^*, W_i^s \rangle \\
&\geq \left ( \sqrt{\alpha - a}||(A_i^* + E_i) - W_i^s||_2 - \frac {||e_i-\gamma_i||}{2\sqrt{\alpha - a}} \right )^2 + \beta (1+ \langle E_i , A_i^*\rangle) - \Bigg (\frac {||e_i-\gamma_i||^2}{4(\alpha - a)} + 2b(||e_i|| + \vert \vert \gamma_i \vert \vert)^2 \Bigg )  -2b(\alpha  ||W^s_i|| + \beta )^2\\
&+ c - \alpha ||E_i||^2  - \alpha - \alpha \langle A_i^*, E_i \rangle + \alpha \langle  A_i^*, W_i^s\rangle + \alpha \langle  W_i^s, E_i \rangle  - \alpha \langle E_i, A_i^* \rangle - \beta \langle A_i^*, W_i^s \rangle 
\end{align*}
\newline 
~\\
So for $g_i^s$ to be $(a,b,c)-$ correlated to $A_i^*+E_i$ we need these $4$ conditions to hold simultaneously, 
\begin{enumerate}
\item $E_i$ has to be within the $\delta-$ball of $A_i^*$ \Big (this is necessary for the proxy gradient to make sense and thus in turn for the estimates of $\alpha, \beta, \vert \vert e_i \vert \vert _2$ and $\vert \vert \gamma_i \vert \vert_2$ to make sense \Big )
\item $a < \alpha$ \Big ( This is necessary for the square-root to be real \Big )
\item $c + \beta (1+ \langle E_i , A_i^*\rangle) + \alpha \langle  A_i^*, W_i^s\rangle + \alpha \langle  W_i^s, E_i \rangle \geq \Bigg (\frac {||e_i-\gamma_i||^2}{4(\alpha - a)} + 2b(||e_i|| + \vert \vert \gamma_i \vert \vert)^2 \Bigg ) + 2b(\alpha  ||W^s_i|| + \beta )^2 + \alpha \Bigg (1 + ||E_i||^2 + 2 \langle A_i^*, E_i \rangle \Bigg ) + \beta \langle A_i^*, W_i^s \rangle$
\item For summability AGMM's technique would need in the $s^{th}-$iteration, $c < \frac{a}{2} \vert \vert W_i^s - (A_i^* + E_i) \vert \vert_2^2 \leq \frac 1 2 a\delta^2$ whereas the inductive hypothesis needs (or rather the condition that the iterates never leave the columnwise $\delta-$ball of $A^*$) needs,
\[ \vert \vert W_i^{s+1}-(A_i^* + E_i) \vert \vert_2^2 \leq (1-2a\eta) \vert \vert W_i^s - (A_i^* + E_i) \vert \vert ^2 + 2 \eta c \leq (1-2a\eta)\delta^2 + 2\eta c \leq \delta^2 \]
This implies, $c_s \leq a\delta^2$ and that is a weaker constraint on $c_s$ than what already comes from the condition for summability/geometric convergence . 
\end{enumerate}

\begin{center}
\fbox{
\begin{minipage}{45em}
Assume $E_i=0$, $m_2 = \Omega(m_1^2)$ and $a = \frac {1}{2} m_2h^{p-1}$ (then for large enough $h$ we are ensured that $a < \alpha$ and $\alpha - a = \Theta(m_2h^{p-1})$) then in condition $3$, LHS$-$RHS is equal to,
\begin{align*}
&c + (\beta-\alpha) - \Bigg (\frac {||e_i-\gamma_i||^2}{4(\alpha - a)} + 2b(||e_i|| + \vert \vert \gamma_i \vert \vert)^2 \Bigg ) -2b(\alpha  ||W^s_i|| + \beta )^2 + (\alpha - \beta) \langle  A_i^*, W_i^s\rangle\\
&\text{Assuming that $W_i^s$ is within the $\delta-$ball of $A_i^*$ we substitute the inequality, } 1-\delta \leq \langle A_i^*,W_i^s\rangle \leq 1+ \delta  \\
&\text{and $\vert \vert W_i^s\vert \vert \leq 1+\delta$}\\
&\text{Since $\vert \vert \gamma_i \vert \vert_2^2$ is expoentially falling with $h$ we can as say that the upperbound of $(||e_i|| + \vert \vert \gamma_i \vert \vert)^2$}\\
&\text { as well as $||e_i-\gamma_i||^2$ is given by $O \left (\vert \vert e_i \vert \vert_2^2 \right )= o(m_1^2h^{p-1})$ Thus substituting all of these we have,}\\
&\geq c - o(m_1^2h^{p-1})\delta - o(m_1^2h^{p-1})( \Theta \left  ( \frac {1}{m_2h^{p-1}}\right )  + 2b) -2bO(m_2^2h^{2p-2})(2+\delta)^2\\ 
&\text{Now lets choose the largest "c" that we can afford i.e }c = \frac 1 2 a\delta^2 = \frac 1 4 m_2h^{p-1}\delta^2 \text{and then we get,}\\
&\geq \frac 1 4 m_2h^{p-1}\delta^2 - o(m_1^2h^{p-1})\delta - o(m_1^2h^{p-1})\Theta \left  ( \frac {1}{m_2h^{p-1}}\right ) -2b\Bigg ( o(m_1^2h^{p-1}) + O(m_2^2h^{2p-2}) \Bigg )
\end{align*}
We have $2p-2 < p-1$
One way for there to exist a positive $b$ making the above lower bound positive is if we  have, 
\begin{align*}
 2b\Bigg ( o(m_1^2h^{p-1}) + O(m_2^2h^{2p-2}) \Bigg ) < \frac 1 4 m_2h^{p-1}\delta^2 - o(m_1^2h^{p-1}) \left ( \delta + \Theta \left  ( \frac {1}{m_2h^{p-1}}\right ) \right ) 
\end{align*}
So there exists a positive $b$ satisfying the above as long as,
\begin{align*}
\frac 1 4 m_2h^{p-1}\delta^2 &> o(m_1^2h^{p-1}) \left ( \delta + \Theta \left  ( \frac {1}{m_2h^{p-1}}\right ) \right )\\ 
\end{align*}
Asymptotically we will have, $\frac 1 2m_2h^{p-1}\delta^2 > o(m_1^2h^{p-1}) \delta$ as long as $\delta > \frac {o(m_1^2)}{m_2}$. Also $\frac 1 2m_2h^{p-1}\delta^2 > o(m_1^2h^{p-1})\Theta \left  ( \frac {1}{m_2h^{p-1}}\right )$ as long as, $\delta > \frac {o(m_1h^{\frac {1-p}{2}})}{m_2}$ and since the $o-$asymptotics of both the lower bounds are the same this is a bigger lower bound than the former if $m_1 < h^{\frac {1-p}{2}}$.  
~\\ \\
Previously we have had for deriving the asymptotics of the gradient, $\delta \leq \frac {1}{h^{2p+\epsilon^2}}$. So for consistency we need, $\frac {o(m_1h^{\frac{1-p}{2}})}{m_2} \leq \frac {1}{h^{2p+\epsilon^2}} \implies m_2 \geq o(m_1h^{\frac{1+3p+2\epsilon^2}{2}})$. 
\end{minipage}}
\end{center}
~\\
\newline

\end{document}

\newpage 
\section {OLD decomposition}
\[ g_{i} = \alpha_i A^*_i - \beta_i W_i + e_i \]

where, 
\begin{align*}
\alpha_i &= \mathbb{E}_{S \in \mathbb{S}}\left [ \mathfrak{1}_{i \in S} (-D) \left \{ m_1^2 \sum_{\substack{k \in S \\ k \neq i}} ( W_i^{\top} A_k^*) + m_2 ( W_i^{\top} A_i^*)-m_1 \epsilon_i + m_1^2  \sum_{\substack{k \in S \\ k \neq i}} (A_k^{* \top}W_i) + m_2 (A_i^{* \top}W_i) \right\} \right ]\\ 
&+\mathbb{E}_{S \in \mathbb{S}}\left [ \mathfrak{1}_{i \in S} \left \{ -m_1 \sum_{j \in S} \epsilon_j (W_i^\top W_j)
+ m_2 \sum_{j \in S} (W_i^\top W_j) (W_j^\top A^*_i)
+ m_1^2 \sum_{\substack{j,l \in S \\ l \neq i}} (W_i^\top W_j) (W_j^{\top} A_l^*) \right \} \right ]\\
-\beta_i &= \mathbb{E}_{S \in \mathbb{S}}\left [ \mathfrak{1}_{i \in S} \left \{ \epsilon_i^2
-m_1  \sum_{k \in S}  \epsilon_i  ( W_i^\top A^*_k)
-m_1  \sum_{k \in S} \epsilon_i (W_i^\top A^*_k) \right \} \right ] \\
&+ \mathbb{E}_{S \in \mathbb{S}}\left [ \mathfrak{1}_{i \in S} \left \{ m_2 \sum_{ k \in S} ( W_i ^\top A^*_k) (W_i^\top A^*_k)
+ m_1^2 \sum_{\substack{k, l \in S \\ k \neq l}}  ( W_i ^\top A^*_k ) ( W_i ^\top A^*_l) \right \} \right ]\\ 
e_i &= \mathbb{E}_{S \in \mathbb{S}}\left [ \mathfrak{1}_{i \in S} \left \{  \sum_{\substack{j \in S \\ j \neq i}} \epsilon_i \epsilon_j W_j
-m_1 \sum_{\substack{j, k \in S \\ j \neq i}}  ( W_j^\top A^*_k)  W_j \epsilon_i
-m_1 \sum_{\substack{j, k \in S \\ j \neq i}} \epsilon_j (W_i^\top A^*_k) W_j\right \} \right ]\\ 
&+ \mathbb{E}_{S \in \mathbb{S}}\left [ \mathfrak{1}_{i \in S} \left \{ m_2\sum_{\substack{j, k \in S \\ j \neq i}} ( W_i ^\top A^*_k) (W_j^\top A^*_k)  W_j
+ m_1^2 \sum_{\substack{j, k, l \in S \\ j \neq i \\ k \neq l}}  ( W_i ^\top A^*_k ) ( W_j ^\top A^*_l)  W_j\right \} \right ]\\
&+ \mathbb{E}_{S \in \mathbb{S}}\left [ \mathfrak{1}_{i \in S} (-D)\left \{ m_1^2 \sum_{\substack{j, k \in S \\ j \neq i \\ k \neq i}} ( W_i^{\top} A_k^*) A_j^*
+ m_2 \sum_{\substack{j \in S \\ j \neq i}} ( W_i^{\top} A_j^*) A_j^*
- m_1 \sum_{\substack{j \in S \\ j \neq i}} \epsilon_i A_j^*\right \} \right ]\\
&+\mathbb{E}_{S \in \mathbb{S}}\left [ \mathfrak{1}_{i \in S} (-D)\left \{ m_1^2 \sum_{\substack{j ,k\in S \\ j \neq k \\ j \neq i}} (A_k^{* \top}W_i) A^*_j
+ m_2 \sum_{\substack{j \in S \\ j \neq i}} (A_j^{* \top}W_i) A^*_j \right \} \right ]\\
&+\mathbb{E}_{S \in \mathbb{S}}\left [ \mathfrak{1}_{i \in S} \left \{ - m_1 \sum_{\substack{j,k \in S \\ k \neq i}} \epsilon_j m_1 (W_i^\top W_j) A^*_k
+ m_2 \sum_{\substack{j,k \in S \\ k \neq i}} (W_i^\top W_j) (W_j^\top A^*_k) A^*_k
+ m_1^2 \sum_{\substack{j,k,l \in S \\ k \neq i,l}} (W_i^\top W_j) (W_j^{\top} A_l^*)A^*_k\right \} \right ]\\
\end{align*}

\section { (OLD) Separating the $m_2$ and the $m_1$ parts of the coefficients (potential Lemma $5.2$)}

\begin{align*}
\alpha_i (m_2) &= \mathbb{E}_{S \in \mathbb{S}}\left [ \mathfrak{1}_{i \in S} \left \{ (-2D)   m_2 ( W_i^{\top} A_i^*) + m_2 \sum_{j \in S} (W_i^\top W_j) (W_j^\top A^*_i)\right \} \right ]\\ 
-\beta_i (m_2) &= \mathbb{E}_{S \in \mathbb{S}}\left [ \mathfrak{1}_{i \in S} \left \{ m_2  \sum_{k \in S}  (W_i^{\top}A_k^*)^2  \right \} \right ]\\ 
e_i (m_2) &= \mathbb{E}_{S \in \mathbb{S}}\left [ \mathfrak{1}_{i \in S} \left \{    m_2\sum_{\substack{j, k \in S \\ j \neq i}} ( W_i ^\top A^*_k) (W_j^\top A^*_k)  W_j
+ (-D)m_2 \sum_{\substack{j \in S \\ j \neq i}} ( W_i^{\top} A_j^*) A_j^*
+ (-D)m_2 \sum_{\substack{j \in S \\ j \neq i}} (A_j^{* \top}W_i) A^*_j\right  \} \right ]\\
&+ \mathbb{E}_{S \in \mathbb{S}}\left [ \mathfrak{1}_{i \in S} \left \{ m_2 \sum_{\substack{j,k \in S \\ k \neq i}} (W_i^\top W_j) (W_j^\top A^*_k) A^*_k \right  \} \right ]\\
\end{align*}

\begin{align*}
\alpha_i (m_1) &= \mathbb{E}_{S \in \mathbb{S}}\left [ \mathfrak{1}_{i \in S} (-D) \left \{ m_1^2 \sum_{\substack{k \in S \\ k \neq i}} ( W_i^{\top} A_k^*)
-m_1 \epsilon_i + m_1^2  \sum_{\substack{k \in S \\ k \neq i}} (A_k^{* \top}W_i) \right \} \right ]\\
&+\mathbb{E}_{S \in \mathbb{S}}\left [ \mathfrak{1}_{i \in S} \left \{ -m_1 \sum_{j \in S} \epsilon_j (W_i^\top W_j)
+ m_1^2 \sum_{\substack{j,l \in S \\ l \neq i}} (W_i^\top W_j) (W_j^{\top} A_l^*) \right \} \right ]\\
-\beta_i (m_1) &= \mathbb{E}_{S \in \mathbb{S}}\left [ \mathfrak{1}_{i \in S} \left \{  \epsilon_i^2
-m_1  \sum_{k \in S}  \epsilon_i  ( W_i^\top A^*_k)
-m_1  \sum_{k \in S} \epsilon_i (W_i^\top A^*_k)\right \} \right ]\\
&+ \mathbb{E}_{S \in \mathbb{S}}\left [ \mathfrak{1}_{i \in S} \left \{ 
m_1^2 \sum_{\substack{k, l \in S \\ k \neq l}}  ( W_i ^\top A^*_k ) ( W_i ^\top A^*_l) \right \} \right ]\\
e_i (m_1) &= \mathbb{E}_{S \in \mathbb{S}}\left [ \mathfrak{1}_{i \in S} (-D)\left \{ \sum_{\substack{j \in S \\ j \neq i}} \epsilon_i \epsilon_j W_j
-m_1 \sum_{\substack{j, k \in S \\ j \neq i}}  ( W_j^\top A^*_k)  W_j \epsilon_i
-m_1 \sum_{\substack{j, k \in S \\ j \neq i}} \epsilon_j (W_i^\top A^*_k) W_j \right \}  \right ] \\
&+ \mathbb{E}_{S \in \mathbb{S}}\left [ \mathfrak{1}_{i \in S} \left \{
m_1^2 \sum_{\substack{j, k, l \in S \\ j \neq i \\ k \neq l}}  ( W_i ^\top A^*_k ) ( W_j ^\top A^*_l)  W_j\right \} \right ]\\
&+ \mathbb{E}_{S \in \mathbb{S}}\left [ \mathfrak{1}_{i \in S} (-D)\left \{ m_1^2 \sum_{\substack{j, k \in S \\ j \neq i \\ k \neq i}} ( W_i^{\top} A_k^*) A_j^*
- m_1 \sum_{\substack{j \in S \\ j \neq i}} \epsilon_i A_j^*\right \} \right ]\\
&+\mathbb{E}_{S \in \mathbb{S}}\left [ \mathfrak{1}_{i \in S} (-D)\left \{ m_1^2 \sum_{\substack{j ,k\in S \\ j \neq k \\ j \neq i}} (A_k^{* \top}W_i) A^*_j
\right \} \right ]\\
&+\mathbb{E}_{S \in \mathbb{S}}\left [ \mathfrak{1}_{i \in S} \left \{ - m_1 \sum_{\substack{j,k \in S \\ k \neq i}} \epsilon_j m_1 (W_i^\top W_j) A^*_k
+ m_1^2 \sum_{\substack{j,k,l \in S \\ k \neq i,l}} (W_i^\top W_j) (W_j^{\top} A_l^*)A^*_k\right \} \right ]\\
\end{align*}

\newpage 

\subsection {(OLD) Estimating the $m_1$ dependent parts of the derivative (why can they be ignored?)}

Since $||A^*_i||=1$ and $W_i$ is being assumed to be within a $0 < \delta <1$ ball of $A^*_i$ we can use the following inequalities:
\begin{align*}
||W_i|| &= ||W_i - A^*_i + A^*_i|| \leq ||W_i - A^*_i|| + ||A^*_i|| = \delta + 1\\ 
||W_i|| &\geq 1-\delta \\
\langle W_i, A^*_i \rangle &= \langle W_i - A^*_i, A^*_i \rangle + \langle A^*_i, A^*_i \rangle \leq ||W_i - A^*_i||||A^*_i|| +  1 \leq \delta + 1\\ 
\langle W_i, A^*_i \rangle &\geq 1-\delta \\
|\langle W_j, A^*_i \rangle| &= |\langle W_j - A^*_j, A^*_i \rangle + \langle A^*_j, A^*_i \rangle| \leq \frac{\mu}{\sqrt{n}}+ ||W_j - A^*_j||||A^*_i|| = \frac{\mu}{\sqrt{n}}+\delta\\
\vert \langle W_i, W_j \rangle \vert &= \vert \langle W_i - A_i^*,W_j \rangle + \langle A_i^*,W_j \rangle \vert \leq \delta(1+\delta) + (\delta + \frac {\mu}{\sqrt{n}}) = \delta ^2 + 2\delta + \frac{\mu}{\sqrt{n}}\\
\vert \langle W_i, W_i \rangle \vert &= \vert \langle W_i - A_i^*,W_i \rangle + \langle A_i^*,W_i \rangle \vert \leq \delta(1+\delta) + (1+\delta) \leq \delta^2 +2\delta + 1
\end{align*}

\paragraph{Bounding $\beta_i(m_1)$}

\paragraph{Bounding $\alpha_i(m_1)$}

\begin{align*}
\alpha_i (m_1) &= \mathbb{E}_{S \in \mathbb{S}}\left [ \mathfrak{1}_{i \in S} (-D) \left \{ m_1^2 \sum_{\substack{k \in S \\ k \neq i}} ( W_i^{\top} A_k^*)
-m_1 \epsilon_i + m_1^2  \sum_{\substack{k \in S \\ k \neq i}} (A_k^{* \top}W_i) \right \} \right ]\\
&+\mathbb{E}_{S \in \mathbb{S}}\left [ \mathfrak{1}_{i \in S} \left \{ -m_1 \sum_{j \in S} \epsilon_j (W_i^\top W_j)
+ m_1^2 \sum_{\substack{j,l \in S \\ l \neq i}} (W_i^\top W_j) (W_j^{\top} A_l^*) \right \} \right ]\\
&=\mathbb{E}_{S \in \mathbb{S}}\left [ \mathfrak{1}_{i \in S} (-D) \left \{ m_1^2 \sum_{\substack{k \in S \\ k \neq i}} ( W_i^{\top} A_k^*)
-m_1 \epsilon_i + m_1^2  \sum_{\substack{k \in S \\ k \neq i}} (A_k^{* \top}W_i) \right \} \right ]\\
&+\mathbb{E}_{S \in \mathbb{S}}\left [ \mathfrak{1}_{i \in S} \left \{ -m_1 \sum_{\substack{j \in S \\ j \neq i}} \epsilon_j (W_i^\top W_j) -m_1  \epsilon_i (W_i^\top W_i) \right \} \right ]\\
&+\mathbb{E}_{S \in \mathbb{S}}\left [ \mathfrak{1}_{i \in S} m_1^2 \left \{  \sum_{\substack{j(=l) \in S \setminus i }} (W_i^\top W_j) (W_j^{\top} A_l^*) + \sum_{\substack{j \in S \setminus i \\ k \in S \setminus i,j}} (W_i^\top W_j) (W_j^{\top} A_l^*) + \sum_{\substack{j,l \in S \\ l \neq i}} (W_i^{\top} W_j) (W_j^{\top} A_l^*)  \right \} \right ]
\end{align*}

\newpage 
\section {(OLD) Estimating the $m_2$ dependent parts of the derivative (potential Lemma $5.3$)}

Since $||A^*_i||=1$ and $W_i$ is being assumed to be within a $0 < \delta <1$ ball of $A^*_i$ we can use the following inequalities:
\begin{align*}
||W_i|| &= ||W_i - A^*_i + A^*_i|| \leq ||W_i - A^*_i|| + ||A^*_i|| = \delta + 1\\ 
||W_i|| &\geq 1-\delta \\
\langle W_i, A^*_i \rangle &= \langle W_i - A^*_i, A^*_i \rangle + \langle A^*_i, A^*_i \rangle \leq ||W_i - A^*_i||||A^*_i|| +  1 \leq \delta + 1\\ 
\langle W_i, A^*_i \rangle &\geq 1-\delta \\
|\langle W_j, A^*_i \rangle| &= |\langle W_j - A^*_j, A^*_i \rangle + \langle A^*_j, A^*_i \rangle| \leq \frac{\mu}{\sqrt{n}}+ ||W_j - A^*_j||||A^*_i|| = \frac{\mu}{\sqrt{n}}+\delta\\
\vert \langle W_i, W_j  \rangle \vert &= \vert \langle W_i - A_i^*,W_j \rangle + \langle A_i^*,W_j \rangle \vert \leq \delta(1+\delta) + (\delta + \frac {\mu}{\sqrt{n}}) = \delta ^2 + 2\delta + \frac{\mu}{\sqrt{n}}\\
 \langle W_i, W_i \rangle &= \langle W_i - A_i^*,W_i \rangle + \langle A_i^*,W_i \rangle \leq \delta (1+\delta) + (1+\delta) = \delta^2 + 2\delta +1 \\
 \langle W_i, W_i \rangle &= \langle W_i - A_i^*,W_i \rangle + \langle A_i^*,W_i \rangle \geq -\delta (1+\delta)  +(1-\delta) = -\delta^2 -2\delta +1
\end{align*}

\paragraph{Bounding $\beta_i$}
\begin{align*}
\beta_i (m_2) &= \mathbb{E}_{S \in \mathbb{S}}\left [ \mathfrak{1}_{i \in S} \left \{ (-2D)   m_2 ( W_i^{\top} A_i^*) + m_2 \sum_{j \in S} (W_i^\top W_j) (W_j^\top A^*_i)\right \} \right ]\\ 
&= O \left ( -2Dm_2(1-\delta)h^{p-1} + m_2h^{p-1}( \delta ^2 + 2\delta + 1)(1+\delta ) + m_2hh^{2p-2}( \delta ^2 + 2\delta + \frac{\mu}{\sqrt{n}})(\frac{\mu}{\sqrt{n}} +\delta )\right )\\
&= O \left ( -2Dm_2h^{p-1}(1-h^{-p-\nu^2}) + m_2h^{p-1}\left ( h^{-2p-2\nu^2} + 2h^{-p-\nu^2} + 1 \right ) \left ( 1 + h^{-p-\nu^2}\right ) \right )\\
&+ O \left ( m_2h^{2p-1}\left ( h^{-2p-2\nu^2} + 2h^{-p-\nu^2} + h^{-\xi} \right ) \left ( h^{-\xi} + h^{-p-\nu^2} \right )  \right ) \\
&= O \left ( (1-2D)m_2h^{p-1} + 2m_2h^{-1-2\nu^2}\right ) 
\end{align*}
In the last line we have assumed that $-\xi < -p -\nu^2$.

\begin{align*}
\beta_i (m_2) =O \left ( (1-2D)m_2h^{p-1}  \right ) 
\end{align*}

The same calculation as above would also imply that, 
\begin{align*}
\beta_i (m_2) &= \Omega \left ( -2Dm_2(1+\delta)h^{p-1} + m_2h^{p-1}( -\delta ^2 - 2\delta + 1)(1-\delta )- m_2hh^{2p-2}( \delta ^2 + 2\delta + \frac{\mu}{\sqrt{n}})(\frac{\mu}{\sqrt{n}}+\delta ) \right )\\
&= \Omega ((1-2D)m_2h^{p-1})
\end{align*}

So, $\beta_i(m_2) = \Theta((1-2D)m_2h^{p-1})$

\paragraph{Bounding $\alpha_i$}
\begin{align*}
-\alpha_i (m_2) &= \mathbb{E}_{S \in \mathbb{S}}\left [ \mathfrak{1}_{i \in S} \left \{ m_2  \sum_{k \in S}  (W_i^{\top}A_k^*)^2  \right \} \right ]\\ 
&= O \left ( m_2 \left( \langle W_i, A^*_i \rangle^2 q_i + \sum_{k =1, k \neq i}^h \langle W_i, A^*_k \rangle^2 q_{ik}\right ) \right )\\
&= O \left( m_2 \left (  h^{p-1}(1+h^{-p-\nu^2})^2 + h^{2p-2}h(h^{-\xi} + h^{-p-\nu^2})^2 \right ) \right )
\end{align*}

Let $-\xi < -p -\nu^2 \implies p + \nu^2 < \xi$. Then we have, 

\begin{align*}
-\alpha_i (m_2) &= O \left ( m_2 \left ( h^{p-1} + h^{2p-1}h^{-2p-2\nu^2} \right ) \right ) = O \left ( m_2 \left ( h^{p-1} + h^{-1-2\nu^2} \right ) \right ) = O(m_2h^{p-1})
\end{align*}

\begin{align*}
-\alpha_i (m_2) = \Omega (m_2h^{p-1}(1-h^{-p-\nu^2})^2) = \Omega(m_2h^{p-1})
\end{align*}

So $-\alpha_i(m_2) = \Theta (m_2h^{p-1})$

\newpage 
\begin{align*}
||W_i|| &= ||W_i - A^*_i + A^*_i|| \leq ||W_i - A^*_i|| + ||A^*_i|| = \delta + 1\\ 
||W_i|| &\geq 1-\delta \\
\langle W_i, A^*_i \rangle &= \langle W_i - A^*_i, A^*_i \rangle + \langle A^*_i, A^*_i \rangle \leq ||W_i - A^*_i||||A^*_i|| +  1 \leq \delta + 1\\ 
\langle W_i, A^*_i \rangle &\geq 1-\delta \\
|\langle W_j, A^*_i \rangle| &= |\langle W_j - A^*_j, A^*_i \rangle + \langle A^*_j, A^*_i \rangle| \leq \frac{\mu}{\sqrt{n}}+ ||W_j - A^*_j||||A^*_i|| = \frac{\mu}{\sqrt{n}}+\delta\\
\vert \langle W_i, W_j  \rangle \vert &= \vert \langle W_i - A_i^*,W_j \rangle + \langle A_i^*,W_j \rangle \vert \leq \delta(1+\delta) + (\delta + \frac {\mu}{\sqrt{n}}) = \delta ^2 + 2\delta + \frac{\mu}{\sqrt{n}}\\
 \langle W_i, W_i \rangle &= \langle W_i - A_i^*,W_i \rangle + \langle A_i^*,W_i \rangle \leq \delta (1+\delta) + (1+\delta) = \delta^2 + 2\delta +1 \\
 \langle W_i, W_i \rangle &= \langle W_i - A_i^*,W_i \rangle + \langle A_i^*,W_i \rangle \geq -\delta (1+\delta)  +(1-\delta) = -\delta^2 -2\delta +1
\end{align*}
\paragraph{Bounding $\vert \vert e_i (m_2) \vert \vert_2$}

\begin{align*}
e_i (m_2) &= \mathbb{E}_{S \in \mathbb{S}}\left [ \mathfrak{1}_{i \in S} \left \{    m_2\sum_{\substack{j, k \in S \\ j \neq i}} ( W_i ^\top A^*_k) (W_j^\top A^*_k)  W_j
+ (-D)m_2 \sum_{\substack{j \in S \\ j \neq i}} ( W_i^{\top} A_j^*) A_j^*
+ (-D)m_2 \sum_{\substack{j \in S \\ j \neq i}} (A_j^{* \top}W_i) A^*_j\right  \} \right ]\\
&+ \mathbb{E}_{S \in \mathbb{S}}\left [ \mathfrak{1}_{i \in S} \left \{ m_2 \sum_{\substack{j,k \in S \\ k \neq i}} (W_i^\top W_j) (W_j^\top A^*_k) A^*_k \right  \} \right ]\\
&= \mathbb{E}_{S \in \mathbb{S}}\left [ \mathfrak{1}_{i \in S} m_2\left \{    \sum_{j (=k) \in S\setminus i } ( W_i^{\top} A_j^*)(  W_j^{\top}  A_j^*) W_j + \sum_{\substack{j \in S\setminus i\\ k \in S \setminus i,j}} ( W_i^{\top}  A_k^*)( W_j^{\top}  A_k^*) W_j + \sum_{\substack{j \in S\setminus i \\ k=i}} ( W_i^{\top}  A_i^*)(  W_j^{\top}  A_i^*) W_j\right \} \right ]\\
&+ \mathbb{E}_{S \in \mathbb{S}}\left [ \mathfrak{1}_{i \in S} (-2D)m_2 \left \{ \sum_{\substack{j \in S \\ j \neq i}} ( W_i^{\top} A_j^*) A_j^*  \right \} \right ]\\
&+\mathbb{E}_{S \in \mathbb{S}}\left [ \mathfrak{1}_{i \in S} m_2 \left \{ \sum_{\substack{k(=j) \in S \setminus i}} (W_i^\top W_k) (W_k^\top A^*_k) A^*_k  + \sum_{\substack{k \in S\setminus i\\ j \in S \setminus i,k}} (W_i^\top W_j) (W_j^\top A^*_k) A^*_k + \sum_{\substack{k \in S\setminus i \\ j=i}} (W_i^\top W_i) (W_i^\top A^*_k) A^*_k \right \} \right ]\\
\vert \vert e_i (m_2) \vert \vert_2 &= O \left ( m_2hq_{ij}(h^{-\xi}+h^{-p-\nu^2})(1+h^{-p-\nu^2})^2 + m_2h^2q_{ijk}(h^{-\xi}+h^{-p-\nu^2})^2(1+h^{-p-\nu^2})\right ) \\
&+O \left( m_2hq_{ij}(h^{-\xi}+h^{-p-\nu^2})(1+h^{-p-\nu^2})^2\right )\\
&+O \left ( (-2D)m_2hq_{ij}(h^{-\xi}+h^{-p-\nu^2})\right )\\
&+O \left ( m_2hq_{ik}(h^{-2p-2\nu^2}+2h^{-p-\nu^2}+h^{-\xi})(1+h^{-p-\nu^2}) + m_2h^2q_{ijk}(h^{-2p-2\nu^2}+2h^{-p-\nu^2}+h^{-\xi})(h^{-\xi}+h^{-p-\nu^2}) \right )\\
&+O \left ( m_2hq_{ik}(h^{-2p-2\nu^2}+2h^{-p-\nu^2}+1)(h^{-\xi} + h^{-p^2-\nu^2})\right ) \\
&= O \left ( 2m_2hq_{ij}(h^{-\xi}+h^{-p-\nu^2})(1+h^{-p-\nu^2})^2 + m_2h^2q_{ijk}(h^{-\xi}+h^{-p-\nu^2})^2(1+h^{-p-\nu^2}) \right )\\
&+ O \left (  (-2D)m_2hq_{ij}(h^{-\xi}+h^{-p-\nu^2}) \right )\\ 
&+ O \left ( 2m_2hq_{ik}(h^{-2p-2\nu^2}+2h^{-p-\nu^2}+h^{-\xi})(1+h^{-p-\nu^2}) + m_2h^2q_{ijk}(h^{-2p-2\nu^2}+2h^{-p-\nu^2}+h^{-\xi})(h^{-\xi}+h^{-p-\nu^2}) \right )\\
\end{align*}

Now we assume $-\xi < -p -\nu^2$ to get,

\begin{align*}
\vert \vert e_i (m_2) \vert \vert_2 &= O \left ( 2m_2h^{2p-1-p-\nu^2} +m_2h^{3p-1-2p-2\nu^2}-2Dm_2h^{2p-1-p-\nu^2} +4m_2h^{2p-1-p-\nu^2} + 2m_2h^{3p-1-2p-2\nu^2}\right )\\
&= O \left ( 2m_2h^{p-1-\nu^2} + m_2h^{p-1-2\nu^2} -2Dm_2h^{p-1-\nu^2} + 4m_2h^{p-1-\nu^2} +2m_2h^{p-1-2\nu^2} \right )\\ 
&= O \left ( (2-2D+4)m_2h^{p-1-\nu^2} + 3m_2h^{p-1-2\nu^2}h\right)
\end{align*}

\end{document}